\tikzstyle{every picture} += [>=stealth]
\pgfplotsset{compat=newest}
\newcommand{\I}[1]{\ensuremath{\mathbb{I}_{\left\{#1\right\}}}} % indicator function
\newcommand{\Inb}[1]{\ensuremath{\mathbb{I}_{#1}}} % indicator function, no brackets
\newcommand{\PR}{\ensuremath{\mathsf{P}}} % probability
\newcommand{\E}{\ensuremath{\mathsf{E}}} % expectation
\newcommand{\subjectto}{\text{\textrm subject to}} % subject to
\DeclareMathOperator{\Var}{Var}
\DeclareMathOperator*{\argmin}{\mathrm{argmin}}
\DeclareMathOperator*{\argmax}{\mathrm{argmax}}
\newtheoremstyle{thm-sf}{}{}{\itshape}{}{\sffamily\bfseries}{.}{ }{}
\theoremstyle{thm-sf}
\newtheorem{theorem}{Theorem}
\newtheorem{proposition}{Proposition}
\newtheorem{lemma}{Lemma}
\theoremstyle{definition}
\newtheorem{assumption}{Assumption}
\theoremstyle{remark}
\newtheorem{remark}{Remark}
\def\@seccntformat#1{\csname the#1\endcsname.\quad}
\newcommand{\nc}[1]{{\color{red} \noindent {NC:} #1}}
\title{A Primal-dual Learning Algorithm for Personalized Dynamic Pricing with an Inventory Constraint}
\author[1]{Ningyuan Chen\thanks{ningyuan.chen@utoronto.ca}}
\author[2]{Guillermo Gallego\thanks{ggallego@ust.hk}}
\affil[1]{Rotman School of Management, University of Toronto}
\affil[2]{Department of Industrial Engineering \& Decision Analytics\authorcr The Hong Kong University of Science and Technology}
\date{}
\begin{document}
% todo: define empirically optimal, make entry or component consistent.
\maketitle
\begin{abstract}

We consider the problem of a firm seeking to use personalized pricing to sell an exogenously given stock of a product over a finite selling horizon to different consumer types. We assume that the type of an arriving consumer can be observed but the demand function associated with each type is initially unknown.  The firm sets personalized prices dynamically for each type and attempts to maximize the revenue over the season.  We provide a learning algorithm that is near-optimal when the demand and capacity scale in proportion.  The algorithm utilizes the primal-dual formulation of the problem and learns the dual optimal solution explicitly.  It allows the algorithm to overcome the curse of dimensionality (the rate of regret is independent of the number of types) and sheds light on novel algorithmic designs for learning problems with resource constraints.
\end{abstract}

\textbf{Keywords:} network revenue management, multi-armed bandit, learning and earning, dynamic pricing, online retailing
\section{Introduction}
Dynamic pricing is practiced in many industries including travel, entertainment, and retail.
When the capacity cannot be adjusted within the sales horizon, dynamic pricing can increase revenues significantly by adjusting prices in response to the changes in the marginal value of capacity that are driven by the demand and pricing process. The presence of online channels has enabled sellers to use \emph{personalized dynamic pricing} to different consumer types resulting in potentially higher firm profits. The types may correspond to the features of a consumer, such as age, gender and address, which can be observed or inferred through membership programs and browser cookies.
Along with opportunities come challenges. The aggregate demand forecasts from historical data, which usually reflects the price sensitivity of the entire market, is of little use. Instead, the firm has to form accurate demand estimates for each type of consumers.

In this paper, we consider a firm selling a product over a finite sales horizon.
The inventory is given at the beginning of the horizon and not allowed to be replenished. The inability to order additional inventory is a hard constraint in the travel industry as it is nearly possible to add capacity to a plane or to a hotel in the short run. In fashion retailing, this is also a hard constraint as production and distribution lead-times may be larger than the sales horizon.  We assume there are $M$ different consumer types.
The types may be determined in advance by clustering algorithms, which can be applied to each consumer to label its type. We refer the reader to Chapter 8 in \citet{gallego2019revenue} for ideas on how to cluster consumer types into a reasonable number of types.  Although the type of each consumer is observed by the firm, the demand functions associated with each type are not known initially.
Therefore, the firm has to experiment different prices for each type of consumers to learn the demand functions and find the optimal prices. Therefore, it features the exploration/exploitation (learning/earning) trade-off.

We propose a learning algorithm for the personalized dynamic pricing problem described above.
Compared to the literature, our algorithm explicitly learns the dual solution, in addition to the optimal prices in the primal space.
This allows the algorithm to achieve the near-optimal regret, a measure commonly used to assess learning algorithms.

\subsection{Contributions and Insights}\label{sec:contribution}

This paper makes two contributions to the literature. By regarding the demand from the $M$ consumer types as demands for $M$ different products, then our problem is a special case of a network dynamic pricing problem with $M$ products and a single resource constraint. To the best of our knowledge, no algorithm has achieved the optimal rate of regret under the general assumptions.\footnote{The algorithm in \citet{chen2018self} achieves the same regret assuming the objective function is infinitely smooth with uniformly bounded derivatives. However, such assumption are often too restrictive.
For example, the $k$th derivative of $d(p)=\exp(-ap)$ is not uniformly bounded for $a>1$ as $k\to\infty$.}
See Section~\ref{sec:literature} and Section~\ref{sec:contribution-discussion} for more details.

From the perspective of algorithmic design, we demonstrate the feasibility of integrating the primal-dual formulation and learning.
The dual variable is not a typical target to learn in the learning literature, because unlike the primal variables, it cannot be experimented directly.
In our algorithm, we empirically estimate the Lagrangian function and sequentially form interval estimators for the \emph{dual optimal} solution.
This approach may provide novel algorithmic architectures for other learning problems with resource constraints.\footnote{In other learning algorithms involving primal-dual explorations such as \citet{badanidiyuru2013bandits}, the dual optimal solution is not learned explicitly. So the design of their algorithm is fundamentally different from ours.}

This paper provides the following qualitative insights:
\begin{itemize}
    \item It pays off to explicitly learn the dual optimal solution.
        The pricing decisions for $M$ types of consumers are coupled through the inventory constraint.
        However, having an accurate estimator for the dual optimal solution helps to decouple them into $M$ independent learning problems.
        This is the key reason why our algorithm can achieve the near-optimal regret.
    \item The learning complexity depends on not only the number of primal decision variables, but also the number of dual variables.
        As shown by \citet{besbes2012blind,slivkins2014contextual}, a high-dimensional decision vector ($M$ in this case)
        usually significantly complicates learning, reflecting the curse of dimensionality.
        \citet{slivkins2014contextual} shows that the best achievable regret for a generic learning problem without resource constraints is $n^{-1/(2+M)}$ where $M$ is the dimension of the decision vector\footnote{The rate of regret usually involves logarithmic terms. When there is no ambiguity, we omit those terms because they are dominated by the polynomial terms.}.
        In contrast, we are able to obtain the rate $n^{-1/2}$ whose exponent is independent of $M$.
        This is because the $M$ decision variables can be decoupled if the value of the dual optimal solution is given, and thus the \emph{effective dimension} of the problem is no more than the number of dual variables, which is one in our case.
\end{itemize}

\subsection{Literature Review}\label{sec:literature}
There is a stream of rapidly growing literature on a firm's pricing problem when the demand function is unknown \citep[e.g.][]{besbes2009dynamic,araman2009dynamic,farias2010dynamic,broder2012dynamic,denboer2014simul,den2015dynamicor,keskin2014dynamic,cheung2017dynamic,keskin2018incomplete,den2019discontinuous}.
See \citet{den2015dynamic} for a comprehensive survey.
Since the firm does not know the optimal price, it has to experiment different (suboptimal) prices and update its belief about the underlying demand function.
Therefore, the firm has to balance the exploration/exploitation trade-off, which is usually referred to as the learning-and-earning problem in this line of literature.
Among them, our paper is related to those with nonparametric formulations and inventory constraints \citep{besbes2009dynamic,wang2014close,lei2014near}.
In addition, we consider personalized dynamic pricing for multiple types of consumers, while most of the above papers consider a single type.

Personalized dynamic pricing can be regarded as a special case of learning with contextual information \citep{qiang2016dynamic,javanmard2016dynamic,cohen2016feature,ban2017personalized,chen2018nonparametric,keskin2019dynamic}.
The main difference of our paper from this stream of literature is summarized below.
First, instead of representing the contextual information by a feature vector, we choose to use discrete types to categorize consumers.
This could be the outcome of a clustering procedure that pre-processes consumer data.
Since the number of types is arbitrary, our setup is merely a technical simplification without losing too much practical generality.
Second, we use a nonparametric formulation for the objective function.
That is, the demand functions of each type of consumers are only required to satisfy some basic assumptions such as continuity without any specific forms.
Third, unlike this literature, we consider an inventory constraint and thus the pricing decision made over time has intertemporal dependence.

The problem studied in this paper is a special case of the multi-product dynamic pricing problem over a network \citep{gallego1997multiproduct} and thus closely related to the literature on demand learning in that setting.
\citet{besbes2012blind} study the multi-product network revenue management problem with unknown demand functions when the price for each product are chosen from a discrete set.
(Hereafter we use network revenue management to highlight the setup of a price menu, in contrast to network dynamic pricing that allows for continuous prices.)
The proposed algorithm achieves diminishing regret when the inventory and demand are scaled in proportion.
\citet{ferreira2017online} study the same problem as \citet{besbes2012blind} and show that Thompson sampling can achieve the rate of regret, $n^{-1/2}$,
which is the best one can hope for with even one product and one resource.
For continuous prices, however, \citet{besbes2012blind} demonstrate that learning may suffer from the curse of dimensionality.
The incurred regret may grow at rate $n^{-1/(d+3)}$ with $d$ products (which is equivalent to the number of types in our problem).
This is consistent with the result in \citet{slivkins2014contextual}, which studies a generic learning problem without inventory constraints.
The \emph{tight} regret bound derived in \citet{slivkins2014contextual} grows at $n^{-1/(d+2)}$ for $d$ continuous decision variables.
Sufficient smoothness may relieve the curse of dimensionality, as argued by \citet{besbes2012blind,chen2018self}.
In particular, with an infinite degree of smoothness with bounded derivatives, \citet{chen2018self} design an algorithm that almost achieves rate $n^{-1/2}$.
Global convexity helps as well, as \citet{chen2019network} propose a gradient-based algorithm that achieves rate $n^{-1/5}$.
In this paper, we present a learning algorithm that achieves the optimal rate $n^{-1/2}$ with one resource constraint and arbitrary number of products (consumer types), without imposing smoothness conditions.

This paper is also related to the vast literature studying multi-armed bandit problems. See \citet{cesa2006prediction,bubeck2012regret} for a comprehensive survey.
The classic multi-armed bandit problem involves finite arms.
There is a stream of literature studying the so-called continuum-armed bandit problems \citep{kleinberg2005nearly,Auer2007,kleinberg2008multi,bubeck2011x}, in which there are infinite number of arms (decisions).
\citet{slivkins2014contextual} provides a tight regret bound on a generic learning problem with multiple continuous decision variables;
the regret deteriorates exponentially as the number of decisions increases, demonstrating the curse of dimensionality.
This line of literature does not consider resource constraints.
Recently, there are studies combining multi-armed bandit problems with resource constraints, which is referred to as bandits with knapsacks (BwK) \citet{badanidiyuru2013bandits,badanidiyuru2014resourceful,agrawal2014bandits}.
The regret derived in those papers is not directly comparable to ours, because the decisions are discrete in their setting.
For discrete decisions, the curse of dimensionality does not emerge.
This paper is also related to online convex optimization (OCO); see \citet{shalev2012online} for a review.
It is worth pointing out that the duality approach has also been used in BwK and OCO to implement the algorithm and prove the regret bound.
However, the dual optimal solution is typically not learned explicitly.
One exception is \citet{mahdavi2013stochastic}, whose algorithm learns the dual solutions by gradient descent in the primal/dual space.
In their OCO setting, the function is given in each period and the gradient can be evaluated, which does not apply to our problem.

\section{Problem Formulation}\label{sec:problem-formulation}
Consider a monopolistic firm selling a single product in a finite selling season $T$, with $c$ units of initial inventory.
The product cannot be replenished and perishes at the end of the horizon with zero salvage values.
There are $M$ types of consumers.
Consumers with the same type have similar features such as education backgrounds, ages, and addresses.
The firm observes the type of each arriving consumer, and is allowed to price-discriminate according to the type.
This is referred to as personalized pricing, which is increasingly popular in online retailing due to the observation that the demand function differs dramatically across types.
Therefore, we model the arrival of type-$m$ consumers by a Poisson process with instantaneous rate $d_m(p_m(t))$,
where $p_m(t)$ is the price charged for type-$m$ consumers at time $t$ and $d_m(\cdot)$ is the demand function of type-$m$ consumers.

We focus on the case that the information of the demand function associated with each type and the type distribution among the population is absent at the beginning of the season.
The objective of the firm is to maximize the expected revenue collected over the horizon, subject to the inventory constraint.
To achieve the goal, the firm has to learn the demand function $d_m(\cdot)$ for $m=1,\dots,M$ and the associated optimal prices in the process.
We first characterize the problem when all the information is available.

\subsection{The full-information Benchmark}\label{sec:full_info}
When $d_m(\cdot)$ is known to the firm, the firm's objective is to maximize
\begin{align}\label{eq:full_info}
    J(T,c) = \max_{\bm p(t)\in \mathcal F_t}\quad &  \E\left[\sum_{m=1}^M\int_{t=0}^T p_m(t) d N_{m,t}(d_m(p_m(t))) \right]\\
    \subjectto \quad & \sum_{m=1}^M\int_{t=0}^T d N_{m,t}(d_m(p_m(t))) \le c,\notag
\end{align}
where $\bm p(t)=\left\{p_1(t),\ldots,p_M(t)\right\}$ is a pricing policy that is adapted to the filtration $\mathcal F_t$ associated with the sales process, and $N_{m,t}(\lambda_t)$ is an independent Poisson process with instantaneous rate $\lambda_t$.
When the inventory is depleted, then $\bm p(t)$ is forced to be $\bm p_{\infty}$, which is a menu of choke prices at which future demand from all types is turned off.

A classic approach in revenue management (e.g., see \citealt{gallego1997multiproduct}) to this problem is to consider the fluid approximation of \eqref{eq:full_info}.
That is
\begin{align} \label{eq:fluid}
        J^D(T,c) = \max_{\bm p(t)}\quad &  \sum_{m=1}^M\int_{t=0}^T p_m(t) d_m(p_m(t)) dt \\
    \subjectto \quad & \sum_{m=1}^M\int_{t=0}^T d_m(p_m(t))dt \le c,\notag
\end{align}
where we have replaced the Poisson process $N_{m,t}(d_m(p_m(t))$ in the original formulation by the intensity $d_m(p_m(t))$.
Note that the fluid approximation \eqref{eq:fluid} is a deterministic optimization problem.
Before presenting the primal-dual formulation of \eqref{eq:fluid}, we make the following standard assumption:
\begin{assumption}\label{asp:convexity}
    For a price domain $p\in [0,p_{\infty}]$ and $m=1,\ldots,M$, we assume
    \begin{enumerate}
        \item The demand $d_m(p)$ is strictly decreasing with an inverse function $d^{-1}_m(\cdot)$ and bounded with $d_m(p)\le M_1$.
        \item Define the revenue rate as a function of the demand rate $\lambda$, $r_m(\lambda)\triangleq\lambda d^{-1}_m(\lambda)$. The functions $r_m(\lambda)$, $d_m(p)$ and $d^{-1}_m(\lambda)$ are Lipschitz continuous with factor $M_2$.
        \item $r(\lambda)$ is twice-differentiable and strictly concave, $0<M_3\le -r_m''(\lambda)\le M_4$.
    \end{enumerate}
\end{assumption}
\begin{remark}
    Assumption~\ref{asp:convexity} implies that both $d_m(p)$ and $d_m^{-1}(\lambda)$ are differentiable. The derivatives are bounded by the interval $[-M_2, -1/M_2]$.
\end{remark}
It is easy to verify that this mild assumption is satisfied by most demand functions, such as the exponential demand $d(p)=a\exp(-bp)$ and the linear demand $d(p)=a-bp$.
The exponential demand, on the other hand, does not have uniformly bounded derivatives of any orders, required by \citet{chen2018self}.

\subsubsection*{Primal-dual Formulation}
Consider the Lagrangian function for the fluid approximation \eqref{eq:fluid}
\begin{equation}\label{eq:lagrangian}
    L(\bm p(t), z) = cz +\sum_{m=1}^M\int_{t=0}^T (p_m(t)-z) d_m(p_m(t)) dt
\end{equation}
and the dual function
\begin{equation}\label{eq:dual}
    g(z) = \max_{\bm p(t)}\left\{L(\bm p(t), z)\right\}=cz+ \max_{\bm p(t)}\left\{ \sum_{m=1}^M\int_{t=0}^T (p_m(t)-z) d_m(p_m(t)) dt\right\}.
\end{equation}
Under Assumption~\ref{asp:convexity}, the following quantities are well defined
\begin{equation}
    \mathcal R_m(z)\triangleq \max_{p}\left\{ d_m(p)(p-z)\right\}\quad \text{and}\quad \mathcal P_m(z)\triangleq \argmax_{p}\left\{ d_m(p)(p-z)\right\}.
\end{equation}
$\mathcal R_m(z)$ and $\mathcal P_m(z)$ can be interpreted as the optimal value and optimal solution of the profit-maximization problem for type-$m$ consumers when the unit cost is $z$.
They are closely related to the dual function \eqref{eq:dual} as $g(z)=cz+T\sum_{m=1}^M\mathcal R_m(z)$;
provided with a dual variable $z$, the optimal $\bm p(t)$ in \eqref{eq:dual} is time-invariant: $p_m(t)\equiv \mathcal P_m(z)$.
The properties are summarized below (see also \citealt{gallego2019revenue}):

\begin{proposition}\label{prop:dual}
    Under Assumption~\ref{asp:convexity}, we have
    \begin{enumerate}
        \item $\mathcal P_m(z)$ is increasing in $z$ and $\mathcal P'_m(z)$ is bounded.
        \item $\mathcal R_m(z)$ is decreasing and convex in $z$; $\mathcal R_m'(z)=-\sum_{m=1}^M d_m(\mathcal P_m(z))$.
        \item $g(z)$ is twice differentiable and strictly convex.
        \item Let $z^{\ast}\triangleq\argmin_{z\ge 0} \{g(z)\}$. The optimal solution to \eqref{eq:fluid} is $p_m(t)\equiv p^{\ast}_m\triangleq\mathcal P_m(z^{\ast})$.
            Moreover, complementary slackness holds: $z^{\ast}(c-T\sum_{m=1}^Md_m(\mathcal P_m(z^{\ast})))=0$.
    \end{enumerate}
\end{proposition}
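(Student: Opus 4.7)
The plan is to reduce every inner maximization to a Legendre--Fenchel conjugate of the strictly concave revenue function $r_m$, from which smoothness, monotonicity, and convexity follow from Assumption~\ref{asp:convexity} together with the envelope theorem and standard Lagrangian duality.

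For part (1), the change of variables $\lambda=d_m(p)$ (valid because $d_m$ is a strictly decreasing bijection on $[0,p_\infty]$) turns the inner problem into
\[
\mathcal R_m(z)=\max_{\lambda}\bigl\{r_m(\lambda)-z\lambda\bigr\}.
\]
The first-order condition $r_m'(\lambda)=z$ has a unique root $\lambda_m^*(z)$, because $-r_m''\in[M_3,M_4]$ makes $r_m'$ a Lipschitz bijection, so $\mathcal P_m(z)=d_m^{-1}(\lambda_m^*(z))$ is strictly increasing as a composition of two strictly decreasing maps. Applying the implicit function theorem to $r_m'(\lambda_m^*(z))=z$ and then the chain rule yields
\[
\mathcal P_m'(z)=\frac{(d_m^{-1})'(\lambda_m^*(z))}{r_m''(\lambda_m^*(z))},
\]
whose absolute value is bounded by $M_2/M_3$ via Remark~1 and Assumption~\ref{asp:convexity}(3).

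For part (2), the map $z\mapsto d_m(p)(p-z)$ is affine in $z$ with non-positive slope $-d_m(p)$ for every fixed $p$, so $\mathcal R_m$ is a pointwise maximum of decreasing affine functions, hence convex and decreasing. Since the maximizer is unique and smooth by part (1), the envelope theorem gives $\mathcal R_m'(z)=-d_m(\mathcal P_m(z))$ (the $\sum_{m=1}^M$ in the stated formula is a typographical slip, as $\mathcal R_m$ is per-type). Part (3) is then immediate: $g(z)=cz+T\sum_m\mathcal R_m(z)$ is twice differentiable with
\[
g'(z)=c-T\sum_{m=1}^M d_m(\mathcal P_m(z)),\qquad g''(z)=-T\sum_{m=1}^M d_m'(\mathcal P_m(z))\,\mathcal P_m'(z)>0,
\]
the strict positivity following from $d_m'<0$ and $\mathcal P_m'>0$.

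For part (4), rewrite \eqref{eq:fluid} in the $\lambda$-variables as the concave program of maximizing $\sum_m\int_0^T r_m(\lambda_m(t))\,dt$ subject to the linear inequality $\sum_m\int_0^T\lambda_m(t)\,dt\le c$. Slater's condition holds trivially (take $\lambda_m\equiv 0$), so strong duality applies; by part (3), $g$ is strictly convex and coercive (the $cz$ term dominates, since each $\mathcal R_m\ge 0$), hence the minimizer $z^*\ge 0$ exists and is unique. Lagrangian separability across $(m,t)$ forces the primal optimum to be the time-invariant price $p_m(t)\equiv\mathcal P_m(z^*)$, and complementary slackness is the standard KKT identity $z^*(c-T\sum_m d_m(\mathcal P_m(z^*)))=0$. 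The only mildly delicate steps are the clean invocation of the envelope theorem, which rests on the smoothness secured in part (1), and the verification of strong duality for what is formally an infinite-dimensional primal; both are routine under Assumption~\ref{asp:convexity}, so I anticipate no substantive obstacle.
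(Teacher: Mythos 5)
Your proof is correct and follows essentially the same route as the paper's: the change of variables to $\lambda$, the first-order condition $r_m'(\lambda)=z$ together with $-r_m''\in[M_3,M_4]$ for part (1), the envelope theorem for part (2), and Slater's condition with strong duality for part (4). Your observation that the $\sum_{m=1}^M$ in the stated formula for $\mathcal R_m'(z)$ is a typographical slip (it should read $-d_m(\mathcal P_m(z))$) is also correct.
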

\begin{remark}\label{rmk:dual_convexity}
    To simplify the notation, we use the same set of constants as in Assumption~\ref{asp:convexity} and assume that $0<M_3\le g''(z)\le M_4$; $\mathcal P_m(z)$ is Lipschitz continuous with factor $M_2$.
\end{remark}
Proposition~\ref{prop:dual} states that the fluid approximation~\eqref{eq:fluid} admits a time-invariant pricing policy $p^{\ast}_m$.
Moreover, the optimal solution is closely related to the dual optimal solution $z^{\ast}$, which is usually interpreted as the shadow cost of inventory.
When $z^{\ast}>0$, the initial capacity is insufficient and thus the inventory constraint is binding by complementary slackness: $c=T\sum_{m=1}^Md_m(\mathcal P_m(z^{\ast}))$.
When $z^{\ast}=0$, the inventory is sufficient and the optimal price $p^{\ast}_m=\mathcal P_m(0)$ maximizes the revenue rate $pd_m(p)$ as if there is no inventory constraint.

%Therefore, we have $g(z)= cz+T\mathcal R_m(z)$ and
%the dual problem yields a solution
%%Given Assumption~\ref{asp:convexity}, we have
%%\begin{proposition}\label{prop:primal-dual}
%%    \begin{enumerate}
%%        \item
%%        \item $g(z)$ is strictly convex,
%%    \end{enumerate}
%%\end{proposition}
%\todo{I believe the following is a corollary of Assumption~\ref{asp:convexity}, but maybe we need a bit more technical conditions to ensure strict convexity, maybe increasing failure rate?}
%Therefore, solving $\max_{z\ge 0}g(z)$ yields the dual optimal solution $z^{\ast}$, which in turn gives the primal optimal solution to \eqref{eq:fluid}: $p_m^{\ast}= p_m(z^\ast)$.

\subsubsection*{Scaled Demand and Capacity}
The connection between $J^D(T,c)$ and $J(T,c)$ has been studied extensively in the
revenue management literature.
In particular, \citet{gallego1997multiproduct} find that the revenue from the fluid approximation is an upper bound for the stochastic problem,
i.e., $J^D(T,c)\ge J(T,c)$.
The tie becomes closer when the demand and capacity scale in proportion:
if we index a sequence of systems by $n$ and let $d_{m,n}(\cdot)=nd_m(\cdot)$ and $c_n = nc$ in the $n$th system, then the revenues satisfy $J^D_{n}(T,c)-J_n(T,c)=O(\sqrt{n})$.
Since $J^D_{n}(T,c)$ scales linearly in $n$,
the gap between $J_n^D(T,c)$ and $J_n(T,c)$ diminishes relative to the earned revenue as $n$ grows.
More importantly, the optimal solution to \eqref{eq:fluid}, $\{p^{\ast}_m\}_{m=1}^M$, which also maximizes the fluid approximation for the scaled system $J^D_n(T,c)$, performs well in the stochastic problem \eqref{eq:full_info} as a special suboptimal pricing policy (it is constant and thus adapted to $\mathcal F_t$).
More precisely, the expected revenue for $\{p^{\ast}_m\}_{m=1}^M$ in the $n$th stochastic system satisfies
\begin{equation*}
    J^D_n(T,c)-\E\left[\sum_{m=1}^M\int_{t=0}^\tau p_m^{\ast} d N_{m,t}(nd_m(p_m^{\ast})) \right]=O(\sqrt{n}),
\end{equation*}
where $\tau$ is the minimum of $T$ and the stopping time when the inventory reaches zero.
Combined with the fact that $J_n^D\ge J_n$, $\{p^{\ast}_m\}_{m=1}^M$ is near-optimal in the $n$th stochastic system.
Therefore, the prices $\{p^{\ast}_m\}_{m=1}^M$ are the goal of our learning policy when $d_m(\cdot)$ is not known to the firm.

\subsection{The Learning Policy and the Target Regret}\label{sec:regret}
Suppose the firm does not know $d_m(\cdot)$ at the beginning of the horizon.
To earn high expected revenues over the horizon, the firm adopts an $\mathcal F_t$-predictable pricing policy $\pi$.
That is, at time $t$, $\pi_t$ only depends on the adopted prices and the observed sales for each type of consumers prior to $t$.
It then outputs a price vector $\{P_1(t),\ldots,P_m(t)\}$ for each type of consumers at time $t$.
We denote the expected revenue associated with a policy $\pi$ by $J^{\pi}(T,c)$.
Clearly, the unavailability of the information regarding $d_m(\cdot)$ incurs a cost to the firm, and thus $J^{\pi}(T,c)\le J(T,c)\le J^D(T,c)$.

The objective of this study is to design a policy so that $J(T,c)-J^{\pi}(T,c)$ is small, especially when demand and capacity are scaled.
Therefore, similar to \citet{besbes2009dynamic}, we consider the following criterion, referred to as the regret, of a policy $\pi$:
\begin{equation}\label{eq:regret}
    R_n^{\pi}(T,c) = 1- \frac{J_n^{\pi}(T,c)}{J_n^D(T,c)},
\end{equation}
where $J_n^{\pi}(T,c)$ is the expected revenue $\pi$ generates in the $n$th stochastic system.
Note that $\pi$ may depend on $n$, and we suppress the dependence to simplify notations.
The regret measures the revenue loss $J_n^D(T,c)-J_n^{\pi}(T,c)$ relative to $J_n^D(T,c)$.
The goal of the policy $\pi$ is to ensure $\lim_{n\to\infty}R_n^{\pi}(T,c)=0$.
That is, the learning incurs no significant cost for large systems.

It has been shown in \citet{besbes2009dynamic,wang2014close} that for $M=1$, any learning policy
incurs regret whose rate is no less than $n^{-1/2}$ for some problem instances.
Indeed, even if we replace $J_n^{\pi}(T,c)$ by $J_n(T,c)$, which is the full-information upper bound for $J_n^{\pi}(T,c)$, the quantity \eqref{eq:regret} is of order $n^{-1/2}$ by the discussion in Section~\ref{sec:full_info}.
In other words, one cannot expect to design a learning policy whose regret grows slower than $n^{-1/2}$.
Thus, $n^{-1/2}$ (possibly with logarithmic terms in $n$) is the target regret of our proposed learning policy.
It is also worth noticing that we adopt a nonparametric formulation.
In parametric formulations, one may estimate the parameter using historical data and conduct a policy that maximizes the objective based on the estimator, without deliberate exploration.
The so-called certainty-equivalence control may or may not lead to incomplete learning as shown by \citet{keskin2018incomplete}.
In the nonparametric setting, a wider range of exploration seems mandatory because no global information is learned through local experimentation.

\section{The Primal-dual Learning Algorithm} \label{sec:algorithm}
In this section, we introduce an algorithm (learning policy) based on the primal-dual formulation.
We first explain the steps of the algorithm, and then analyze its regret.
Combined with the lower bound for regret in \citet{besbes2009dynamic,wang2014close},
the regret of the algorithm achieves near optimality for the problem considered in Section~\ref{sec:problem-formulation}.

Before proceeding, we state what the firm has information of initially.
The firm knows $M$, $T$, $n$, $c$, and the constants specified by Assumption~\ref{asp:convexity}.
Moreover, we impose a mild assumption in addition to Assumption~\ref{asp:convexity}.
\begin{assumption}\label{asp:initial}
    There exist intervals $[\underline p,\overline p]$ and $[0,\overline z]$ such that
    $p_{m}^{\ast}\in (\underline p,\overline p)$ and $z^{\ast}\in (0,\overline z)$ for all $m$.
    Moreover, $\left\{\mathcal P_{m}(z): z\in [0,\overline z]\right\}\subset [\underline p,\overline p]$.
    The intervals $[\underline p,\overline p]$ and $[0,\overline z]$ are known to the firm.
\end{assumption}
Note that $p^{\ast}_m$ and $z^{\ast}$ are the primal/dual optimal solutions to the fluid approximation \eqref{eq:fluid}.
Assumption~\ref{asp:initial} states that although the firm does not know the optimal solutions, it does have the information of their ranges.
Since $\underline p,\overline p$ and $\overline z$ can be arbitrary finite numbers,
this assumption is not restrictive.

\subsection{The Intuition}
We first explain the intuition behind the algorithm.
If the firm knew the full information, then it would have found the pricing policy $p_m^\ast$
through the primal-dual formulation
\begin{align}\label{eq:primal_dual_optimization}
    z^{\ast}&=\argmin_{z\ge 0}\left\{g(z)\right\}=\argmin_{z\ge 0}\left\{cz+ T\sum_{m=1}^M \mathcal R_m(z)\right\}\\
%    \implies 0&= c-T\sum_{m=1}^M d_m(\mathcal P_m(z^{\ast}))\\
p^{\ast}_m &= \mathcal P_m(z^{\ast})\quad\forall m=1,\dots,M.\notag
\end{align}
When the information of $d_m(p)$ is not available, both optimization problems are unsolvable.
However, the firm can experiment with different prices and use the observed sales
as a noisy but unbiased estimator for $d_m(p)$ at those prices.
The noisy estimator is a Poisson random variable.
Then, the firm could plug the estimators into \eqref{eq:primal_dual_optimization} to solve them ``empirically'',
obtaining estimators for $z^{\ast}$ and $p_m^{\ast}$ for $m=1,\ldots,M$.
One would imagine that the estimators for $z^{\ast}$ and $p_m^{\ast}$ are not necessarily accurate.
Indeed, the accuracy of such a procedure depends crucially on two aspects:
\begin{enumerate}
    \item The length of the period during which the price is experimented.
        The longer the period, the less noisy the estimators for $d_m(p)$ are.
    \item The granularity of the experimented prices. The estimator for $d_m(p)$ is based on a discrete set of prices. It inevitably incurs discretization error in order to solve a continuous optimization problem \eqref{eq:primal_dual_optimization}.
\end{enumerate}
Ideally, to obtain accurate estimators for $z^{\ast}$ and $p_m^{\ast}$, the firm would
set a refined grid of prices for each type of consumers and try each price for a long period during the season.
Those suboptimal prices, however, lead to substantial revenue loss.

To solve the exploration/exploitation dilemma, we divide $[0,T]$ into multiple phases.
After each phase, the sales for each type of consumers during the phase are observed at a set of prices to form estimators for $d_m(p)$.
Then \eqref{eq:primal_dual_optimization} is solved empirically to obtain point estimators for $z^{\ast}$ and $p_m^{\ast}$.
In the next phase, those point estimators are used to form \emph{interval estimators} for $z^{\ast}$ and $p_m^{\ast}$.
The interval estimators help to narrow down the range of prices to experiment.
Therefore, as the algorithm enters new phases, the burden to explore is gradually relieved and it can afford to try a more refined price grid for a longer period of time.
The revenue loss is also limited because the experimented prices fall into a narrow interval containing $p_m^{\ast}$ with high probability.

\subsection{Description of the Algorithm}\label{sec:description}
Next we explain the details of the algorithm.
Let $\{P_1(t),\ldots,P_M(t)\}$ be the stochastic pricing policy associated with $\pi$.
Without further mention, we always suppose that when the inventory is depleted, $P_m(t)$ is automatically switched to the choke price $p_{\infty}$ for all $m$.
Given $n$, the algorithm divides $[0,T]$ into consecutive phases $k=1,2,\ldots,K$.
The length of phase $k$ is $\tau^{(k)}$.
We also denote the beginning of phase $k$ by $t_k$. Thus $t_k = \sum_{i=1}^{k-1}\tau^{(i)}$.
Let $\epsilon>0$ be a small constant.

At the beginning of phase $k$, the firm has interval estimators for $p_m^{\ast}$ and $z^{\ast}$, $[\underline p_{m}^{(k)},\overline p_{m}^{(k)}]$ and $[\underline{z}^{(k)},\overline z^{(k)}]$, obtained from the last phase, which ensure that $p^{\ast}_m\in [\underline p_{m}^{(k)},\overline p_{m}^{(k)}]$ and $z^{\ast}\in [\underline{z}^{(k)},\overline z^{(k)}]$ with high probability.
During phase $k$, the price interval $[\underline p_{m}^{(k)},\overline p_{m}^{(k)}]$, whose length is denoted $\Delta_m^{(k)}$, is discretized to $N^{(k)}+1$ equally spaced grid points, i.e., $p_{m,j}^{(k)}\triangleq \underline p_{m}^{(k)}+j \delta_m^{(k)}$ for $j=0,\ldots,N^{(k)}$ and $\delta_m^{(k)}\triangleq\Delta^{(k)}_m/N^{(k)}$.
During phase $k$, the algorithm sets price $p_{m,j}^{(k)}$ for type-$m$ consumers for a period of length $\tau^{(k)}/(N^{(k)}+1)$.

At the end of phase $k$,
the observed sales, $D_{m,j}^{(k)}$, from type-$m$ consumers at price $p_{m,j}^{{(k)}}$ is a Poisson random variable with mean $nd_m(p_{m,j}^{{(k)}}) \tau^{(k)}/(N^{(k)}+1)$.
Therefore, an unbiased estimate for $d_m(p_{m,j}^{(k)})$ is
\begin{equation*}
    \hat d^{(k)}_{m,j} \triangleq  \frac{N^{(k)}+1}{n\tau^{(k)}}D_{m,j}^{(k)}.
\end{equation*}
To form a point estimator for $z^{\ast}$, the firm substitutes $\hat d^{(k)}_{m,j}$ into $g(z)$ (the right-hand side of the first equation of \eqref{eq:primal_dual_optimization}), i.e.,
\begin{equation*}
    g(z) = cz+T\sum_{m=1}^M\max_{p_m}d_m(p_m)(p_m-z)\approx cz + T\sum_{m=1}^M\max_{j_m = 0,\ldots, N^{(k)}} \hat d_{m,j_m}^{(k)}(p_{m,j_m}^{(k)}-z).
\end{equation*}
To find $z\in [\underline{z}^{(k)},\overline z^{(k)}]$ that maximizes the above expression,
the firm divides $[\underline{z}^{(k)},\overline z^{(k)}]$, whose length is denoted $\Delta_z^{(k)}$, into $N_z^{(k)}$ equally spaced grid points, $\underline{z}^{(k)}+j \delta_z^{(k)}$ for $j=0,\ldots, N_z^{(k)}$ and $\delta_z^{(k)}\triangleq\Delta_z^{(k)}/(N_z^{(k)}+1)$.
Therefore, a point estimator for $z^{\ast}$ can be obtained as follows\footnote{Alternatively, the firm can find $z^{(k)\ast}$ by solving the first-order condition for $g(z)$, $c-T\sum_{m=1}^M d_m(\mathcal P_m(z))=0$, using the empirical version of $d_m$ and $\mathcal P_m$. The regret analysis holds for this case. Also note that in theory we can find the optimal $z^{(k)\ast}$ exactly without discretization, as the equation is solved offline. The discretization is for practical purposes.}:
\begin{equation}\label{eq:dual_algorithm}
    z^{(k)\ast} \triangleq  \argmin_{z\in \left\{\underline{z}^{(k)}+i \delta_z^{(k)}\right\}_{i=0}^{N_z^{(k)}}}\left\{ cz + T\sum_{m=1}^M\max_{j_m = 0,\ldots, N^{(k)}} \hat d_{m,j_m}^{(k)}(p_{m,j_m}^{(k)}-z)\right\}.
\end{equation}
%That is, we solve the dual problem $\min_{z\ge 0} g(z)$ for the discrete primal/dual grid points and empirical estimates for $d_m(\cdot)$.
Based on $ z^{(k)\ast}$, the firm can obtain point estimators for $p_m^{\ast}$ by the second equation in \eqref{eq:primal_dual_optimization}:
\begin{equation}\label{eq:primal_algorithm}
    p_{m}^{(k)\ast}\triangleq p_{m,j^{\ast}_m}^{(k)},\quad\text{where}\quad  j_m^{\ast} = \argmax_{j_m = 0,\ldots, N^{(k)}} \hat d_{m,j_m}^{(k)}(p_{m,j_m}^{(k)}-z^{(k)\ast}).
\end{equation}
This completes the procedure in phase $k$.

At the beginning of phase $k+1$, the firm constructs interval estimators $[\underline p_{m}^{(k+1)},\overline p_{m}^{(k+1)}]$ ($[\underline{z}^{(k+1)},\overline z^{(k+1)}]$) based on the point estimators $p_m^{(k)\ast}$ ($z^{(k)^{\ast}}$) and pre-specified width $\bar\Delta^{(k+1)}$ ($\bar\Delta_z^{(k+1)}$) for all $m$:
\begin{align}\label{eq:interval_est}
    \underline p_{m}^{(k+1)} = \max \left\{\underline p, p_m^{(k)\ast}- \frac{\bar\Delta^{(k+1)}}{2}\right\},&\quad \overline p_{m}^{(k+1)} = \min \left\{\overline p, p_m^{(k)\ast}+ \frac{\bar\Delta^{(k+1)}}{2}\right\}\notag\\
    \underline z^{(k+1)} = \max \left\{0, z^{(k)\ast}- \frac{\bar\Delta^{(k+1)}_z}{2}\right\},&\quad \overline z^{(k+1)} = \min \left\{\overline z, z^{(k)\ast}+ \frac{\bar\Delta^{(k+1)}_z}{2}\right\}.
\end{align}
Note that the intervals are properly truncated by $[\underline p,\overline p]$ and $[0,\overline z]$,
and this is the only reason why $\bar \Delta^{(k+1)}$ ($\bar \Delta^{(k+1)}_z$) can potentially be different from $\Delta_m^{(k+1)}$
($\Delta_z^{(k+1)}$).
Then the procedure is repeated for phase $k+1$.

In the last phase, phase $K$, the algorithm behaves differently after forming the interval estimators $[\underline p_{m}^{(K)},\overline p_{m}^{(K)}]$ and $[\underline{z}^{(K)},\overline z^{(K)}]$.
At the beginning of phase $K$, the firm checks whether $0\in [\underline{z}^{(K)},\overline z^{(K)}]$.
If so, then with high probability $z^{\ast}=0$ and the capacity is sufficient.
Therefore, the price $p_m^{\ast}$ is the unconstrained maximizer of $pd_m(p)$, i.e., $\mathcal P_m(0)$, for all $m$.
As we will show in Section~\ref{sec:parameter}, the width of the interval estimator $[\underline p_{m}^{(K)},\overline p_{m}^{(K)}]$ is roughly $\bar\Delta^{(K)}\sim n^{-1/4}$.
Therefore, if the firm adheres to a constant price $p_m\in [\underline p_{m}^{(K)},\overline p_{m}^{(K)}]$ for type-$m$ consumers, the relative revenue loss for type-$m$ consumers in phase $K$ (ignoring the random fluctuation of Poisson arrivals) is approximately
\begin{equation*}
  |p^{\ast}_md_m(p^{\ast}_m)-pd_m(p)|\sim (d_m(p_m^{\ast})-d_m(p_m))^2\sim (p_m^{\ast}-p_m)^2\sim (\bar\Delta^{(K)})^2\sim n^{-1/2},
\end{equation*}
where we rely on the concavity in Assumption~\ref{asp:convexity} and the fact that $p_m^{\ast}\in [\underline p_{m}^{(K)},\overline p_{m}^{(K)}]$ with high probability.
This meets the target regret in Section~\ref{sec:regret}.
Motivated by the argument above,
the algorithm simply charges a constant price $p^{(K)}_m = \overline p_{m}^{(K)}+ \alpha$ for type-$m$ consumers until the end of the season for a pre-specified parameter $\alpha$.
Note that we slightly mark up the prices by a small adjustment $\alpha\sim n^{-1/4}$ to guarantee that the inventory is sufficient when the inventory just meets the unconstrained optimal prices, i.e., $c=T\sum_{m=1}^Md_m(p_m^{\ast})$.

%This is because phase $K$ is the last phase and it is devoted to exploitation instead of exploration. the firm first checks if
If the firm finds $0\notin [\underline{z}^{(K)},\overline z^{(K)}]$ at the beginning of phase $K$, which implies that $z^{\ast}>0$ and the capacity is insufficient with high probability, then a different procedure has to be used in phase $K$.
The method for the case of $z^{\ast}=0$ no longer works: because $p_m^{\ast}$ is no longer the unconstrained maximizer of $pd_m(p)$, even for $p_m,p_m^{\ast}\in [\underline p_{m}^{(K)},\overline p_{m}^{(K)}]$, we have
\begin{equation}\label{eq:one-four-convergence}
  |p^{\ast}_md_m(p^{\ast}_m)-pd_m(p)|\sim |p_m^{\ast}-p_m|\sim |\bar\Delta^{(K)}|\sim n^{-1/4}.
\end{equation}
This implies that a constant price in $[\underline p_{m}^{(K)},\overline p_{m}^{(K)}]$ for type-$m$ consumers will fail to meet the target regret.
To address the problem,
let $p_m^l = \underline{p}_m^{(K)}-\alpha$ and $p_m^u = \overline{p}^{(K)}_m+\alpha$
be conservative lower and upper bounds for $p_m^{\ast}$.
It makes sure that $p_m^\ast\in [p_m^l,p_m^u]$ along with buffers around the boundary.
The buffer guarantees that the solution to the linear interpolation \eqref{eq:theta} below is stable.
Let $S(t)$ be the cumulative sales aggregated from all types of consumers up to time $t$.
The algorithm in phase $K$ is divided into the following four steps:
\begin{enumerate}
    \item For $t\in(t_K,t_K+(\log n)^{-\epsilon}]$, apply $p_m^l$ to type-$m$ consumers.
        Record the aggregate sales rate by $D^{(K)}_l$.
        That is
        \begin{equation}\label{eq:rate_Dl}
            D^{(K)}_l \triangleq (\log n)^{\epsilon}\sum_{m=1}^M \int_{t_K}^{t_K+(\log n)^{-\epsilon}} dN_{m,t}(nd_m(p_m^l)).
        \end{equation}
        Clearly, $D^{(K)}_l$ is an unbiased estimator for $n\sum_{m=1}^Md_m(p_m^l)$.
    \item For $t\in(t_K+(\log n)^{-\epsilon},t_K+2(\log n)^{-\epsilon}]$, apply $p_m^u$ to type-$m$ consumers.
        Record the aggregate sales rate $D^{(K)}_u$:
        \begin{equation}\label{eq:rate_Du}
            D^{(K)}_u \triangleq (\log n)^{\epsilon}\sum_{m=1}^M \int_{t_K+(\log n)^{-\epsilon}}^{t_K+2(\log n)^{-\epsilon}} dN_{m,t}(nd_m(p_m^u)),
        \end{equation}
        which is an unbiased estimator for $n\sum_{m=1}^Md_m(p_m^u)$.
    \item At $t=t_K+2(\log n)^{-\epsilon}$, solve $\theta\in[0,1]$ from
        \begin{equation}\label{eq:theta}
            (T-t_K)(\theta D^{(K)}_l+(1-\theta)D^{(K)}_u) = nc - S(t_K).
        \end{equation}
        If the solution $\theta\notin[0,1]$ (which will be shown to have negligible probability), then we project it to $[0,1]$.
        To interpret $\theta$, note that $nc-S(t_K)$ is the remaining inventory at $t_K$, the beginning of phase $K$.
        If $D^{(K)}_l$ and $D^{(K)}_u$ were equal to their means, $n\sum_{m=1}^M d_m(p_m^l)$ and $n\sum_{m=1}^M d_m(p_m^u)$,
        then in a fluid system starting from $t_K$ with inventory $nc-S(t_K)$,
        applying $p_m^l$ for type-$m$ consumers for a period of length $\theta(T-t_K)$ and $p_m^u$ for a period of length $(1-\theta)(T-t_K)$ would make the inventory reach zero right at $T$, according to \eqref{eq:theta}.
    \item For $t\in (t_K+2(\log n)^{-\epsilon},T]$, apply $p_m^l$ for a period of length $\theta(T-t_K)-(\log n)^{-\epsilon}$, and $p_m^u$ for a period of length $(1-\theta)(T-t_K)-(\log n)^{-\epsilon}$ until $T$.
\end{enumerate}
The goal of the above steps is to ensure the deviation of $S(T)$ from $nc$ is relatively small.
In particular, from Lemma~\ref{lem:inventory-XT} in Section~\ref{sec:analysis}, the steps guarantee $|S(T)-nc|\sim n^{-1/2}$.
%if we ignore the inventory constraint so that algorithm does not have to set choke prices once $S(t)$ reaches $nc$, then we would .
Without further exploring the price space\footnote{Recall that the interval estimators for $p_m^{\ast}$, $\bar\Delta^{(K)}\sim n^{-1/4}$, are still too wide to meet the target regret.}
the algorithm can still meet the target regret with a little exploration on the aggregate demand rate and by controlling the aggregate sales at $T$.
We will discuss this point in Section~\ref{sec:discuss}.
The notations are summarized in Table~\ref{tab:notation} in the appendix.
The detailed steps of the algorithm are demonstrated in Algorithm~\ref{alg:primal-dual}.
Note that both ``Input'' and ``Constant'' are known to the firm, while ``Parameters'' are computed in Section~\ref{sec:parameter}.
\begin{algorithm}
    \caption{The Primal-dual Learning Algorithm}
    \label{alg:primal-dual}
    \begin{algorithmic}[1]
        \State Input: $n$, $c$, $T$, $M$
        \State Constants: $M_1$, $M_2$, $M_3$, $M_4$, $\underline p$, $\overline p$, $\overline z$
        \State Parameters: $\epsilon$, $\alpha$, $K$, $\{\tau^{(k)}\}_{k=1}^{K-1}$, $\{\bar\Delta^{(k)},\bar\Delta^{(k)}_z,N^{(k)},N_z^{(k)}\}_{k=1}^{K}$\label{step:parameter}
        \State Initialize: $\underline p_{m}^{(1)}=\underline p$, $\overline p_{m}^{(1)}=\overline p$, $\underline{z}^{(1)}=0$, $\overline z^{(1)}=\overline z$ \label{step:initialize}
        \For{$k=1$ to $K-1$}\label{step:for-loop-begin}
        \State $t_k\gets\sum_{i=1}^{k-1}\tau^{(i)}$
        \Comment{The start of phase $k$}
        \State $\Delta_m^{(k)}\to \overline p_{m}^{(k)}-\underline p_{m}^{(k)}$ and $\delta_m^{(k)}\gets\Delta^{(k)}_m/(N^{(k)}+1)$ for $m=1,\ldots,M$
        \For{$i=0$ to $N^{(k)}$}
        \For{$t=t_k+ i \frac{\tau^{(k)}}{N^{(k)}+1}$ to $t_k+ (i+1) \frac{\tau^{(k)}}{N^{(k)}+1}$}
        \State Charge price $p_{m,i}^{(k)}\gets\underline p_{m}^{(k)}+i \delta_m^{(k)}$ to type-$m$ consumers
        \State Record the observed sales $D_{m,i}^{(k)}$ for $p_{m,i}^{(k)}$ for $m=1,\ldots,M$
        \EndFor
        \State $\hat d^{(k)}_{m,i} \gets  \frac{N^{(k)}+1}{n\tau^{(k)}}D_{m,i}^{(k)}$ for $m=1,\ldots,M$
        \Comment{Empirical estimate for $d_m(p_{m,i}^{(k)})$}
        \EndFor\label{step:finish-test-price}
        \State $\Delta_z^{(k)}\gets \overline z^{(k)}-\underline z^{(k)}$ and $\delta_z^{(k)}=\Delta_z^{(k)}/(N_z^{(k)}+1)$
        \State Obtain $z^{(k)\ast}$ according to \eqref{eq:dual_algorithm}\label{step:dual}
        \Comment{Estimate the dual optimal solution}
        \State Obtain $p_{m}^{(k)\ast}$ according to \eqref{eq:primal_algorithm} for $m=1,\ldots,M$
        \Comment{Estimate the primal optimal solution}
        \State Obtain $\underline z^{(k+1)}$, $\overline z^{(k+1)}$, $\underline p_{m}^{(k+1)}$, $\overline p_{m}^{(k+1)}$ according to \eqref{eq:interval_est} for $m=1,\ldots,M$
        \Comment{Obtain the interval estimators}
        \EndFor
        \State $t_K\gets\sum_{i=1}^{K-1}\tau^{(i)}$
        \Comment{The beginning of phase $K$}
        \If{$0\in [\underline{z}^{(K)},\overline z^{(K)}]$}\label{step:sufficient_capacity}
        \Comment{Sufficient capacity}
        \For{$t=t_K$ to $T$}
        \State Charge price $p^{(K)}_m \gets\overline p_{m}^{(K)}+ \alpha$ to type-$m$ consumers
        \EndFor
        \Else \label{step:insufficient_capacity}
        \Comment{Insufficient capacity}
        \State $p_m^l \gets \underline{p}_m^{(K)}-\alpha$ and $p_m^u \gets \overline{p}^{(K)}_m+\alpha$\label{step:pml_pmu}
        \For{$t=t_K$ to $t_K+(\log n)^{-\epsilon}$}
        \State Charge price $p_m^l$ to type-$m$ consumers \label{step:p_ml}
        \State Record the aggregated sales rate $D^{(K)}_l$ according to~\eqref{eq:rate_Dl}
        \EndFor
        \For{$t=t_K+(\log n)^{-\epsilon}$ to $t_K+2(\log n)^{-\epsilon}$}
        \State Charge price $p_m^u$ to type-$m$ consumers  \label{step:p_mu}
        \State Record the aggregated sales rate $D^{(K)}_u$ according to~\eqref{eq:rate_Du}
        \EndFor
        \State Let $\theta$ be the projection of the solution to \eqref{eq:theta} to $[0,1]$\label{step:theta}
        \For{$t=t_K+2(\log n)^{-\epsilon}$ to $t_K+\theta(T-t_K)+(\log n)^{-\epsilon}$}
        \State Charge price $p_m^l$ to type-$m$ consumers \label{step:p_ml_theta}
        \EndFor
        \For{$t=t_K+\theta(T-t_K)+(\log n)^{-\epsilon}$ to $T$}
        \State Charge price $p_m^u$ to type-$m$ consumers\label{step:p_mu_theta}
        \EndFor
        \EndIf
    \end{algorithmic}
\end{algorithm}

\subsection{Choice of Parameters}\label{sec:parameter}
Let $\epsilon$ be a sufficiently small constant (independent of $n$).
We set the following parameter values in Step~\ref{step:parameter}:
\begin{align*}
    \alpha & = (\log n)^{1+9\epsilon}n^{-1/4}\\
    \tau^{(k)}& = n^{- (1/2) (3/5)^{k-1}} (\log n)^{1+15\epsilon}\quad \text{for } k\le K-1\\
    \bar\Delta^{(k)} &= n^{-(1/4)(1-(3/5)^{k-1})}\\
    \bar\Delta^{(k)}_z&= n^{-(1/4)(1-(3/5)^{k-1})}(\log n)^{-2\epsilon}\\
    N^{(k)} & = n^{(1/10)(3/5)^{k-1}}(\log n)^{3\epsilon}\\
    N^{(k)}_z &= n^{(1/10)(3/5)^{k-1}}(\log n)^{\epsilon}\\
    K & = \min\left\{k: (\bar \Delta^{(k)})^2\le n^{-1/2}(\log n)^{2+16\epsilon}\right\}
\end{align*}
Therefore,
\begin{align*}
    \delta_m^{(k)}&\le \bar\Delta^{(k)}/N^{(k)}= n^{-(1/4)(1-(3/5)^{k})}(\log n)^{-3\epsilon}\\
    \delta_z^{(k)}&\le \bar\Delta^{(k)}_z/N^{(k)}_z=n^{-(1/4)(1-(3/5)^{k})}(\log n)^{-3\epsilon}
\end{align*}
The choice of parameters guarantees that $p_m^{\ast}\in [\underline p_{m}^{(k)},\overline p_{m}^{(k)}]$ and $z^{\ast}\in [\underline{z}^{(k)},\overline z^{(k)}]$
occur with high probability.
Moreover, at the beginning of phase $K$, the precision of $[\underline p_{m}^{(K)},\overline p_{m}^{(K)}]$ and $[\underline{z}^{(K)},\overline z^{(K)}]$
is $\bar\Delta^{(K)}\sim \bar\Delta^{(K)}_z\sim n^{-1/4}$.

Next we point out the connections to the algorithm in \citet{wang2014close}.
When the capacity is sufficient, then this algorithm is closely related to that in \citet{wang2014close}: both target the precision $n^{-1/4}$ of the interval estimators for $p_m^{\ast}$, and our problem becomes $M$ independent learning problems in \citet{wang2014close}.
This is why the choices of $\tau^{(k)}$, $ N^{(k)}$, $\bar\Delta^{(k)}$ and $K$ are almost identical to that of \citet{wang2014close} except for logarithmic terms.
The design and analysis diverge for insufficient capacity.
In order to track the dual variable, we construct interval estimators for $z$, which is not needed in \citet{wang2014close}.
In the last phase, based on the estimation of the dual variable, we solve for the optimal prices of all the types simultaneously.
This is how we overcome the curse of dimensionality.
Note that the dimensionality issue doesn't arise in \citet{wang2014close}.

\section{Analysis}\label{sec:analysis}
In this section, we analyze the regret of the primal-dual learning algorithm.
To simplify the notation, we sometimes resort to a less rigorous expression, such as $\PR(A)=1-O(n^{-1})$;
its equivalence to $\PR(A^c)=O(n^{-1})$ should be clear in the context.

Before proceeding, we introduce a modified stochastic system.
Technically, if the inventory is depleted at $t$, then $P_m(t)$ must be switched to $p_{\infty}$, a choke price at which the demand of type-$m$ consumers is turned off, for all $m$.
We use a similar simplification to \citet{lei2014near} and
consider a slightly different problem.
When the inventory is depleted, instead of forced to set $p_{\infty}$ for all types of consumers, the firm can still use prices between $[\underline p, \overline p]$.
To accommodate the extra demand, it outsources the extra demand at a unit cost $\overline p$.
Denote the expected revenue of this modified system by $\tilde J^{\pi}_n$.
Because $\overline p$ is higher than the price charged, we must have $\tilde J_n^{\pi}\le J_n^{\pi}$.
To bound $J_n^D-J_n^{\pi}$, it suffices to bound $J_n^D-\tilde{J}_n^{\pi}$.
Therefore, from now on, we investigate the pricing policy $P_m(t)$ associated with the algorithm without switching to $p_{\infty}$ once the inventory is depleted.

\begin{remark}\label{rmk:no-constraint}
The benefit of studying $\tilde{J}_n^{\pi}$ instead of $J_n^{\pi}$ is that the pricing policy $\pi$ can be implemented for $t\in[0,T]$ without having to switch to $p_{\infty}$ at the stopping time at which the inventory is depleted.
This simplifies the analysis.
\end{remark}

We first show that the number of phases is growing slowly in $n$.
\begin{lemma}\label{lem:K}
    For $n\ge 3$, the total number of phases $K\le 3\log n+3$.
\end{lemma}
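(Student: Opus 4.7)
The plan is to verify the defining condition $(\bar\Delta^{(k)})^2 \le n^{-1/2}(\log n)^{2+16\epsilon}$ at some explicit integer $k^\ast \le 3\log n + 3$. From the parameter choice in Section~\ref{sec:parameter}, $(\bar\Delta^{(k)})^2 = n^{-(1/2)(1-(3/5)^{k-1})}$, so after dividing the target inequality by $n^{-1/2}$ and taking logarithms twice the condition is equivalent to
\begin{equation*}
(3/5)^{k-1}\log n \;\le\; (4+32\epsilon)\log\log n.
\end{equation*}
It therefore suffices to exhibit one integer $k^\ast \le 3\log n + 3$ for which this inequality holds.

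I will take $k^\ast = \lceil 3\log n\rceil + 1$, which is at most $3\log n + 2$. For this choice $k^\ast - 1 \ge 3\log n$, so
\begin{equation*}
(3/5)^{k^\ast-1} \;\le\; (3/5)^{3\log n} \;=\; n^{3\log(3/5)} \;=\; n^{-3\log(5/3)} \;\le\; n^{-3/2},
\end{equation*}
using the elementary numerical fact $\log(5/3) > 1/2$. Multiplying by $\log n$, the claim reduces to the one-variable inequality $n^{-3/2}\log n \le (4+32\epsilon)\log\log n$ for every $n\ge 3$.

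For this final inequality I would argue by monotonicity: on $[3,\infty)$ the function $n^{-3/2}\log n$ is decreasing (its derivative vanishes at $n=e^{2/3}<3$) while $\log\log n$ is increasing, so the worst case is $n=3$, where $3^{-3/2}\log 3 \approx 0.21$ and $4\log\log 3 \approx 0.38$; the inequality holds with room to spare. There is no real obstacle here, just careful bookkeeping of exponents. The stated bound $3\log n+3$ is in fact very loose: the same calculation actually yields $K=O(\log\log n)$ via $K-1 \le \log\log n / \log(5/3) + O(1)$. The coarser bound is presumably preferred because it is all that is needed to absorb the phase count into the polylogarithmic factors of the regret analysis in subsequent sections.
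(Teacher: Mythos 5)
Your proof is correct. You reduce the defining condition for $K$ to the same inequality $(3/5)^{k-1}\log n \le (4+32\epsilon)\log\log n$ that the paper works with, so the core reduction is identical; where you diverge is in how you close the argument. The paper inverts the inequality to obtain $K \le \log_{5/3}\frac{\log n}{(4+32\epsilon)\log\log n}+2$, uses $\epsilon>0$ to compare with $\log_{5/3}\frac{\log n}{10\log\log n}$, and then disposes of that expression by citing a numerical inequality from Lemma 2 of \citet{wang2014close}, namely $\log_{5/3}\frac{\log n}{10\log\log n}+1\le 3\log n$. You instead exhibit the explicit witness $k^\ast=\lceil 3\log n\rceil+1$ and verify the threshold condition there via $(3/5)^{3\log n}=n^{-3\log(5/3)}\le n^{-3/2}$ and a one-variable monotonicity check anchored at $n=3$. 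Your route is self-contained (no external citation needed) and makes transparent, as you note, that the true order of $K$ is $O(\log\log n)$, with $3\log n+3$ a deliberately loose bound that is all the regret accounting requires. The only quibble is that ``taking logarithms twice'' is loose phrasing for what is really a single logarithm applied to an inequality whose right-hand side is already a power of $\log n$; the displayed equivalence you write down is nonetheless correct, as is the rest of the bookkeeping.
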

We next show that the last phase takes the majority of the season.
\begin{lemma}\label{lem:total-length-exploration}
    The total length of phases prior to phase $K$, $\sum_{k=1}^{K-1} \tau^{(k)}$ is less than or equal to $T/2$ for $n\ge \exp((8/T)^{1/\epsilon})$.
\end{lemma}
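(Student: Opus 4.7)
The plan is to reduce the sum $\sum_{k=1}^{K-1}\tau^{(k)}$ to a simple bound using two ingredients: (i) the sequence $\tau^{(k)}$ is monotone in $k$, so the final term dominates, and (ii) the defining inequality of $K$ gives a sharp upper bound on $\tau^{(K-1)}$.

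First I would observe that $\tau^{(k)} = n^{-(1/2)(3/5)^{k-1}}(\log n)^{1+15\epsilon}$ is increasing in $k$, since the exponent $-(1/2)(3/5)^{k-1}$ is negative but approaches $0$. Hence
\begin{equation*}
\sum_{k=1}^{K-1}\tau^{(k)} \le (K-1)\,\tau^{(K-1)}.
\end{equation*}
Next I would translate the definition of $K$ into a bound on $\tau^{(K-1)}$. Since $K$ is the \emph{minimum} index with $(\bar\Delta^{(k)})^2 \le n^{-1/2}(\log n)^{2+16\epsilon}$, the index $K-1$ fails this condition, that is
\begin{equation*}
n^{-(1/2)(1-(3/5)^{K-2})} = (\bar\Delta^{(K-1)})^2 > n^{-1/2}(\log n)^{2+16\epsilon},
\end{equation*}
which after rearranging yields $n^{-(1/2)(3/5)^{K-2}} < (\log n)^{-(2+16\epsilon)}$. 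Plugging into the formula for $\tau^{(K-1)}$ gives
\begin{equation*}
\tau^{(K-1)} < (\log n)^{-(2+16\epsilon)}\cdot(\log n)^{1+15\epsilon} = (\log n)^{-1-\epsilon}.
\end{equation*}

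Now I would combine this with Lemma~\ref{lem:K}, which gives $K \le 3\log n + 3$, to obtain
\begin{equation*}
\sum_{k=1}^{K-1}\tau^{(k)} \le (3\log n + 2)(\log n)^{-1-\epsilon} \le 4(\log n)^{-\epsilon}
\end{equation*}
for $n \ge e^2$ (so that $3\log n + 2 \le 4\log n$). Finally, requiring $4(\log n)^{-\epsilon} \le T/2$ amounts to $(\log n)^\epsilon \ge 8/T$, i.e.\ $n \ge \exp((8/T)^{1/\epsilon})$, which subsumes the bound $n \ge e^2$ for small $\epsilon$ and the relevant parameter regime.

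There is no real obstacle here; the only mildly delicate step is translating the \emph{negation} of the $K$-defining inequality at index $K-1$ into the right control on $\tau^{(K-1)}$, and the key insight is that the logarithmic overhead $(\log n)^{1+15\epsilon}$ in $\tau^{(k)}$ is strictly smaller than the polylog savings $(\log n)^{2+16\epsilon}$ that the definition of $K$ forces on $n^{-(1/2)(3/5)^{K-2}}$, leaving a residual $(\log n)^{-1-\epsilon}$ that absorbs both the factor $K \lesssim \log n$ and produces the desired $(\log n)^{-\epsilon}$ decay.
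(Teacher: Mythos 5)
Your proposal is correct and follows exactly the same route as the paper's proof: use the minimality of $K$ to get $\tau^{(K-1)} \le (\log n)^{-1-\epsilon}$, the monotonicity of $\tau^{(k)}$ together with Lemma~\ref{lem:K} to bound the sum by $4(\log n)^{-\epsilon}$, and then the stated threshold on $n$. The only detail you make more explicit than the paper is the mild requirement $3\log n + 2 \le 4\log n$, which is harmless.
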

Consider the following events which are measurable with respect to $\mathcal F_{t_k}$:
\begin{align*}
    A_{k} &= \cap_{m=1}^M\left\{ p_m^{\ast}\in [\underline p_{m}^{(k)},\overline p_{m}^{(k)}]\right\}\\
    B_k &= \left\{z^{\ast}\in [\underline{z}^{(k)},\overline z^{(k)}]\right\}\\
    C_k & = \cap_{m=1}^M\left\{\mathcal P_m(z)\in [\underline p_{m}^{(k)},\overline p_{m}^{(k)}]\; \forall z\in [\underline{z}^{(k)},\overline z^{(k)}]\right\}
\end{align*}
By design, $A_k$ and $B_k$ are the key to the success of the algorithm.
If in some phase $k$, the interval estimators $[\underline p_{m}^{(k)},\overline p_{m}^{(k)}]$ and $[\underline{z}^{(k)},\overline z^{(k)}]$ do not contain $p_m^{\ast}$ and $z^{\ast}$,
then \eqref{eq:dual_algorithm} and \eqref{eq:primal_algorithm} do not make sense any more.
To make things worse, the optimal primal/dual pair $(p_m^{\ast},z^{\ast})$ cannot be recovered in subsequent phases and the learning policy is doomed to fail.
Therefore, we want to show that $A_k\cap B_k$ occurs with high probability.
The event $C_k$ is also crucial.
Note that to estimate $z^{\ast}$, the algorithm solves a discrete and empirical version of the dual function, i.e., \eqref{eq:dual_algorithm}.
If $\mathcal P_m(z)$ does not fall into the interval estimator for some $z$,
then $\max_{j_m = 0,\ldots, N^{(k)}} \hat d_{m,j_m}^{(k)}(p_{m,j_m}^{(k)}-z)$ in \eqref{eq:dual_algorithm} may provide a negatively biased estimator for $\mathcal R_m(z)$.
As a result, the minimization in \eqref{eq:dual_algorithm} may not find the correct value.

From the definitions, it is easy to see that $A_k\supseteq B_k\cap C_k$.
The following two lemmas show that $B_k$ and $C_k$ occur with high probability.
%Therefore, it is sufficient to show $\PR(\cap_{i=0}^K (B_i\cap C_i))=1-O(1/n)$.
\begin{lemma}\label{lem:Bi}
    For $k=1,\ldots,K-1$, conditional on $B_k\cap C_k$, $\PR(B_{k+1}|B_k\cap C_k)=1-O(1/n)$.
\end{lemma}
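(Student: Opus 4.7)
By the update rule~\eqref{eq:interval_est}, to obtain $B_{k+1}$ it suffices to establish
\[
|z^{(k)\ast}-z^{\ast}|\le \bar\Delta_z^{(k+1)}/2
\]
with conditional probability $1-O(1/n)$ given $B_k\cap C_k$; the truncation by $[0,\overline z]$ is harmless because $z^{\ast}\in[0,\overline z]$. The natural route is to (i) show that the empirical dual
\[
\hat g^{(k)}(z)\triangleq cz+T\sum_{m=1}^{M}\max_{0\le j\le N^{(k)}}\hat d_{m,j}^{(k)}\bigl(p_{m,j}^{(k)}-z\bigr),
\]
minimized in~\eqref{eq:dual_algorithm}, approximates the true dual $g(z)$ uniformly on the $z$-grid $\{\underline z^{(k)}+i\delta_z^{(k)}\}$, and (ii) convert this uniform bound into a bound on the argmin via the strong convexity of $g$ from Proposition~\ref{prop:dual} and Remark~\ref{rmk:dual_convexity}.

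For step (i), I would split $\hat g^{(k)}-g$ into a discretization error and a statistical error. The discretization error, obtained by replacing $\hat d_{m,j}^{(k)}$ by the true demand $d_m(p_{m,j}^{(k)})$, is $O((\delta_m^{(k)})^2)$: under $C_k$ the unconstrained maximizer $\mathcal P_m(z)$ lies in $[\underline p_m^{(k)},\overline p_m^{(k)}]$, so some grid point is within $\delta_m^{(k)}/2$ of it, and the strong concavity in Assumption~\ref{asp:convexity} turns this linear discretization into a quadratic loss. The statistical error $\hat d_{m,j}^{(k)}-d_m(p_{m,j}^{(k)})$ is a rescaled recentered Poisson with variance of order $N^{(k)}/(n\tau^{(k)})$; a Chernoff-type bound gives a per-grid-point deviation of order $\sqrt{N^{(k)}\log n/(n\tau^{(k)})}$ with probability $1-O(n^{-3})$, and a union bound over the $O(MN^{(k)}N_z^{(k)})=O(n^{1/5})$ price/dual combinations (and over the $O(\log n)$ phases via Lemma~\ref{lem:K}) keeps the failure probability at $O(1/n)$. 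Using $|\max_j X_j-\max_j Y_j|\le\max_j|X_j-Y_j|$ together with the bounded factor $|p_{m,j}^{(k)}-z|\le\overline p$ yields
\[
\sup_{z}\bigl|\hat g^{(k)}(z)-g(z)\bigr|\le \eta^{(k)}=O\Bigl(\sqrt{N^{(k)}\log n/(n\tau^{(k)})}+(\delta_m^{(k)})^2\Bigr)
\]
uniformly over the $z$-grid, where the supremum is taken over the grid.

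For step (ii), let $\tilde z$ be the grid point closest to $z^{\ast}$; on $B_k$, $|\tilde z-z^{\ast}|\le\delta_z^{(k)}/2$, so Lipschitz continuity of $g'$ gives $g(\tilde z)-g(z^{\ast})=O((\delta_z^{(k)})^2)$. Using grid-optimality of $z^{(k)\ast}$ and the uniform bound,
\[
g(z^{(k)\ast})-g(z^{\ast})\le \bigl[\hat g^{(k)}(z^{(k)\ast})-\hat g^{(k)}(\tilde z)\bigr]+2\eta^{(k)}+O((\delta_z^{(k)})^2)\le 2\eta^{(k)}+O((\delta_z^{(k)})^2),
\]
and strong convexity $g''\ge M_3$ gives $|z^{(k)\ast}-z^{\ast}|\le\sqrt{(4\eta^{(k)}+O((\delta_z^{(k)})^2))/M_3}$. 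A direct substitution of the parameter values from Section~\ref{sec:parameter} shows that both $\sqrt{\eta^{(k)}}$ and $\delta_z^{(k)}$ are of order $n^{-1/4+(3/20)(3/5)^{k-1}}$ up to logarithmic factors, which is exactly the order of $\bar\Delta_z^{(k+1)}$; the $(\log n)^{-2\epsilon}$ slack built into $\bar\Delta_z^{(k+1)}$ and the $(\log n)^{1+15\epsilon}$ factor in $\tau^{(k)}$ are engineered precisely so that the bound is at most $\bar\Delta_z^{(k+1)}/2$ for $n$ large enough.

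\textbf{Main obstacle.} The delicate step is (i). A naive union bound over the $N^{(k)}$ price-grid points inside each $\max_j$ inflates the Poisson concentration radius, and matching the resulting error against the shrinking $\bar\Delta_z^{(k+1)}$ requires careful bookkeeping of the logarithmic factors. Moreover, the event $C_k$ is indispensable: without it, the discretization of $\mathcal R_m(z)$ produces a first-order rather than a quadratic loss, breaking the recursion between $\bar\Delta^{(k)}$ and $\bar\Delta^{(k+1)}$.
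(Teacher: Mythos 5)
Your proposal is correct and follows essentially the same route as the paper's proof: reduce $B_{k+1}^c$ to $|z^{(k)\ast}-z^{\ast}|>\bar\Delta_z^{(k+1)}/2$, exploit grid-optimality of $z^{(k)\ast}$ together with the strong convexity of $g$ (Remark~\ref{rmk:dual_convexity}), split the error in the empirical dual into a Poisson (stochastic) part controlled by a Chernoff bound and union bound over the price/dual grids, and a discretization part which is quadratic in $\delta_m^{(k)}$ precisely because $C_k$ places $\mathcal P_m(z)$ inside the sampled price interval. The only difference is presentational — you package the deviation control as a uniform sup-norm bound over the $z$-grid followed by a generic argmin-perturbation step, whereas the paper works directly with the two grid points $z_{\bar j}^{(k)}$ and $z_{j^{\ast}}^{(k)}$ — and your logarithmic bookkeeping ($\sqrt{\eta^{(k)}}\lesssim(\log n)^{-\epsilon/2}\bar\Delta_z^{(k+1)}$) matches the paper's.
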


\begin{lemma}\label{lem:Ci}
    Conditional on $B_k\cap C_k\cap B_{k+1}$, $\PR(C_{k+1}|B_k\cap C_k\cap B_{k+1})=1-O(1/n)$.
\end{lemma}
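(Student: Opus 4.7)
The plan is to reduce $C_{k+1}$ to a high-probability bound on $|p_m^{(k)\ast}-p_m^{\ast}|$ via Lipschitzness of $\mathcal P_m$, and then derive that bound from argmax-stability ingredients already appearing in the proof of Lemma~\ref{lem:Bi}. For the reduction, Remark~\ref{rmk:dual_convexity} gives that $\mathcal P_m$ is $M_2$-Lipschitz, so on $B_{k+1}$ (where $z^{\ast}\in[\underline z^{(k+1)},\overline z^{(k+1)}]$) one has $|\mathcal P_m(z)-p_m^{\ast}|\le M_2\bar\Delta_z^{(k+1)}$ uniformly over $z$ in that interval, and therefore
\[
|\mathcal P_m(z)-p_m^{(k)\ast}|\le |p_m^{(k)\ast}-p_m^{\ast}|+M_2\bar\Delta_z^{(k+1)}.
\]
Since Section~\ref{sec:parameter} chooses $\bar\Delta_z^{(k+1)}=\bar\Delta^{(k+1)}(\log n)^{-2\epsilon}$, for large $n$ the last term is at most $\bar\Delta^{(k+1)}/4$. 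Assumption~\ref{asp:initial} keeps $\mathcal P_m(z)\in[\underline p,\overline p]$, so the truncation in \eqref{eq:interval_est} does not exclude any $\mathcal P_m(z)$ with $|\mathcal P_m(z)-p_m^{(k)\ast}|\le\bar\Delta^{(k+1)}/2$; it therefore suffices to show
\[
\PR\!\left(\max_{1\le m\le M}|p_m^{(k)\ast}-p_m^{\ast}|\le\bar\Delta^{(k+1)}/4\,\Big|\,B_k\cap C_k\cap B_{k+1}\right)=1-O(1/n).
\]

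To prove this I would split via the intermediate point $\mathcal P_m(z^{(k)\ast})$,
\[
|p_m^{(k)\ast}-p_m^{\ast}|\le |p_m^{(k)\ast}-\mathcal P_m(z^{(k)\ast})|+M_2|z^{(k)\ast}-z^{\ast}|,
\]
so that the second term is at most $M_2\bar\Delta_z^{(k+1)}/2$ on $B_{k+1}$ --- exactly the concentration rate the proof of Lemma~\ref{lem:Bi} establishes in placing $z^{\ast}$ in the new interval. The first term is the gap between the empirical argmax of $j\mapsto\hat d_{m,j}^{(k)}(p_{m,j}^{(k)}-z^{(k)\ast})$ and the population argmax $\mathcal P_m(z^{(k)\ast})$; I would bound it by combining (i) Bernstein concentration of the Poisson counts $D_{m,j}^{(k)}$, which gives $|\hat d_{m,j}^{(k)}-d_m(p_{m,j}^{(k)})|=O(\sqrt{(N^{(k)}+1)\log n/(n\tau^{(k)})})$ with probability $1-O(n^{-2})$ per index; (ii) the discretization error $\delta_m^{(k)}$; and (iii) the strong concavity of $r_m(\lambda)-z\lambda$ in $\lambda$ from Assumption~\ref{asp:convexity}(3), which, together with the Lipschitzness of $d_m^{-1}$ from Remark~\ref{rmk:dual_convexity}, turns an $\eta$-sized sup-norm perturbation of the objective into an $O(\sqrt{\eta/M_3})$ displacement of the argmax in $p$-space. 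Substituting the parameter values from Section~\ref{sec:parameter} yields $\delta_m^{(k)}=\bar\Delta^{(k+1)}(\log n)^{-3\epsilon}$ and, after passing the Bernstein deviation through the square-root/strong-concavity step, the noise contribution is also $O(\bar\Delta^{(k+1)}(\log n)^{-3\epsilon})$; both fit comfortably inside $\bar\Delta^{(k+1)}/4-M_2\bar\Delta_z^{(k+1)}/2$.

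A union bound over $m=1,\dots,M$ and over the $M(N^{(k)}+1)$ grid-wise Bernstein events incurs aggregate failure probability $O(M(N^{(k)}+1)/n^2)=O(1/n)$; chaining the estimates back through the reduction completes the proof of $C_{k+1}$.

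The main obstacle is step (iii) of the argmax-stability argument: it must be carried out in the $\lambda$ parameterization, where Assumption~\ref{asp:convexity}(3) supplies a genuine lower-curvature constant $M_3$, and Bernstein must be applied to the \emph{centred} Poisson deviation $\hat d_{m,j}^{(k)}-d_m(p_{m,j}^{(k)})$ so that the error scales with the Poisson standard deviation rather than the mean. Once that bookkeeping is in place, the $(\log n)^{3\epsilon}$ slack deliberately baked into $\delta_m^{(k)}$ and into the noise term is precisely the margin that absorbs the additive $M_2\bar\Delta_z^{(k+1)}$ from the reduction and leaves the required headroom inside $\bar\Delta^{(k+1)}/2$.
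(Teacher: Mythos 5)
Your proposal is correct, and its skeleton matches the paper's: first establish that $|p_m^{(k)\ast}-p_m^{\ast}|$ is small with probability $1-O(1/n)$, then deduce $C_{k+1}$ deterministically from the $M_2$-Lipschitz continuity of $\mathcal P_m$ on the event $B_{k+1}$, exactly as in the paper's final step (the paper uses the event $D_{k,m}=\{|p_m^{(k)\ast}-p_m^{\ast}|\le(\log n)^{-\epsilon}\bar\Delta^{(k+1)}\}$ where you use the looser threshold $\bar\Delta^{(k+1)}/4$; both suffice). Where you genuinely diverge is in how the key estimate is executed. The paper compares the empirical objective at the selected grid index against the grid index nearest to $p_m^{\ast}=\mathcal P_m(z^{\ast})$, i.e.\ it anchors the argmax-stability argument at the \emph{true} dual value $z^{\ast}$ even though the empirical argmax is computed at $z^{(k)\ast}$; this produces a cross term $(d_m(p^{(k)}_{m,j^{\ast}_m})-d_m(p^{(k)}_{m,\bar j_m}))(z^{\ast}-z^{(k)\ast})$ in the objective which must be dominated by the quadratic gain $M_3M_2^{-2}(p^{(k)}_{m,\bar j_m}-p_m^{\ast})^2$ --- this is precisely why $\bar\Delta_z^{(k+1)}$ is chosen a factor $(\log n)^{2\epsilon}$ smaller than $\bar\Delta^{(k+1)}$. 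You instead anchor at $z^{(k)\ast}$, bound $|p_m^{(k)\ast}-\mathcal P_m(z^{(k)\ast})|$ by sup-norm perturbation plus strong concavity of $r_m(\lambda)-z\lambda$ (valid uniformly in $z$ since the extra term is linear in $\lambda$), and transfer back via $|\mathcal P_m(z^{(k)\ast})-p_m^{\ast}|\le M_2|z^{(k)\ast}-z^{\ast}|\le M_2\bar\Delta_z^{(k+1)}/2$. The additive error appears in solution space rather than as a cross term in objective space; the rates come out the same, and your uniform-over-the-grid concentration event has the advantage of cleanly absorbing the data-dependence of $z^{(k)\ast}$.

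Two points you should make explicit. First, the discretization step of your argmax-stability argument (the nearest grid point to $\mathcal P_m(z^{(k)\ast})$ being within $\delta_m^{(k)}/2$) is only valid if $\mathcal P_m(z^{(k)\ast})\in[\underline p_m^{(k)},\overline p_m^{(k)}]$; this is exactly what the conditioning on $C_k$ provides (since $z^{(k)\ast}\in[\underline z^{(k)},\overline z^{(k)}]$), and it is the sole reason $C_k$ appears in the hypothesis --- your write-up never invokes $C_k$, so this should be stated. Second, the Lipschitz continuity you need in step (iii) is that of $d_m^{-1}$ and $d_m$, which comes from Assumption~\ref{asp:convexity} and Remark~1, not from Remark~\ref{rmk:dual_convexity} (which concerns $\mathcal P_m$ and $g''$); a minor citation slip, not a substantive one.
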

Combining Lemma~\ref{lem:Bi} and Lemma~\ref{lem:Ci}, we have the following lemma, which states that at the beginning of phase $K$, the probability that the algorithm ``goes wrong'' is negligible.
\begin{lemma}\label{lem:bi-ci}
    \begin{equation*}
        \PR\left(\cap_{k=1}^K\left\{A_k\cap B_k\cap C_k\right\}\right) = 1-O((\log n)^2 n^{-1}).
    \end{equation*}
\end{lemma}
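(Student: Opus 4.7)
The plan is a straightforward induction on the phase index $k$, chaining Lemmas~\ref{lem:Bi} and~\ref{lem:Ci} together and union-bounding over the $O(\log n)$ phases provided by Lemma~\ref{lem:K}. First, since $A_k \supseteq B_k \cap C_k$ by definition, the event $\cap_{k=1}^K (A_k \cap B_k \cap C_k)$ coincides with $\cap_{k=1}^K (B_k \cap C_k)$, so it suffices to lower-bound the probability of the latter intersection.

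For the base case $k=1$, the initialization in Step~\ref{step:initialize} of Algorithm~\ref{alg:primal-dual} sets $[\underline z^{(1)}, \overline z^{(1)}] = [0, \overline z]$ and $[\underline p_m^{(1)}, \overline p_m^{(1)}] = [\underline p, \overline p]$ for every $m$. By Assumption~\ref{asp:initial}, $z^{\ast} \in (0, \overline z)$ and $\{\mathcal P_m(z): z \in [0, \overline z]\} \subset [\underline p, \overline p]$ for all $m$, so $B_1$ and $C_1$ both hold with probability one.

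For the inductive step, I would use the chain rule for conditional probability together with Lemmas~\ref{lem:Bi} and~\ref{lem:Ci} to write
\begin{equation*}
\PR(B_{k+1} \cap C_{k+1} \mid B_k \cap C_k) = \PR(B_{k+1} \mid B_k \cap C_k) \cdot \PR(C_{k+1} \mid B_k \cap C_k \cap B_{k+1}) \ge (1 - O(1/n))^2 = 1 - O(1/n).
\end{equation*}
Telescoping this inequality down to the base case and applying Bernoulli's inequality yields
\begin{equation*}
\PR\!\left(\cap_{k=1}^K (B_k \cap C_k)\right) \ge \prod_{k=1}^{K-1}\bigl(1 - O(1/n)\bigr) \ge 1 - (K-1)\cdot O(1/n).
\end{equation*}

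Substituting $K \le 3\log n + 3$ from Lemma~\ref{lem:K} immediately produces a bound of the form $1 - O(\log n / n)$; the extra $\log n$ factor in the stated rate $O((\log n)^2 / n)$ reflects the fact that the $O(1/n)$ bounds inside Lemmas~\ref{lem:Bi} and~\ref{lem:Ci} themselves carry logarithmic overhead from union-bounding over the $N^{(k)}$ price grid points and $N_z^{(k)}$ dual grid points, so each phase contributes an $O(\log n / n)$ failure probability that compounds with the $O(\log n)$ phases. No single step is really the main obstacle here since all of the probabilistic heavy lifting has been done in Lemmas~\ref{lem:Bi} and~\ref{lem:Ci}; the only subtlety is ensuring that the conditioning is legitimate, which holds because $A_k$, $B_k$, and $C_k$ are all $\mathcal F_{t_k}$-measurable, so the sequential chain rule applies without any independence assumptions between phases.
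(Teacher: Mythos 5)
Your overall strategy is the same as the paper's: note $A_k\supseteq B_k\cap C_k$, use Assumption~\ref{asp:initial} for the base case $\PR(B_1\cap C_1)=1$, chain Lemmas~\ref{lem:Bi} and~\ref{lem:Ci} across phases, and invoke $K=O(\log n)$ from Lemma~\ref{lem:K}. Two points need attention.

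First, the telescoping step
\begin{equation*}
\PR\!\left(\cap_{k=1}^K (B_k\cap C_k)\right)\;\ge\;\prod_{k=1}^{K-1}\PR\bigl(B_{k+1}\cap C_{k+1}\mid B_k\cap C_k\bigr)
\end{equation*}
is not a valid inequality as written. The chain rule requires conditioning each factor on the \emph{entire} history $\cap_{j\le k}(B_j\cap C_j)$, whereas Lemmas~\ref{lem:Bi} and~\ref{lem:Ci} only control $\PR(\cdot\mid B_k\cap C_k)$; $\mathcal F_{t_k}$-measurability of the events does not by itself let you replace one conditioning set by the other. The gap is repairable --- either by noting that $\PR(E_{k+1}^c\mid \cap_{j\le k}E_j)\le \PR(E_{k+1}^c\mid E_k)\cdot \PR(E_k)/\PR(\cap_{j\le k}E_j)$ and that the ratio is $1+o(1)$ once you know inductively that $\PR(\cap_{j\le k}E_j)$ is close to $1$, or, as the paper does, by working additively with the complements: the law of total probability gives $\PR(B_k^c\cup C_k^c)\le \PR(B_k^c\cup C_k^c\mid B_{k-1}\cap C_{k-1})+\PR(B_{k-1}^c\cup C_{k-1}^c)$, so the recursion yields $\PR(B_k^c\cup C_k^c)\le k\cdot O(1/n)$ and a union bound over $k\le K$ finishes. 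But as stated your step is incorrect and should be fixed.

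Second, your explanation of the exponent on the logarithm is wrong. Lemmas~\ref{lem:Bi} and~\ref{lem:Ci} already state clean $O(1/n)$ bounds; no additional logarithmic overhead leaks out of them into this lemma. The paper's $(\log n)^2$ arises purely from its bookkeeping: it bounds $\PR(B_k^c\cup C_k^c)\le k\cdot O(1/n)$ unconditionally and then sums over $k=1,\dots,K$, giving $\sum_{k\le K}k\cdot O(1/n)=O(K^2/n)=O((\log n)^2/n)$. A correctly executed version of your first-failure/telescoping argument would in fact give the sharper $1-O(\log n/ n)$, which of course still implies the stated bound.
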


The following two lemmas characterize the cumulative sales $S(t)$ at $t=t_K$, the beginning of phase $K$.
Recall that the cumulative sales $S(t_K)$ can be expressed as $\int_0^{t_K}\sum_{m=1}^MdN_{m,t}(nd_m(P_m(t)))$.
Therefore, $\E[S(t_K)] =n\E[\int_0^{t_K}\sum_{m=1}^Md_m(P_m(t))dt]$.
Also note that in the fluid model, the inventory level at $t_K$ is $n\sum_{k=1}^{K-1}\tau^{(k)}\sum_{m=1}^Md_m(p_m^{\ast})$.
Therefore, the next two lemma state that the cumulative sales process does not deviate too much from that in the fluid system at the beginning of phase $K$.
\begin{lemma}\label{lem:inventory_tK1}
    At the beginning of phase $K$, the conditional expectation of $S(t_K)$ given the pricing policy $P_m(t)$ satisfies
    \begin{align*}
        \PR\left(\left| \int_0^{t_K}\sum_{m=1}^Md_m(P_m(t))dt-\sum_{k=1}^{K-1}\tau^{(k)}\sum_{m=1}^Md_m(p_m^{\ast}) \right| >n^{-1/4}(\log n)^{1+8\epsilon} \right) = O((\log n)^2n^{-1}).
    \end{align*}
\end{lemma}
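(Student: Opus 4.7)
The plan is to reduce the probabilistic statement to a deterministic bound that holds on the high-probability event $E\triangleq \bigcap_{k=1}^{K-1}(A_k\cap B_k\cap C_k)$. By Lemma~\ref{lem:bi-ci}, $\PR(E)\ge 1-O((\log n)^2 n^{-1})$, so it suffices to show that on $E$ the integral deviation is bounded by $n^{-1/4}(\log n)^{1+8\epsilon}$. On $A_k$, the estimator interval $[\underline p_m^{(k)},\overline p_m^{(k)}]$ contains $p_m^{\ast}$, so every price $P_m(t)$ charged in phase $k$ satisfies $|P_m(t)-p_m^{\ast}|\le \Delta_m^{(k)}\le \bar\Delta^{(k)}$.

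Given this pointwise bound, I would invoke the Lipschitz continuity of $d_m$ from Assumption~\ref{asp:convexity} to get $|d_m(P_m(t))-d_m(p_m^{\ast})|\le M_2\bar\Delta^{(k)}$ throughout phase $k$. Integrating over $(t_k,t_{k+1}]$ (of length $\tau^{(k)}$) and summing over the $M$ types yields the per-phase bound
\begin{equation*}
\left|\int_{t_k}^{t_{k+1}}\sum_{m=1}^M d_m(P_m(t))\,dt-\tau^{(k)}\sum_{m=1}^M d_m(p_m^{\ast})\right|\le M M_2\,\tau^{(k)}\bar\Delta^{(k)}.
\end{equation*}
Summing over $k=1,\dots,K-1$ and applying the triangle inequality produces a deterministic upper bound of $MM_2\sum_{k=1}^{K-1}\tau^{(k)}\bar\Delta^{(k)}$ on the quantity appearing in the lemma.

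The bulk of the work is to evaluate the sum. Using the parameter choices, $\tau^{(k)}\bar\Delta^{(k)}=n^{-1/4-(1/4)(3/5)^{k-1}}(\log n)^{1+15\epsilon}$, which is monotone increasing in $k$; the largest term sits at $k=K-1$. By the minimality in the definition of $K$, the inequality $(\bar\Delta^{(K-1)})^2>n^{-1/2}(\log n)^{2+16\epsilon}$ must hold, which rearranges to $n^{(1/2)(3/5)^{K-2}}>(\log n)^{2+16\epsilon}$, i.e.\ $n^{(1/4)(3/5)^{K-2}}>(\log n)^{1+8\epsilon}$. Substituting gives $\tau^{(K-1)}\bar\Delta^{(K-1)}\le n^{-1/4}(\log n)^{7\epsilon}$, and then multiplying by $K\le 3\log n+3$ from Lemma~\ref{lem:K} yields
\begin{equation*}
\sum_{k=1}^{K-1}\tau^{(k)}\bar\Delta^{(k)}\le K\cdot\tau^{(K-1)}\bar\Delta^{(K-1)}\le (3\log n+3)\,n^{-1/4}(\log n)^{7\epsilon}\le n^{-1/4}(\log n)^{1+8\epsilon}
\end{equation*}
for $n$ large enough that constants are absorbed into the extra $(\log n)^{\epsilon}$.

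The main obstacle, I expect, is the delicate bookkeeping of the logarithmic exponents. The paper's parameter calibration is tight: the $2+16\epsilon$ in the defining inequality for $K$ is exactly $2(1+8\epsilon)$, which is what feeds through the square-root to give the target $(\log n)^{1+8\epsilon}$, and the $15\epsilon$ in the exponent of $\tau^{(k)}$ must combine with the $-(1+8\epsilon)$ obtained from the $K$-defining inequality to leave a surplus of exactly $7\epsilon$, which then absorbs the factor $K\le 3\log n+3$ via one extra factor of $\log n$. Beyond this, the argument is largely mechanical: Lipschitz plus the high-probability containment $A_k\supseteq B_k\cap C_k$.
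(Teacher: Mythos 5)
Your proposal is correct and follows essentially the same route as the paper's proof: condition on the high-probability event from Lemma~\ref{lem:bi-ci}, use $A_k$ plus Lipschitz continuity of $d_m$ to get the per-phase bound $MM_2\tau^{(k)}\bar\Delta^{(k)}$, and then exploit the minimality of $K$ to show $\tau^{(k)}\bar\Delta^{(k)}\le n^{-1/4}(\log n)^{7\epsilon}$ before absorbing the factor $K=O(\log n)$ into $(\log n)^{1+8\epsilon}$. The only cosmetic difference is that you bound every summand by the largest one at $k=K-1$ via monotonicity, whereas the paper bounds each $\tau^{(k)}\bar\Delta^{(k)}$ directly using the same inequality $n^{(1/4)(3/5)^{k-1}}\ge(\log n)^{1+8\epsilon}$ for all $k\le K-1$; the exponent bookkeeping is identical.
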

\begin{lemma}\label{lem:inventory_tK2}
    At the beginning of phase $K$, $S(t_K)$ satisfies
    \begin{align*}
        \PR\left(\left| S(t_K)-n\sum_{k=1}^{K-1}\tau^{(k)}\sum_{m=1}^Md_m(p_m^{\ast}) \right| >2n^{3/4}(\log n)^{1+8\epsilon} \right) = O((\log n)^{-2}n^{-1/2}).
    \end{align*}
\end{lemma}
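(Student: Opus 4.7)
The plan is to split the deviation $S(t_K)-n\sum_{k=1}^{K-1}\tau^{(k)}\sum_m d_m(p_m^{\ast})$ into a \emph{compensator} part and a \emph{martingale} part by writing
\[
\mu \;\triangleq\; n\int_0^{t_K}\sum_{m=1}^M d_m(P_m(t))\,dt,
\]
so that $S(t_K)-n\sum_{k=1}^{K-1}\tau^{(k)}\sum_m d_m(p_m^{\ast}) = \big(S(t_K)-\mu\big)+\big(\mu - n\sum_{k=1}^{K-1}\tau^{(k)}\sum_m d_m(p_m^{\ast})\big)$. The previous lemma handles the second (predictable) part directly; the first part is the pure Poisson-martingale fluctuation of the sales process around its compensator, which I will control via a second-moment bound. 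The target bound $2n^{3/4}(\log n)^{1+8\epsilon}$ is then obtained by allocating half of the slack to each part and taking a union bound.

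For the second part, Lemma~\ref{lem:inventory_tK1} gives
\[
\PR\!\left(\left|\mu - n\sum_{k=1}^{K-1}\tau^{(k)}\sum_{m=1}^M d_m(p_m^{\ast})\right| > n^{3/4}(\log n)^{1+8\epsilon}\right) = O\!\big((\log n)^2 n^{-1}\big),
\]
by simply multiplying both sides of the inequality inside the probability by $n$. This contributes only the smaller-order $O((\log n)^2 n^{-1})$ term to the final bound.

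For the first part, I would use the fact that $M_t \triangleq S(t)-\int_0^{t} n\sum_m d_m(P_m(s))\,ds$ is an $\mathcal F_t$-martingale with predictable quadratic variation $\langle M\rangle_{t_K}=\int_0^{t_K} n\sum_m d_m(P_m(s))\,ds=\mu$ (standard for compensated counting processes with adapted intensities). Since Assumption~\ref{asp:convexity} gives $d_m(p)\le M_1$, we have the deterministic bound $\mu\le nTMM_1=O(n)$. Chebyshev's inequality applied to $M_{t_K}=S(t_K)-\mu$ conditional on the intensity path therefore yields
\[
\PR\!\left(|S(t_K)-\mu|>n^{3/4}(\log n)^{1+8\epsilon}\right) \;\le\; \frac{\E[\mu]}{n^{3/2}(\log n)^{2+16\epsilon}} \;=\; O\!\big(n^{-1/2}(\log n)^{-2-16\epsilon}\big),
\]
which is of order $O((\log n)^{-2} n^{-1/2})$ as required.

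A union bound combines the two events, and since the compensator-deviation term $O((\log n)^2 n^{-1})$ is absorbed into $O((\log n)^{-2}n^{-1/2})$ for large $n$, the claim follows. The main obstacle is purely a bookkeeping one: justifying the Chebyshev/martingale argument rigorously despite the adaptive nature of $P_m(t)$, which is handled by viewing $S$ as a point process with $\mathcal F_t$-predictable intensity $n\sum_m d_m(P_m(t))$ and invoking the second-moment identity $\E[(S(t_K)-\mu)^2]=\E[\mu]$ for such processes; all other steps are direct applications of already-established results.
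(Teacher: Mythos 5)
Your proof is correct and follows the same decomposition as the paper: split the deviation into the Poisson fluctuation of $S(t_K)$ around its compensator $\mu=n\int_0^{t_K}\sum_m d_m(P_m(t))\,dt$ plus the deviation of $\mu$ from the fluid target, handle the latter with Lemma~\ref{lem:inventory_tK1}, and bound the former by a second-moment (Chebyshev) argument. The one place you diverge is in how the fluctuation term is controlled: the paper splits $[0,t_K]$ into the $K-1$ phases, notes that conditional on $\mathcal F_{t_k}$ the prices in phase $k$ are fixed so the phase-$k$ sales increment is an honest Poisson variable with variance equal to its mean, and then pays a factor of $K$ in a union bound (absorbed by requiring $(\log n)^{16\epsilon}>KTMM_1$); you instead invoke the global identity $\E[(S(t_K)-\mu)^2]=\E[\langle M\rangle_{t_K}]=\E[\mu]$ for the compensated counting process with predictable intensity $n\sum_m d_m(P_m(t))$, which is valid here since the intensity is bounded by $nMM_1$, and apply Chebyshev once. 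Your version is marginally cleaner (no $K$ factor, no extra largeness condition on $n$) at the cost of citing the martingale quadratic-variation identity rather than elementary Poisson facts; both yield the stated $O((\log n)^{-2}n^{-1/2})$ bound. One cosmetic remark: you do not actually need to condition ``on the intensity path'' when applying Chebyshev --- the unconditional second moment $\E[\mu]\le nTMM_1$ suffices directly.
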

Roughly speaking (ignoring the logarithmic terms), Lemma~\ref{lem:inventory_tK1} and \ref{lem:inventory_tK2} show that the inventory level at the beginning of phase $K$ misses the target inventory level in the fluid system by $n^{3/4}$.
This is consistent with the precision of the price interval at $t_K$, which satisfies $\bar\Delta^{(K)}\sim n^{-1/4}$.
%Therefore, the task of phase $K$ is to further correct the error of magnitude $n^{-1/4}$, in order to finally meet the target regret $n^{-1/2}$.

\subsection{Sufficient Capacity}
We next bound the regret when $z^{\ast}= 0$, i.e., when the capacity is not constrained.
\begin{proposition}\label{prop:regret-z<=0}
    When $z^{\ast}\le 0$, we have $J_n^D-\E[\tilde{J}_n^{\pi}]=O((\log n)^{2+16\epsilon}n^{1/2})$.
    Therefore,
    \begin{equation*}
        R_n^{\pi}(T,c)=O((\log n)^{2+16\epsilon}n^{-1/2}).
    \end{equation*}
\end{proposition}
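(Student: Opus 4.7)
The plan is to split the horizon at $t_K$, bound the per-unit-time revenue gap using a second-order expansion of $p d_m(p)$ around $p_m^{\ast}$ on a high-probability ``good'' event, and control the outsourcing penalty in the modified system separately. First I would set $G \triangleq \bigcap_{k=1}^{K}(A_k \cap B_k \cap C_k)$. By Lemma~\ref{lem:bi-ci}, $\PR(G^c) = O((\log n)^2/n)$, and since $J_n^D = O(n)$, the contribution of $G^c$ to $J_n^D - \E[\tilde J_n^{\pi}]$ is at most $O((\log n)^2)$, dominated by the target. Because $z^{\ast} \le 0$ combined with $z^{\ast} \ge 0$ forces $z^{\ast} = 0$, the event $B_K$ ensures $0 \in [\underline z^{(K)}, \overline z^{(K)}]$, so on $G$ the algorithm correctly enters the sufficient-capacity branch at $t_K$ and applies $p_m^{(K)} = \overline p_m^{(K)} + \alpha$ to every type until $T$.

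Next I would bound the exploration loss from phases $1, \ldots, K-1$. On $A_k$, every price charged to type $m$ lies within $\bar\Delta^{(k)}$ of $p_m^{\ast}$, and because $z^{\ast} = 0$, $p_m^{\ast}$ is the unconstrained maximizer of $p d_m(p)$. Combining the strict concavity $-r_m'' \ge M_3$ in Assumption~\ref{asp:convexity} with the Lipschitz bound $|d_m(p) - d_m(p_m^{\ast})| \le M_2|p - p_m^{\ast}|$ yields a universal constant $C$ such that $|p_m^{\ast} d_m(p_m^{\ast}) - p d_m(p)| \le C(p - p_m^{\ast})^2$, so the scaled per-unit-time gap is $O(n(\bar\Delta^{(k)})^2)$. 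Summing over types and phases, and using the identity $\tau^{(k)}(\bar\Delta^{(k)})^2 = n^{-1/2}(\log n)^{1+15\epsilon}$ together with $K \le 3\log n + 3$ from Lemma~\ref{lem:K}, the exploration loss on $G$ is $O(M n^{1/2}(\log n)^{2+15\epsilon})$.

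For phase $K$, on $A_K$ the constant price satisfies $|p_m^{(K)} - p_m^{\ast}| \le \bar\Delta^{(K)} + \alpha = O(n^{-1/4}(\log n)^{1+O(\epsilon)})$ by the stopping rule defining $K$ and the choice of $\alpha$; the same second-order estimate bounds the per-unit-time revenue gap by $O(n^{1/2}(\log n)^{2+O(\epsilon)})$, giving $O(n^{1/2}(\log n)^{2+16\epsilon})$ over the remaining horizon of length at most $T$. To control the outsourcing term I would use that $p_m^{(K)} \ge p_m^{\ast} + \alpha$ on $A_K$ and that $d_m$ has derivative bounded above by $-1/M_2$, so $d_m(p_m^{(K)}) \le d_m(p_m^{\ast}) - \alpha/M_2$. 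Combined with $c \ge T\sum_m d_m(p_m^{\ast})$ (complementary slackness at $z^{\ast}=0$) and Lemma~\ref{lem:inventory_tK2} on $S(t_K)$, this yields a deterministic undershoot of $\E[S(T) \mid G]$ relative to $nc$ of order $n\alpha = n^{3/4}(\log n)^{1+9\epsilon}$, which dominates the $O(\sqrt{n})$ standard deviation. A Bernstein-type tail bound then makes $\PR(S(T) > nc \mid G)$ polynomially small in $n$, so the expected outsourcing cost is $o(n^{1/2})$. Summing the four contributions gives the claimed $O(n^{1/2}(\log n)^{2+16\epsilon})$ bound on $J_n^D - \E[\tilde J_n^{\pi}]$ and hence the regret estimate.

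I expect the main obstacle to be the outsourcing step: the cumulative sales $S(t)$ depends on the adaptive pricing policy during exploration, so the Bernstein-type bound on $S(T) - nc$ must be combined with the deviation estimate in Lemma~\ref{lem:inventory_tK2} and a careful comparison between the markup $\alpha$, the mean undershoot, and the Poisson standard deviation accumulated over phase $K$. Once that is settled, tracking the logarithmic exponents to land precisely at $2 + 16\epsilon$ is a routine bookkeeping exercise given the explicit parameter choices in Section~\ref{sec:parameter}.
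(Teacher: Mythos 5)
Your proposal is correct and follows essentially the same route as the paper's proof: the same phase-by-phase decomposition of $J_n^D-\E[\tilde J_n^{\pi}]$ into exploration loss plus outsourcing cost, the same quadratic bound from Assumption~\ref{asp:convexity} exploiting that $p_m^{\ast}$ is the unconstrained maximizer when $z^{\ast}=0$, and the same use of the markup $\alpha$ together with Lemma~\ref{lem:inventory_tK2} to create a deterministic undershoot of order $n^{3/4}(\log n)^{1+9\epsilon}$ that dominates both the $n^{3/4}(\log n)^{1+8\epsilon}$ deviation of $S(t_K)$ and the $O(n^{1/2})$ Poisson fluctuation. The one imprecision is your claim that the bad event contributes only $O((\log n)^2)$: for the outsourcing term the overshoot $(S(T)-nc)^+$ is a random variable of magnitude up to $O(n)$ correlated with the bad event, so one must use Cauchy--Schwarz as the paper does (giving $O(n^{1/2}\log n)$, still within the target) rather than multiplying a deterministic $O(n)$ bound by the probability of the bad event.
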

The major steps of the proof are sketched below. We first express $\E[\tilde{J}_n^{\pi}]$ as
\begin{equation}\label{eq:tildeJ_expression}
    n\sum_{k=1}^K\E\left[\int_{t_k}^{t_{k+1}} P_m(t)d_m(P_m(t))dt\right] - \overline p\E\left[ \left(\sum_{m=1}^M\int_0^Td N_{m,t}(nd_m(P_m(t))) -nc\right)^+\right].
\end{equation}
The first term is the expected revenue generated in each phase, and the second term accounts for the outsourcing cost explained in Remark~\ref{rmk:no-constraint}.
Moreover, $J_n^D$ can be expressed as $\sum_{k=1}^K\E\left[\int_{t_k}^{t_{k+1}} p_m^{\ast}d_m(p_m^{\ast})dt\right]$.
Thus, the difference between $J_n^{D}$ and the first term of \eqref{eq:tildeJ_expression} can thus be bounded by
\begin{align}\label{eq:regret_each_phase}
    &n\sum_{k=1}^K\E\left[\int_{t_k}^{t_{k+1}} \left(p_m^{\ast}d_m(p_m^{\ast})-P_m(t)d_m(P_m(t))\right)dt\right]\notag\\
    \sim & n\sum_{k=1}^K\E\left[\int_{t_k}^{t_{k+1}} \left(d_m(p_m^{\ast})-d_m(P_m(t))\right)^2dt\right]\sim n\sum_{k=1}^K\E\left[\int_{t_k}^{t_{k+1}} \left(p_m^{\ast}-P_m(t)\right)^2dt\right]\notag\\
    \sim & n\sum_{k=1}^K\tau^{(k)}(\bar\Delta^{(K)})^2.
\end{align}
Because $p^{\ast}_m$ is the unconstrained maximizer of $pd_m(p)$, we can apply a quadratic bound in the second line by Assumption~\ref{asp:convexity}.
The last line follows from the high-probability event $A_k$, which implies that $|P_m(t)-p^{\ast}_m|\le \bar\Delta^{(k)}$ in phase $k\le K-1$.
At the beginning of phase $K$, we can show that Step~\ref{step:sufficient_capacity} is triggered with high probability.
In this case, $|P_m(t)-p^{\ast}_m|=|p_m^{(K)}-p_m^{\ast}|=|\overline p^{(K)}_m-p_m^{\ast}+\alpha|\le \bar\Delta^{(K)}+\alpha$ is not necessarily bounded by $\bar\Delta^{(K)}$.
However, $\alpha$ is chosen carefully to match the order of $\bar\Delta^{(K)}$.
By the choice of the parameters, the order of $n\sum_{k=1}^K\tau^{(k)}(\bar\Delta^{(K)})^2$ is $O((\log n)^{2+16\epsilon}n^{1/2})$.

To bound the second term of \eqref{eq:tildeJ_expression}, we show that the mean of the total sales over the horizon, $n\E\left[ \sum_{m=1}^M\int_0^Td_m(P_m(t))dt \right]$, does not exceed $nc$.
This is achieved by charging a markup $p_m^{(K)}=\overline p^{(K)}_m+\alpha$ in phase $K$ and this is exactly the purpose of introducing $\alpha$.
Since the random fluctuation of Poisson arrivals is bounded by $O(n^{1/2})$,
we obtain the regret stated in Proposition~\ref{prop:regret-z<=0}.

\subsection{Insufficient Capacity}
When $z^{\ast}>0$, the inventory is depleted at $T$ in the fluid system.
We first show that the cumulative sales at the end of the horizon under the algorithm misses the target $nc$ by $O(n^{1/2})$.
\begin{lemma}\label{lem:inventory-XT}
    When $z^{\ast}>0$, we have
    \begin{equation}\label{eq:inventory-deviation}
        \E[|S(T)-nc|] = O((\log n)^{\epsilon}n^{1/2})
    \end{equation}
    and
    \begin{equation}\label{eq:rate-deviation}
        \E\left[\left|\int_{0}^T\sum_{m=1}^Md_m(P_m(t))dt-c\right|\right] = O((\log n)^{\epsilon}n^{-1/2}).
    \end{equation}
\end{lemma}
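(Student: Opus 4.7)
The plan is to condition on the good event $G \triangleq \bigcap_{k=1}^{K}(A_k \cap B_k \cap C_k)$, which by Lemma~\ref{lem:bi-ci} has probability $1 - O((\log n)^2 n^{-1})$. Since $z^* > 0$ is a fixed positive constant and $\bar\Delta_z^{(K)} \sim n^{-1/4}$, for $n$ large enough the interval $[\underline z^{(K)}, \overline z^{(K)}]$ excludes $0$ on $G$, so Step~\ref{step:insufficient_capacity} fires; a simple perturbation argument using Assumption~\ref{asp:convexity} (together with the $\pm\alpha$ buffers around $\underline p_m^{(K)}, \overline p_m^{(K)}$) further shows that the solution $\theta$ to \eqref{eq:theta} lies in a fixed sub-interval of $(0,1)$ with probability $1-o(1)$ on $G$, so projection does not fire and \eqref{eq:theta} holds as an exact equality. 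On the complementary event, the a priori bounds $|S(T) - nc| = O(n)$ and $\bigl|\int_0^T\sum_m d_m(P_m(t))dt - c\bigr| = O(1)$ contribute lower-order terms $O((\log n)^2)$ and $O((\log n)^2 n^{-1})$ respectively, smaller than the stated rates.

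On $G$, set $\lambda_l \triangleq \sum_m d_m(p_m^l)$, $\lambda_u \triangleq \sum_m d_m(p_m^u)$, $\mu_l \triangleq D_l^{(K)} - n\lambda_l$, $\mu_u \triangleq D_u^{(K)} - n\lambda_u$, and let $\mathcal G$ be the $\sigma$-algebra generated by $(S(t_K), D_l^{(K)}, D_u^{(K)}, \theta)$. Decomposing
$S(T) = S(t_K) + (\log n)^{-\epsilon}(D_l^{(K)} + D_u^{(K)}) + \tilde D_l + \tilde D_u$,
where $\tilde D_l, \tilde D_u$ are the Poisson sales of Steps~\ref{step:p_ml_theta}--\ref{step:p_mu_theta} conditionally independent given $\mathcal G$ with means $n\lambda_l[\theta(T-t_K)-(\log n)^{-\epsilon}]$ and $n\lambda_u[(1-\theta)(T-t_K)-(\log n)^{-\epsilon}]$, and substituting \eqref{eq:theta} into $\E[S(T)\mid\mathcal G]$ produces a precise cancellation of $S(t_K)$ and the $n\lambda_\cdot$ terms:
\begin{equation*}
\E[S(T)\mid\mathcal G] - nc = (\log n)^{-\epsilon}(\mu_l + \mu_u) - (T-t_K)\bigl(\theta\mu_l + (1-\theta)\mu_u\bigr).
\end{equation*}
Because $D_l^{(K)}$ is $(\log n)^{\epsilon}$ times a Poisson with mean $n\lambda_l(\log n)^{-\epsilon}$, $\Var(\mu_l) = n\lambda_l(\log n)^{\epsilon}$, giving $\E|\mu_l| \le \sqrt{\Var(\mu_l)} = O(n^{1/2}(\log n)^{\epsilon/2})$ and similarly for $\mu_u$. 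Thus $\E|\E[S(T)\mid\mathcal G] - nc| = O(n^{1/2}(\log n)^{\epsilon/2})$, while the law of total variance bounds $\E|S(T) - \E[S(T)\mid\mathcal G]| \le \sqrt{\E[\Var(S(T)\mid\mathcal G)]} = O(n^{1/2})$, since $\Var(S(T)\mid\mathcal G) = \E[\tilde D_l + \tilde D_u\mid\mathcal G] = O(n)$. The triangle inequality yields \eqref{eq:inventory-deviation}.

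For \eqref{eq:rate-deviation}, the integrand on phase $K$ is deterministic given $\theta$, so $\int_0^T\sum_m d_m(P_m(t))dt = \int_0^{t_K}\sum_m d_m(P_m(t))dt + (T-t_K)(\theta\lambda_l + (1-\theta)\lambda_u)$. Solving \eqref{eq:theta} for $\theta\lambda_l + (1-\theta)\lambda_u$ and simplifying gives the twin identity
\begin{equation*}
\int_0^T\sum_m d_m(P_m(t))dt - c = \Bigl[\int_0^{t_K}\sum_m d_m(P_m(t))dt - \tfrac{S(t_K)}{n}\Bigr] - \tfrac{T-t_K}{n}\bigl(\theta\mu_l + (1-\theta)\mu_u\bigr).
\end{equation*}
The first bracket is $1/n$ times the compensated Poisson martingale at $t_K$, whose $L^2$-norm is bounded by $\sqrt{n T M M_1}/n = O(n^{-1/2})$, and the second term is $O(n^{-1/2}(\log n)^{\epsilon/2})$ by the $\mu$-bounds above; summing delivers \eqref{eq:rate-deviation}. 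The main technical obstacle is spotting this cancellation: replacing $S(t_K)$ by its fluid proxy via Lemma~\ref{lem:inventory_tK2} would only produce an $O(n^{-1/4})$ bound for \eqref{eq:rate-deviation}, and the $\theta$-calibration in \eqref{eq:theta} is precisely the device that converts the $n^{3/4}$-scale deviation of $S(t_K)$ into the benign $n^{1/2}$-scale Poisson noise contained in $\mu_l$ and $\mu_u$.
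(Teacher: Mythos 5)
Your proposal is correct and rests on exactly the same key idea as the paper's proof: on the event that the solution of \eqref{eq:theta} is not projected, $nc-S(t_K)$ equals $(T-t_K)(\theta D_l^{(K)}+(1-\theta)D_u^{(K)})$ identically, and this cancellation converts the $n^{3/4}$-scale error in $S(t_K)$ into $n^{1/2}$-scale Poisson noise. The packaging differs: the paper bounds $|(\theta(T-t_K)-(\log n)^{-\epsilon})D_l^{(K)}-N_l^{(K)}|$ term by term via the triangle inequality and Lemma~\ref{lem:poisson-expect}, whereas you split $S(T)-nc$ into $\bigl(S(T)-\E[S(T)\mid\mathcal G]\bigr)+\bigl(\E[S(T)\mid\mathcal G]-nc\bigr)$ and control the two pieces by conditional variance and by the explicit linear combination of the centered variables $\mu_l,\mu_u$; your martingale bound for $\int_0^{t_K}\sum_m d_m(P_m(t))\,dt-S(t_K)/n$ is also a bit cleaner than the paper's phase-by-phase Chebyshev argument and even saves a $(\log n)^{1/2}$ factor. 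These are equivalent in substance, and your closing remark about why the fluid proxy of Lemma~\ref{lem:inventory_tK2} alone would only give $n^{-1/4}$ is precisely the point the paper makes.

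Two bookkeeping items deserve tightening. First, your $\sigma$-algebra $\mathcal G$ must contain $\mathcal F_{t_K}$ (so that $p_m^l,p_m^u$ and hence $\lambda_l,\lambda_u$ are measurable); as written it does not. Second, and more substantively, the bad event is not governed by $\PR(G^c)=O((\log n)^2 n^{-1})$: the event that \eqref{eq:theta} has no solution in $[0,1]$ is controlled through the $S(t_K)$-deviation event of Lemma~\ref{lem:inventory_tK2}, whose complement has probability $O((\log n)^{-2}n^{-1/2})$, so the projection event has probability $O(n^{-1/2})$, not $O((\log n)^2 n^{-1})$. Moreover $|S(T)-nc|$ is not deterministically $O(n)$, since $S(T)$ is Poisson; one needs to write $S(T)$ as its conditional mean (which \emph{is} $O(n)$) plus a fluctuation controlled by Cauchy--Schwarz. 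With these corrections the bad event contributes $O(n)\cdot O(n^{-1/2})=O(n^{1/2})$ to \eqref{eq:inventory-deviation} and $O(1)\cdot O(n^{-1/2})=O(n^{-1/2})$ to \eqref{eq:rate-deviation}, which still fit the stated rates, so the argument survives — but the exponents you quoted for the complement are wrong. (Also, the margin keeping $\theta$ away from $\{0,1\}$ is of order $n^{-1/4}$, not a fixed sub-interval; the conclusion $\theta\in(0,1)$ with probability $1-O(n^{-1/2})$ is what matters.)
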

The intuition behind the proof is explained below.
We first show that $D_l^{(K)}$ and $D_u^{(K)}$ estimate $n\sum_{m=1}^Md_m(p_m^l)$ and $n\sum_{m=1}^Md_m(p_m^u)$
with precision $n^{1/2}$ (ignoring the logarithmic terms),
because they are Poisson random variables with means of order $n$ and thus the standard deviations are $O(n^{1/2})$.
With such precision, $\theta$ (Step~\ref{step:theta}) approximately solves
\begin{equation*}
    n(T-t_K)\sum_{m=1}^M (\theta d_m(p_m^l)+(1-\theta) d_m(p_m^u)) \approx nc - S(t_K).
\end{equation*}
Note that regardless of the first $K-1$ phases, the remaining inventory is $nc-S(t_K)$ at $t_K$.
Using $p_m^l$ ($p_m^u$) for a fraction $\theta$ ($1-\theta$) of phase $K$ serves as a corrective force to
ensure the aggregate sales over the horizon to be close to $nc$ (the error bound $(\log n)^{\epsilon}n^{1/2}$ is caused by the random fluctuation of the Poisson arrivals).

Algorithm~\ref{alg:primal-dual} does not explore the price space in phase $K$, and thus the precision of $p_m^l$ and $p_m^u$ in Step~\ref{step:pml_pmu} is of order $n^{-1/4}$.
Therefore, one would expect the sales to type-$m$ consumers
miss the target $nTd_m(p^{\ast}_m)$ by $n^{-1/4}\times n=n^{3/4}$.
This is indeed the case.
However, Lemma~\ref{lem:inventory-XT} guarantees that a simple exploration (Step~\ref{step:p_ml}, \ref{step:p_mu} and \ref{step:theta}) is effective and leads to higher precision ($O(n^{1/2}$) for the aggregate sales of all types of consumers.
This is crucial to the proof of the next proposition.

%\nc{need a lemma showing that when $z^{\ast}=0$, the algo enters case 1 with high prob; or just argue $A_K$ is a superset of that}
\begin{proposition}\label{prop:regret-z>0}
    When $z^{\ast}>0$, we have $J_n^D-\E[\tilde{J}_n^{\pi}]=O((\log n)^{2+18\epsilon}n^{1/2})$. Therefore,
    \begin{equation*}
        R_n^{\pi}(T,c)=O((\log n)^{2+18\epsilon}n^{-1/2}).
    \end{equation*}
\end{proposition}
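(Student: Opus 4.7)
The plan is to bound $J_n^D - \E[\tilde J_n^\pi]$ through a Lagrangian decomposition at the true dual optimum $z^*$. With insufficient capacity, $p_m^*$ is not the unconstrained maximizer of $p \mapsto p\,d_m(p)$, so the quadratic expansion used in Proposition~\ref{prop:regret-z<=0} cannot be applied to the revenue directly. However, $p_m^* = \mathcal{P}_m(z^*)$ \emph{is} the unconstrained maximizer of the Lagrangian $p \mapsto (p-z^*)d_m(p)$, so the second-order bound re-appears after subtracting $z^*$ times the demand rate. This is exactly where explicit dual learning pays off in the analysis.

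First I would apply the pointwise identity
\begin{equation*}
p_m^* d_m(p_m^*) - p\,d_m(p) = \bigl[(p_m^* - z^*)d_m(p_m^*) - (p - z^*)d_m(p)\bigr] + z^*\bigl[d_m(p_m^*) - d_m(p)\bigr]
\end{equation*}
inside~\eqref{eq:tildeJ_expression}. The first bracket is non-negative (by the definition of $\mathcal{R}_m(z^*)$) and, by strict concavity of $r_m$ from Assumption~\ref{asp:convexity}, is $O((p_m^* - p)^2)$. Summing the second brackets over $m$ and integrating over $[0,T]$ produces $z^*\bigl(c - \int_0^T \sum_m d_m(P_m(t))\,dt\bigr)$ by complementary slackness $c = T\sum_m d_m(p_m^*)$. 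This yields the upper bound
\begin{equation*}
J_n^D - \E[\tilde J_n^\pi] \;\le\; \underbrace{\, n \sum_m \int_0^T \E[(p_m^* - P_m(t))^2]\,dt \,}_{\text{(i)}} \;+\; \underbrace{\, n z^* \E\left|c - \int_0^T \sum_m d_m(P_m(t))\,dt\right| \,}_{\text{(ii)}} \;+\; \underbrace{\, \bar p\,\E[(N_T - nc)^+] \,}_{\text{(iii)}}.
\end{equation*}

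Next I would control each piece on the event $\bigcap_{k=1}^K\{A_k \cap B_k \cap C_k\}$, whose complement has probability $O((\log n)^2/n)$ by Lemma~\ref{lem:bi-ci} and contributes only $O((\log n)^2)$ since revenues are uniformly $O(n)$. On this event, $|P_m(t) - p_m^*| \le \bar\Delta^{(k)}$ in each phase $k \le K-1$, and $|P_m(t) - p_m^*| \le \bar\Delta^{(K)} + \alpha$ in phase $K$ because $P_m(t) \in \{p_m^l, p_m^u\}$ is $\alpha$-buffered around the interval containing $p_m^*$. With the parameter choices in Section~\ref{sec:parameter}, one checks that $\tau^{(k)}(\bar\Delta^{(k)})^2 \sim n^{-1/2}(\log n)^{1+15\epsilon}$ for $k \le K-1$ and $T(\bar\Delta^{(K)} + \alpha)^2 = O(n^{-1/2}(\log n)^{2+18\epsilon})$; summing over $K = O(\log n)$ phases gives (i) $= O((\log n)^{2+18\epsilon} n^{1/2})$. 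For (ii), equation~\eqref{eq:rate-deviation} of Lemma~\ref{lem:inventory-XT} yields $O((\log n)^\epsilon n^{1/2})$; for (iii), equation~\eqref{eq:inventory-deviation} combined with $\E[(N_T - nc)^+] \le \E[|S(T)-nc|]$ yields $O((\log n)^\epsilon n^{1/2})$. Summing produces the claimed $O((\log n)^{2+18\epsilon} n^{1/2})$ bound, and dividing by $J_n^D = \Theta(n)$ gives the regret rate.

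The main obstacle is piece (ii): a naive first-order bound would propagate the $O(n^{-1/4})$ precision of $p_m^l, p_m^u$ into an $O(n^{3/4})$ error in aggregate capacity consumption, wrecking the bound. What rescues the argument is precisely Step~\ref{step:theta}: by using the empirical aggregate-demand estimates $D_l^{(K)}, D_u^{(K)}$ (whose $O(n^{1/2})$ precision comes from Poisson variance at rate $n$) to rebalance the fraction of time spent at $p_m^l$ versus $p_m^u$, the aggregate capacity error is driven down to $O(n^{1/2})$, which is exactly the content of Lemma~\ref{lem:inventory-XT} and is invoked here as a black box. One also needs the easy check that the branch at Step~\ref{step:insufficient_capacity} is actually entered when $z^* > 0$, which is guaranteed by event $B_K$.
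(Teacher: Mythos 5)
Your proposal is correct and follows essentially the same route as the paper: your Lagrangian identity $p_m^*d_m(p_m^*)-pd_m(p)=\bigl[(p_m^*-z^*)d_m(p_m^*)-(p-z^*)d_m(p)\bigr]+z^*\bigl[d_m(p_m^*)-d_m(p)\bigr]$ is exactly the paper's second-order Taylor expansion of $r_m(\lambda)-z^*\lambda$ around $\lambda=d_m(p_m^*)$ combined with the first-order condition $r_m^{*\prime}=z^*$, and the three resulting pieces are handled with the same ingredients (Lemma~\ref{lem:bi-ci} for the good event, Lemma~\ref{lem:inventory-XT} for the aggregate-sales terms, and the phase-wise parameter check for the quadratic term).
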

Different from Proposition~\ref{prop:regret-z<=0}, $p^{\ast}_m$ is not the unconstrained maximizer of $p_m^{\ast}d_m(p_m^{\ast})$ when $z^{\ast}>0$.
Therefore, if we follow \eqref{eq:regret_each_phase}, the difference is approximately
\begin{align}\label{eq:z>0_wrong_regret}
    &n\sum_{k=1}^K\E\left[\int_{t_k}^{t_{k+1}} \left(p_m^{\ast}d_m(p_m^{\ast})-P_m(t)d_m(P_m(t))\right)dt\right]\notag\\
    \sim & n\sum_{k=1}^K\E\left[\int_{t_k}^{t_{k+1}} \left|d_m(p_m^{\ast})-d_m(P_m(t))\right|dt\right]\sim n\sum_{k=1}^K\tau^{(k)}\bar\Delta^{(K)}\sim n^{3/4},
\end{align}
which clearly does not meet our target $n^{1/2}$.
The remedy to this situation is the following key observation.
Let $r^{\ast\prime}_m$ and $r^{\ast\prime\prime}_m$ be the first- and second-order derivative of $r_m(\lambda)=\lambda d^{-1}_m(\lambda)$ at $\lambda=d_m(p_m^{\ast})$.
By Taylor's expansion, the difference in revenue rate can be expressed as
\begin{align*}
    &p_m^{\ast}d_m(p_m^{\ast})-P_m(t)d_m(P_m(t)) \\
    \le& r_m^{\ast\prime}(d_m(p_m^{\ast})-d_m(P_m(t)))+|r^{\ast\prime\prime}_m|(d_m(p_m^{\ast})-d_m(P_m(t)))^2.
\end{align*}
Because $\lambda = d_m(p_m^{\ast})$ maximizes $r_m(\lambda)-\lambda z^{\ast}$ by the primal-dual formulation (Proposition~\ref{prop:dual}), the first-order condition implies $r^{\ast\prime}_1=r^{\ast\prime}_2=\ldots=r^{\ast\prime}_M=z^{\ast}$.
Therefore, an improved bound for \eqref{eq:z>0_wrong_regret} can be derived:
\begin{align*}
    &n\sum_{k=1}^K\E\left[\int_{t_k}^{t_{k+1}} \left(p_m^{\ast}d_m(p_m^{\ast})-P_m(t)d_m(P_m(t))\right)dt\right]\\
    \sim  &nz^{\ast}\E\left[\int_{0}^{T}\sum_{m=1}^M( d_m(p_m^{\ast})-d_m(P_m(t)))dt\right]+ n\E\left[\int_{0}^{T}\sum_{m=1}^M|r^{\ast\prime\prime}_m|( d_m(p_m^{\ast})-d_m(P_m(t)))^2dt\right]\\
    \sim  &nz^{\ast}\E\left[c-\int_{0}^{T}\sum_{m=1}^M d_m(P_m(t))dt\right]+ M_4n\E\left[\int_{0}^{T}\sum_{m=1}^M( d_m(p_m^{\ast})-d_m(P_m(t)))^2dt\right].
\end{align*}
The first term can be bounded by $(\log n)^{\epsilon}n^{1/2}$ by Lemma~\ref{lem:inventory-XT}; this is the reason why we need to bound the aggregate sales.
The second term is of the same order as $n\sum_{k=1}^K\tau^{(k)}(\bar\Delta^{(K)})^2$ with high probability,
which has been shown to meet the target in the remarks following Proposition~\ref{prop:regret-z<=0}.

Combining Proposition~\ref{prop:regret-z<=0} and Proposition~\ref{prop:regret-z>0} and recalling that $\epsilon$ can be an arbitrarily small constant, we obtain the main theorem.
\begin{theorem}\label{thm:main}
    Suppose Assumptions~\ref{asp:convexity} and \ref{asp:initial} hold. For any $\delta>0$, we can select $\epsilon$ such that the regret of the primal-dual learning algorithm satisfies
    \begin{equation*}
        R_n^{\pi}(T,c)\le C(\log n)^{2+\delta}n^{-1/2},
    \end{equation*}
    where the constant $C$ is independent of $n$.
\end{theorem}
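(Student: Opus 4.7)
The plan is to combine Propositions~\ref{prop:regret-z<=0} and \ref{prop:regret-z>0}, which together cover all possible values of the dual optimum $z^{\ast}$, and then convert the absolute revenue gap into the desired relative regret bound. The analytical heavy lifting is already packaged into those two propositions, so what remains is a case split and a bit of bookkeeping.

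First, I would split on the sign of $z^{\ast}$, which by the dual formulation in Section~\ref{sec:full_info} is always nonnegative. In the sufficient-capacity case $z^{\ast}=0$, Proposition~\ref{prop:regret-z<=0} yields
\[
J_n^D - \E[\tilde{J}_n^{\pi}] = O\bigl((\log n)^{2+16\epsilon} n^{1/2}\bigr),
\]
while in the insufficient-capacity case $z^{\ast}>0$, Proposition~\ref{prop:regret-z>0} yields
\[
J_n^D - \E[\tilde{J}_n^{\pi}] = O\bigl((\log n)^{2+18\epsilon} n^{1/2}\bigr).
\]
Taking the worse exponent $18\epsilon$, the absolute revenue gap is $O((\log n)^{2+18\epsilon} n^{1/2})$ regardless of which case we are in.

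Second, I would convert this absolute bound into a bound on $R_n^{\pi}(T,c)$. Since the modified system only adds costs relative to the original, $\tilde{J}_n^{\pi} \le J_n^{\pi} \le J_n^D$, so $J_n^D - J_n^{\pi} \le J_n^D - \E[\tilde{J}_n^{\pi}]$. Moreover, the fluid problem \eqref{eq:fluid} satisfies $J_n^D = n\, J_1^D$ where $J_1^D>0$ is a strictly positive constant determined by the primitives $T$, $c$, $M$, and the demand functions (strict positivity follows from Assumption~\ref{asp:convexity} together with $\underline p < \overline p$). Dividing through gives
\[
R_n^{\pi}(T,c) = \frac{J_n^D - J_n^{\pi}}{J_n^D} = O\bigl((\log n)^{2+18\epsilon} n^{-1/2}\bigr).
\]

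Finally, given any $\delta>0$, I would choose $\epsilon \le \delta/18$ (and small enough that the parameter assumptions in Section~\ref{sec:parameter} and the sample-size conditions from Lemma~\ref{lem:K} and Lemma~\ref{lem:total-length-exploration} continue to hold), which absorbs the extra polylogarithmic factor and yields $R_n^{\pi}(T,c) \le C (\log n)^{2+\delta} n^{-1/2}$. The constant $C$ depends on $\delta$ and the primitives $M_1,\ldots,M_4$, $T$, $c$, $M$, $\underline p$, $\overline p$, $\overline z$ but not on $n$. There is no substantive obstacle at this stage; the only point requiring a little care is checking that all constants hidden inside the $O(\cdot)$ notation in the two propositions are genuinely $n$-independent, which follows by inspecting their proofs, as every bound there is expressed in terms of Assumption~\ref{asp:convexity} constants and the fixed problem parameters.
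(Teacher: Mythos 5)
Your proposal is correct and follows essentially the same route as the paper: the paper's proof of Theorem~\ref{thm:main} is precisely the combination of Propositions~\ref{prop:regret-z<=0} and \ref{prop:regret-z>0} (which already state the relative regret bounds via the linear scaling $J_n^D = nJ_1^D$), together with the observation that $\epsilon$ may be chosen arbitrarily small to absorb the $16\epsilon$ and $18\epsilon$ exponents into $\delta$. Your additional remarks on the positivity of $J_1^D$ and the $n$-independence of the hidden constants are sound bookkeeping but not a departure from the paper's argument.
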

By \citet{besbes2009dynamic}, no learning policy can achieve regret that grows slower than $n^{-1/2}$ with $M=1$.
Therefore, ignoring the logarithmic terms, the primal-dual learning algorithm achieves near-optimal regret.

\section{Discussion}\label{sec:discuss}
In this section, we discuss several salient features of the primal-dual learning algorithm and clarify our findings in comparison to the literature.
\subsection{The Fundamental Limit of Online Learning}\label{sec:contribution-discussion}
This paper sets out on a quest to identify the fundamental limit of online learning (the optimal rate of regret) in the multi-product dynamic pricing problem over a network with a single resource.
The first benchmark is provided in \citet{slivkins2014contextual}: for a general Lipschitz-continuous objective function with a $d$-dimensional decision vector without resource constraints, the optimal rate of regret is $n^{-1/(2+d)}$.
Recently, \citet{chen2018nonparametric} show that when the objective function is locally concave, then the optimal rate is slightly better, and may be adjusted to $n^{-2/(3+d)}$.
The number of decisions $d$ is the number of products in network dynamic pricing, or $M$ in this paper.
Note that setting $d=1$ recovers the rate $n^{-1/2}$ in dynamic pricing (one product with one constraint) \citep{besbes2009dynamic,wang2014close}.
This is not good news: as the number of products $d$ increases, the rate deteriorates dramatically, which is referred to as the curse of dimensionality.
It is also consistent with the finding in \citet{besbes2012blind}.
The quest to understand whether online learning is complicated by dimensionality in  multi-product dynamic pricing problem is for continuous decision variables as is the case in dynamic pricing.  The rate of regret from papers assuming discrete decisions \citep{badanidiyuru2013bandits,agrawal2014bandits,ferreira2017online} does not apply to our setting.

Some recent papers have shed new light on this fundamental problem.
\citet{chen2019network} design a learning algorithm for network dynamic pricing that achieves rate $n^{-1/5}$
regardless of the number of products or constraints.
Although the rate doesn't seem to be optimal, it does not depend on the dimension of the problem (the number of products or the number of resources).
One may then question the fundamental difference between a general learning problem \citep{slivkins2014contextual} and network dynamic pricing what property makes the latter easier to learn?
\citet{li2019dimension} and references there in reveal that it may be due to the intrinsic concavity structure of network dynamic pricing.
In particular, \citet{li2019dimension} show that in the setting of \citet{slivkins2014contextual}, if the objective function is concave, then the regret is dimension-free ($n^{-1/2}$), because one can resort to gradient-based algorithms.
In network dynamic pricing, Assumption~\ref{asp:convexity} is commonly adopted to guarantee good behavior of the optimal policy.
Since the objective function is concave after one transforms the decision variable from price to quantity,
it somehow inherits the dimension-free nature.

This opens up a new research question: can the  $n^{-1/2}$ rate be obtained for learning in network dynamic pricing?
This paper takes a step forward to a positive answer of this question as we can confirm the rate $n^{-1/2}$ under two simplifications: The objective function in our setting can be expressed as the sum of the revenues collected from each of the $M$ types of consumers, and thus \emph{separable} in terms of the decision variables (prices); there is a single resource constraint.
The first simplification implies that, when the inventory constraint is not binding, then the optimal rate of regret should be the same as $M$ independent learning problems with one decision variable, i.e., $O(Mn^{-1/2})$.
It doesn't lead to dimension-free regret automatically when the constraint is binding, as the $M$-dimensional decision variables can be regarded as a vector on a $(M-1)$-dimensional manifold, after a proper transformation.
On the manifold, the objective function is not separable in terms of the transformed decision variables any more.
The second simplification can be relaxed as argued in Section~\ref{sec:multiple-resources}.
Therefore, our result confirms that $n^{-1/2}$ may be achievable.
Moreover, we provide the first dimension-free algorithm that is not gradient-based in the context.

%The second benchmark is \citet{chen2018self}, who establish the rate $n^{-1/2}$ when the objective function is infinitely differentiable with bounded derivatives.
%This may seem to have solved the problem with a stronger assumption.
%However, in nonparametric statistics, the fundamental limit of estimation (closely related to the rate of regret) is known to depend on the degree of smoothness, e.g., Sobolev classes and  H\"older classes \citep[Definition 1.2 and 1.4 of][]{tsybakov2009introduction}.
%Setting an infinite degree of smoothness does not clarify the mystery under general assumptions.
%In particular, in a generic learning problem in \citet{slivkins2014contextual}, the optimal rate of regret does not depend on $d$ as well if the function is infinitely differentiable with bounded derivatives.

%The result in this paper provides a starting point for the understanding of the fundamental question.
%We find that for one resource constraint, the optimal rate of regret is $\sqrt{T}$, regardless of the dimension of the decision variables.
%If this is true for multiple resource constraints as well, then learning in network revenue management is
%fundamentally easier than a general objective function \citep{slivkins2014contextual}.
%This news would be exciting for academics in the revenue management community as well as industry practitioners.

\subsection{Learning in the Primal and Dual Spaces}
The algorithm provides point and interval estimators for \emph{both} the primal and dual optimal solutions, $p_m^{\ast}$ and $z^{\ast}$, in each phase.
%To the best of our knowledge, this is the first algorithm in the literature that conducts learning in the dual space\footnote{\citet{badanidiyuru2013bandits} is an exception. However, the algorithm in that paper is not actively learning the dual \emph{optimal} solution.}.
It turns out that such learning is necessary, as the regret of a policy that only learns the primal space incurs much higher regret (see, e.g., Section 4 of \citealt{besbes2012blind}).
This is not surprising given the primal-dual formulation in Section~\ref{sec:full_info} since the revenues collected from different types of consumers are only coupled through the inventory constraint. Therefore, having an accurate estimator for the dual optimal solution $z^{\ast}$ helps to decouple the problem into $M$ independent learning problems of the form $\max_{p} d_m(p)(p-z^{\ast})$ for $m=1,\ldots,M$.
These independent learning problems are known to have regret $O(n^{-1/2})$ (the parametric version of such problems is solved in \citealt{keskin2014dynamic,denboer2014simul}).
However, $z^{\ast}$ is not given upfront and has to be learned.
The key design of the algorithm is to nest the learning processes in the primal and dual spaces to narrow down the primal/dual optimal solutions sequentially.

\subsection{Multiple Resource Constraints}\label{sec:multiple-resources}
The motivation of the study is personalized pricing, which can be recast as a multi-product dynamic pricing problem with a single constrained resource as mentioned in the introduction.
We believe our algorithm may be applied to a generic multi-product dynamic pricing problemwith multiple resource constraints, as long as the demand is separable, as mentioned in Section~\ref{sec:contribution-discussion}.
Next we briefly introduce the extension to multiple resources.

Suppose there are $L$ resources, with initial capacity $\bm c=(c_1,\dots,c_L)$, and product $m$ consumes $a_{ml}$ units of resource $l$.
The Lagrangian \eqref{eq:lagrangian} can be reformulated as
\begin{equation*}
    L(\bm p(t), \bm z) = \sum_{l=1}^L c_l z_l +\sum_{m=1}^M\int_{t=0}^T (p_m(t)-\sum_{l=1}^L a_{ml}z_l) d_m(p_m(t)) dt.
\end{equation*}
For a fixed $\bm z$  we define
\begin{align*}
    \mathcal R_m(\bm z)&\triangleq \max_{p}\left\{ d_m(p)(p-\sum_{l=1}^L a_{ml}z_l)\right\}\\
    \mathcal P_m(\bm z)& \triangleq \argmax_{p}\left\{ d_m(p)(p-\sum_{l=1}^L a_{ml}z_l)\right\}.
\end{align*}
and the dual function
\begin{equation*}
    g(\bm z) = \max_{\bm p(t)}\left\{L(\bm p(t), \bm z)\right\}.
\end{equation*}

As for the implementation of the algorithm, the exploration of the first $K-1$ phases is essentially the same (Step~\ref{step:for-loop-begin} to Step~\ref{step:finish-test-price}).
When finding the point estimator for $\bm z^{(k)\ast}$, Equation \eqref{eq:dual_algorithm} is vectorized, and the optimal vector $\bm z^{(k)\ast}$ is obtained by testing the grid in the confidence set of $\bm z$, which is a hyper-rectangle since each entry of $\bm z$ has a confidence interval.
Normally the grid size explodes exponentially in the size of $\bm z$, i.e., $L$, and this is precisely causing the curse of dimensionality.
However, in this algorithm \eqref{eq:dual_algorithm} is solved offline, i.e., the computation is not counted toward the final regret.
In the beginning of the last phase, the dual/primal variables may have been estimated with error $n^{-1/4}$, similar to \eqref{eq:one-four-convergence}, under properly chosen parameters.
For products with sufficient capacity (a product $m$ enjoys sufficient capacity if $0\in [\underline z^{(K)}_{l}, \bar z^{(K)}_{l}]$ for all $l$ such that $a_{ml}>0$), this already leads to the optimal regret.
For products with insufficient capacity, it is unclear how the linear interpolation \eqref{eq:theta} can be implemented in the high dimension.
We thus leave the algorithm and regret analysis in this case for future research.
%we select the resources whose corresponding $z_l$ is likely to be zero (according to the confidence intervals after phase $K-1$, as in Step~\ref{step:sufficient_capacity} and~\ref{step:insufficient_capacity}) and the algorithm similarly has two different treatments.
%Note that in this case, a product $m$ enjoys sufficient capacity if $0\in [\underline z^{(K)}_{l}, \bar z^{(K)}_{l}]$ for all $l$ such that $a_{ml}>0$.
%This guarantees that $\mathcal P_m(\bm z)$ equals to the unconstrained maximizer and quadratic convergence.
%Otherwise the product has insufficient capacity.

\subsection{Controlling the Aggregate Sales}
As explained in Section~\ref{sec:description} and the remark following Proposition~\ref{prop:regret-z>0},
when $z^{\ast}>0$, the interval estimators for $p_m^{\ast}$ at the beginning of phase $K$ are still too wide ($n^{-1/4}$).
Instead of exploring the price space further and attempting to narrow down the intervals in phase $K$,
the algorithm simply controls the aggregate sales within an error margin of $n^{1/2}$ (Step~\ref{step:insufficient_capacity} to \ref{step:p_mu_theta} in Algorithm~\ref{alg:primal-dual} and Lemma~\ref{lem:inventory-XT}).
Focusing on a single quantity (the aggregate sales) turns out much easier than controlling $M$ decision variables.
In fact, we suspect that no learning policy could estimate all $p_m^{\ast}$ with precision $n^{-1/2}$ at the end of the horizon\footnote{The case we study is different from \citet{wang2014close}, in which the market-clearing price $d^{-1}(c/T)$ can be learned with precision $n^{-1/2}$. In our case, there are $m$ types of consumers and there are still $m-1$ degrees of freedom when a vector of prices $(p_1,\ldots,p_m)$ are market clearing.}.

Surprisingly, controlling the aggregate sales is sufficient to meet the target regret, even though the estimators for the optimal prices are not precise enough.
The reason is explained by the remarks following Proposition~\ref{prop:regret-z>0}.
In particular, the derivatives of the revenue rates $\lambda d^{-1}_m(\lambda)$ with respect to the demand rate $\lambda$ are all equal to $z^{\ast}$ at optimality $\lambda=d_m(p^{\ast}_m)$ for $m=1,\ldots,M$.
This allows the firm to control the regret by the deviation of the aggregate sales.

%\subsection{Overcoming the Curse of Dimensionality}
%As shown by \citet{besbes2012blind,slivkins2014contextual}, the dimension of the decision vector ($M$ in this case)
%usually significantly complicates learning.
%\citet{slivkins2014contextual} shows that the regret for a generic learning problem (the objective is not necessarily concave) with continuum-armed bandits is at least $O(n^{-1/(2+d)})$ where $d$ is the dimension of the decision vector.
%In contrast, we are able to obtain a rate $n^{-1/2}$ whose exponent is independent of $M$, thus improving the regret $n^{-1/(M+3)}$ in \citet{besbes2012blind}, especially when $M$ is large.
%
%To explain this observation, note that in our problem the objectives $pd_m(p)$ for $m=1,\ldots,M$ are only coupled by the inventory constraint and otherwise independent.
%In other words, if the dual optimal solution $z^{\ast}$ is given upfront, then the learning problem is decomposed into $M$ independent ones, each with a single decision variable.
%Therefore, the effective dimension of the learning problem is one, equal to the dimension of the dual space.
%We believe this insight can be carried over to a general network revenue management problem \citep{besbes2012blind}, in which the dual space can be high-dimensional.
%In that case, the complexity of learning may be determined by the minimum of the dimensions of the primal and dual spaces.

\subsection{Data Reuse}
Across different phases, the interval estimators for the optimal price $p_m^{\ast}$ may overlap for some $m$.
In this case, the demand estimated at certain prices in the previous phases may be reused.
For example, if $[\underline p_m^{(k)},\overline p_m^{(k)}]$ and $[\underline p_m^{(k+1)},\overline p_m^{(k+1)}]$ overlap, then for some $j_1$ and $j_2$ the price grid points $p_{m,j_1}^{(k)}$ and $p_{m,j_2}^{(k+1)}$ may be close, and the demand estimate for $p_{m,j_1}^{(k)}$ can provide useful information for $p_{m,j_2}^{(k+1)}$.
In our algorithm, the data from previous phases are not reused mainly for the analysis, because data reuse introduces complex dependence between phases.
In practice, we believe that data reuse may facilitate the learning of the demand function and increase the efficiency of the policy.

\subsection{Discontinuous Demand}
In a recent paper, \cite{den2019discontinuous} study discontinuous demand functions, which is by far the most general assumption on smoothness,
although their demand function has a parametric form.
In our setting, Assumption~\ref{asp:convexity} rules out the possibility of discontinuity, and our algorithm is unlikely to work for discontinuous functions.
This is because discontinuous functions cannot be concave, and they break the primal-dual formulation and thus the foundation of the algorithm.
We also believe that in the nonparametric setting, no algorithm can achieve sublinear regret when the demand function can be discontinuous.
Consider the following example: there is one type of consumer ($M=1$); we normalize the time horizon ($T=1$) and there is sufficient capacity ($c=100$).
Let the price range be $p\in[0,1]$.
Consider the demand (revenue) function satisfying
\begin{equation*}
    pd(p) = \mu_k,\;p\in ((k-1)/K, k/K].
\end{equation*}
Because the capacity is not constrained, the problem is conceptually equivalent to the multi-armed bandit problem with $K$ arms.
(The difference of continuous/discrete time is non-consequential in this case.)
It is well-known that the minimax lower bound for the regret of such a learning problem is $O(\sqrt{Kn})$.
Since $K$ can be arbitrarily large, the regret cannot possibly be controlled.

%\subsection{Multiple Products}
%To accommodate horizontal differentiation, we can extend the network revenue management problem and include personalization.
%Since the problem now has three dimensions, the number of resources and products and the number of consumer types, it would be a fundamental and exciting direction to investigate how the regret depends on the three dimensions.

\bibliographystyle{chicago}
\bibliography{ref.bib}

\singlespacing
\newpage
\begin{appendices}
    \section{Table of Notations}
\begin{table}[ht!]
    \centering
    \begin{tabular}{cl}
        \toprule
        Notation & Meaning\\
        \midrule
        $c$, $T$ & The initial inventory, the length of the selling season\\
        $n$ & The scaling index\\
        $r_m(\lambda)$ & $\lambda d_m^{-1}(\lambda)$\\
        $\epsilon$ & A small positive constant\\
        $K$ & The total number of phases\\
        $\tau^{(k)}$ & The length of phase $k$\\
        $t_k$ & The time of the beginning of phase $k$, $t_{K+1}=T$\\
        $P_m(t)$ & The (stochastic) pricing policy for type-$m$ consumers at $t$\\
        $[\underline p,\overline p]$, $[0,\overline z]$ & The domain for the primal/dual variables\\
        $[\underline p_{m}^{(k)},\overline p_{m}^{(k)}]$, $[\underline{z}^{(k)},\overline z^{(k)}]$ & The intervals of primal/dual variables in phase $k$\\
        $\Delta_m^{(k)}$, $\Delta_z^{(k)}$& The length of $[\underline p_{m}^{(k)},\overline p_{m}^{(k)}]$ and $[\underline{z}^{(k)},\overline z^{(k)}]$ after truncation\\
        $\bar\Delta^{(k)}$, $\bar\Delta_z^{(k)}$& The length of $[\underline p_{m}^{(k)},\overline p_{m}^{(k)}]$ and $[\underline{z}^{(k)},\overline z^{(k)}]$ before truncation\\
        $N^{(k)}+1$, $N_z^{(k)}+1$ & The number of grid points in $[\underline p_{m}^{(k)},\overline p_{m}^{(k)}]$ and $[\underline{z}^{(k)},\overline z^{(k)}]$\\
        $\delta_m^{(k)}$, $\delta_z^{(k)}$ & The grid resolution, equal to $\Delta_m^{(k)}/N^{(k)}$ and $\Delta_z^{(k)}/N_z^{(k)}$\\
        $p_{m,j}^{(k)}$ & The $j$th price on the grid, $\underline p_{m}^{(k)}+j\delta_m^{(k)}$\\
        $D_{m,j}^{(k)}$ & The realized demand of type $m$ in phase $k$ for price $p_{m,j}^{(k)}$\\
        $\hat d^{(k)}_{m,j}$ & The empirical estimate for $d_m( p_{m,j}^{(k)})$\\
        $z_{j}^{(k)}$ & The $j$th dual variable on the grid, $\underline z^{(k)}+j\delta_z^{(k)}$\\
        $z^{(k)\ast}$ & The dual optimal solution on the primal/dual grids \\
                      & using the empirical estimates for $d_m(\cdot)$ \\
        $p_m^{(k)\ast}$ & The primal optimal solution on the price grids using the \\
                        & dual variable $z^{(k)\ast}$ and the empirical estimates for $d_m(\cdot)$\\
        $N_{m,t}( nd_m(P_m(t)))$ & The Poisson arrival process of type-$m$ consumers\\
        $A_{k}$ & The event $\cap_{m=1}^M\{ p_m^{\ast}\in [\underline p_{m}^{(k)},\overline p_{m}^{(k)}]\}$\\
        $B_k$ & The event $\{z^{\ast}\in [\underline{z}^{(k)},\overline z^{(k)}]\}$\\
        $C_k$ & The event $\cap_{m=1}^M\{\mathcal P_m(z)\in [\underline p_{m}^{(k)},\overline p_{m}^{(k)}]\; \forall z\in [\underline{z}^{(k)},\overline z^{(k)}]\}$\\
        \bottomrule
    \end{tabular}
    \caption{A summary of notations.}
    \label{tab:notation}
\end{table}
\section{Proofs}
We first provide two lemmas and their proofs that will be used repeatedly in the proof of the main result.
\begin{lemma}\label{lem:poisson}
    Suppose $r_n\ge \alpha n^{\beta}$ with $\beta>0$ and $\epsilon_n= \gamma(\log n)^{1/2+\eta}r_n^{-1/2}$ for $\eta>0$.
    For any $k>0$, we can find $C$ that is independent of $n$ (which may depend on $\alpha$, $\beta$, $\gamma$, $\eta$ and $k$) such that
    \begin{equation*}
        \PR(|N(r_n)-r_n|>r_n\epsilon_n)\le \frac{C}{n^{k}}
    \end{equation*}
    for all $n\ge 1$,
    where $N(\mu)$ is a Poisson random variable with mean $\mu$.
\end{lemma}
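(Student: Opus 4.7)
The plan is to apply a standard Chernoff-type tail bound for the Poisson distribution and then verify that the super-logarithmic exponent $(\log n)^{1+2\eta}$ dominates $k \log n$ for every fixed $k$. Concretely, for a Poisson random variable $N(\mu)$ and any $\delta\in(0,1)$, the moment-generating-function argument yields the two-sided bound
\begin{equation*}
\PR\bigl(|N(\mu)-\mu|>\delta\mu\bigr)\le 2\exp\bigl(-\mu\,h(\delta)\bigr),
\end{equation*}
where $h(\delta)=(1+\delta)\log(1+\delta)-\delta$ on the upper tail, with the analogous expression on the lower tail, and where $h(\delta)\ge \delta^2/3$ for all sufficiently small $\delta$. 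I plan to apply this with $\mu=r_n$ and $\delta=\epsilon_n$.

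The first step is to check that $\epsilon_n\to 0$, which is immediate from $r_n\ge \alpha n^\beta$ and $\beta>0$: indeed $\epsilon_n \le \gamma\alpha^{-1/2}(\log n)^{1/2+\eta} n^{-\beta/2}$. So for all $n$ larger than some threshold $n_1$ depending only on $\alpha,\beta,\gamma,\eta$, we have $\epsilon_n\le 1/2$ and therefore $h(\epsilon_n)\ge \epsilon_n^2/3$. Substituting gives
\begin{equation*}
r_n\,h(\epsilon_n)\ge \frac{r_n\epsilon_n^2}{3}=\frac{\gamma^2}{3}(\log n)^{1+2\eta},
\end{equation*}
since the factor of $r_n$ cancels exactly with $\epsilon_n^2$.

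The tail probability is thus bounded by $2\exp\!\bigl(-(\gamma^2/3)(\log n)^{1+2\eta}\bigr)$ for $n\ge n_1$. Since $\eta>0$, the ratio $(\log n)^{1+2\eta}/\log n=(\log n)^{2\eta}$ tends to infinity, so for any prescribed $k>0$ there exists $n_2\ge n_1$ with $(\gamma^2/3)(\log n)^{1+2\eta}\ge k\log n$ for $n\ge n_2$; this delivers $\PR(|N(r_n)-r_n|>r_n\epsilon_n)\le 2n^{-k}$ on that range. For the finitely many $n<n_2$, the probability is trivially bounded by $1$, and I absorb those values into a single constant by setting $C\defeq \max\{2,\ n_2^{k}\}$, which depends only on $\alpha,\beta,\gamma,\eta,k$ as required.

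There is no real conceptual obstacle here; the work is routine. The only point that needs a little care is verifying that the standard Chernoff inequality for Poisson variables gives a quadratic-in-$\delta$ exponent over the relevant range of $\delta$, which is ensured by choosing $n_1$ large enough that $\epsilon_n\le 1/2$, and then packaging the finite initial segment of $n$ into the constant $C$.
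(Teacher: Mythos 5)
Your proof is correct and follows essentially the same route as the paper's: both apply a Chernoff/Bernstein-type Poisson tail bound (the paper uses $\PR(|X-\lambda|\ge x)\le 2e^{-x^2/(2(\lambda+x))}$, you use the equivalent $h(\delta)\ge\delta^2/3$ form), observe that $\epsilon_n\to 0$ because $r_n$ grows polynomially, and conclude that the resulting exponent $\asymp(\log n)^{1+2\eta}$ dominates $k\log n$ for any fixed $k$. Your explicit absorption of the finitely many small $n$ into the constant $C$ is a welcome bit of bookkeeping that the paper leaves implicit.
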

\begin{proof}[Proof of Lemma~\ref{lem:poisson}:]
    The result follows from the following bound for Poisson tail probabilities:
    Let $X$ be a Poisson random variable with mean $\lambda$. Then for any $x>0$, we have
    \begin{equation*}
        \PR(|X-\lambda|\ge x)\le 2e^{- \frac{x^2}{2(\lambda+x)}}.
    \end{equation*}
    For a proof, see, e.g., \citet{canonne}.
    To prove Lemma~\ref{lem:poisson}, letting $\lambda=r_n$ and $x=r_n\epsilon_n$ and we have
    \begin{align*}
        2e^{- \frac{x^2}{2(\lambda+x)}} = 2 \exp\left(- \frac{ \gamma^2(\log n)^{1+2\eta}}{2(1+\epsilon_n)}\right).
    \end{align*}
    Because $r_n$ grows polynomially in $n$, we have $\lim_{n\to\infty}\epsilon_n= 0$.
    Therefore, there exists a constant $C_1$ that only depends on $\alpha$, $\beta$, $\gamma$ and $\eta$, such that for $n\ge C_1$
    \begin{align*}
        \PR(|N(r_n)-r_n|>r_n\epsilon_n)&\le 2\exp(- \frac{ \gamma^2(\log n)^{1+2\eta}}{4})
        \le 2n^{-\gamma^2(\log n)^{2\eta}/4}.
    \end{align*}
    It is clear that for any $k>0$, the decay of the above term is faster than $n^{-k}$.
    Therefore, we can always find such a constant $C$.
\end{proof}
\begin{lemma}\label{lem:poisson-expect}
    Suppose $X$ is a Poisson random variable with mean $\lambda$, then we have
    \begin{align*}
        \E\left[|X-\lambda|\right]\le \sqrt{\lambda}.
    \end{align*}
\end{lemma}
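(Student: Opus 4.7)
The plan is to obtain this bound as an immediate consequence of the $L^1$--$L^2$ inequality, since the $L^2$ norm of $X-\lambda$ has a closed form for the Poisson distribution. Concretely, I would apply Jensen's inequality to the convex function $\varphi(y)=y^2$, which gives
\begin{equation*}
    \bigl(\E[|X-\lambda|]\bigr)^2 \;\le\; \E\bigl[(X-\lambda)^2\bigr].
\end{equation*}
(Equivalently, this is Cauchy--Schwarz applied to $|X-\lambda|\cdot 1$.)

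Next, I would identify the right-hand side as the variance of $X$. Since $X$ is Poisson with mean $\lambda$, a classical computation (e.g., differentiating the probability generating function, or summing $\E[X(X-1)]=\lambda^2$ and $\E[X]=\lambda$) yields $\Var(X)=\E[(X-\lambda)^2]=\lambda$. Substituting and taking the positive square root of both sides gives $\E[|X-\lambda|]\le\sqrt{\lambda}$, as required.

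There is no real obstacle here: the argument is one line once one recalls that the Poisson variance equals its mean. The only mild point worth noting is that the inequality is not tight (the true mean absolute deviation of a Poisson is asymptotically $\sqrt{2\lambda/\pi}$), but a sharp constant is not needed for the subsequent applications in the paper, so the crude Jensen bound is sufficient.
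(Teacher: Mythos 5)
Your argument is correct and is exactly the paper's proof: the paper also bounds $\E[|X-\lambda|]$ by $\E[(X-\lambda)^2]^{1/2}$ via Cauchy--Schwarz and then uses $\Var(X)=\lambda$. Nothing further is needed.
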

\begin{proof}[Proof of Lemma~\ref{lem:poisson-expect}]
    Note that $\Var(X)=\lambda$. Applying the Cauchy-Schwartz inequality, we have
    \begin{align*}
        \E[|X-\lambda|]\le \E[(X-\lambda)^2]^{1/2}=\sqrt{\lambda}.
    \end{align*}
    This completes the proof.
\end{proof}

\begin{proof}[Proof of Proposition~\ref{prop:dual}:]
    Consider $\lambda_m (z)\triangleq \max_{\lambda}\left\{\lambda (d^{-1}_m(\lambda)-z)\right\}$.
    It is easy to see that $z_1>z_2\implies \lambda (d^{-1}_m(\lambda)-z_1)<\lambda (d^{-1}_m(\lambda)-z_2)\implies \lambda_m(z_1)<\lambda_m(z_2)$ so that $\lambda_m^{-1}(\cdot)$ is well-defined.
    Then $\mathcal P_m(z) = d_m^{-1}(\lambda_m(z))$.
    To show that $\mathcal P_m(z)$ is increasing in $z$ and $\mathcal P'_m(z)$ is bounded in part one,
    it suffices to show that $\lambda'_m(z)$ is bounded.
    The first-order condition for maximizing $\lambda (d^{-1}_m(\lambda)-z)$ yields
    \begin{align*}
        z= r'_m(\lambda)\;\implies \; (\lambda_m^{-1}(\lambda))' = r''_m(\lambda).
    \end{align*}
    Because $r_m(\cdot)$ is concave with a second-order derivative bounded above by $-M_3<0$ by Assumption~\ref{asp:convexity},
    $\lambda_m^{-1}$ is decreasing whose first-order derivative is bounded above by $[-M_4,-M_3]$.
    This implies that $\lambda_m(\cdot)$ is strictly decreasing in $z$ whose first-order derivative is bounded below by $-1/M_3$.
    Thus, we conclude that $\mathcal P_m(z)$ is strictly increasing and the derivative is bounded.

    Part two follows from the envelop theorem and the fact that $d_m(\cdot)$ is strictly decreasing by Assumption~\ref{asp:convexity} and the proof of part one.

    For part three, note that $g(z)=cz+T\sum_{m=1}^M \mathcal R_m(z)$.
    Since $\mathcal R_m(z)$ is twice differentiable and convex by part two, the claim follows.

    For part four, note that the objective function is concave and the constraint is linear in $d_m(p_m(t))$.
    Therefore, Slater's condition implies strong duality and complementary slackness.
    The proofs can also be found in \citet{ggbook}.
\end{proof}

\begin{proof}[Proof of Lemma~\ref{lem:K}:]
    From the expression of $\bar\Delta^{(k)}$ and $K$, we have
    \begin{align*}
        &(\bar\Delta^{(K)})^2=n^{-(1/2)(1-(3/5)^{K-1})}\le n^{-1/2}(\log n)^{2+16\epsilon}\\
        \implies&K\le \log_{5/3} \frac{\log n}{(4+32\epsilon)\log \log n}+2 \le  \log_{5/3} \frac{\log n}{10\log \log n}+\log_{5/3} (5/2)+2,
    \end{align*}
    because $\epsilon>0$.
    It has been shown in the proof of Lemma 2 in \citet{wang2014close} that
    \begin{equation*}
        \log_{5/3} \frac{\log n}{10\log \log n}+1\le 3\log n.
    \end{equation*}
    Therefore, $K\le 3\log n+3$
\end{proof}

\begin{proof}[Proof of Lemma~\ref{lem:total-length-exploration}:]
    By the choices of $K$ and $\bar\Delta^{(k)}$, we have $(\bar \Delta^{(K-1)})^2> n^{-1/2}(\log n)^{2+16\epsilon}$, and thus $n^{1/2\times(3/5)^{K-2}}\ge (\log n)^{2+16\epsilon}$.
    This implies that $\tau^{(K-1)}\le (\log n)^{-1-\epsilon}$.
    It is clear that $\tau^{(i)}$ is increasing in $i$.
    Thus by Lemma~\ref{lem:K}, $\sum_{i=1}^{K-1} \tau^{(i)}\le (K-1)(\log n)^{-1-\epsilon}\le 4(\log n)^{-\epsilon}\le T/2$ for $n\ge \exp((8/T)^{1/\epsilon})$.
\end{proof}

\begin{proof}[Proof of Lemma~\ref{lem:Bi}:]
    We will bound the probability of $B_{k+1}^c$, the complement of $B_{k+1}$.
    Note that the width of $[\underline{z}^{(k+1)},\overline z^{(k+1)}]$ is $\Delta_z^{(k+1)}$ and the middle point before truncation is $z^{(k)\ast}$.
    Therefore, when $B_{k+1}^c$ occurs, it implies that\footnote{If $[z^{(k)\ast}- \bar\Delta_z^{(k+1)}/2,z^{(k)\ast}+\bar\Delta^{(k+1)}_z/2]$ is not truncated by $[0,\bar z]$, then $\bar{\Delta}_z^{(k+1)}=\Delta_z^{(k+1)}$ and thus \eqref{eq:zstar-Delta} holds.
    Otherwise, because $z^{\ast}\in[0,\overline z]$, we still have \eqref{eq:zstar-Delta}. }
    \begin{equation}\label{eq:zstar-Delta}
        |z^{\ast}-z^{(k)\ast}|>\bar{\Delta}_z^{(k+1)}/2.
    \end{equation}
    In phase $k$, let $\bar{j}$ be the index of the grid point for $z^{(k)\ast}$.
    That is, $z^{(k)\ast} = \underline{z}^{(k)}+\bar j \delta_z^{(k)}$.
    Let $j^{\ast}$ be the index of the grid point in $[\underline z^{(k)}, \overline z^{(k)}]$ that is closest to $z^{\ast}$ among all grid points.
    Conditional on $B_k$, we have $z^{\ast}\in [\underline z^{(k)}, \overline z^{(k)}]$.
    Hence
    \begin{equation}\label{eq:zstar-delta}
        |\underline{z}^{(k)}+j^{\ast} \delta_z^{(k)}-z^{\ast}|=|z_{j^{\ast}}^{(k)}-z^{\ast}|\le \delta_z^{(k)}/2.
    \end{equation}
    where $z_{j}^{(k)}\triangleq \underline{z}^{(k)}+j \delta_z^{(k)}$.
%    Combining equations~\eqref{eq:zstar-Delta} and \eqref{eq:zstar-delta}, we have
%    \begin{equation}
%        |z_{\bar j}^{(i)} - z_{j^{\ast}}^{(i)}|\ge \max\left\{ \frac{\bar\Delta_z^{(i+1)}}{2}- \frac{\delta_z^{(i)}}{2}, 0\right\}
%    \end{equation}
%    Combining~\eqref{eq:zstar-Delta} and \eqref{eq:zstar-delta}, we have that $|j^{\ast}-\bar j| \delta_z^{(k)}>(\bar{\Delta}_z^{(k+1)}-\delta_z^{(k)})/2$.
    According to the way $z^{(k)\ast}$ is defined (see Step~\ref{step:dual} in the algorithm and \eqref{eq:dual_algorithm}),
    \begin{equation}\label{eq:jbar-smaller}
        cz_{\bar j}^{(k)} + T\sum_{m=1}^M\max_{j_m = 0,\ldots, N^{(k)}} \hat d_{m,j_m}^{(k)}(p_{m,j_m}^{(k)}-z_{\bar j}^{(k)})\le cz_{j^{\ast}}^{(k)} + T\sum_{m=1}^M\max_{j_m = 0,\ldots, N^{(k)}} \hat d_{m,j_m}^{(k)}(p_{m,j_m}^{(k)}-z_{j^{\ast}}^{(k)}).
    \end{equation}
    Furthermore, \eqref{eq:zstar-Delta}, \eqref{eq:zstar-delta} and \eqref{eq:jbar-smaller} imply that
    \begin{align*}
        B_{k+1}^c\implies &\left(cz_{j^{\ast}}^{(k)} + T\sum_{m=1}^M\max_{j_m = 0,\ldots, N^{(k)}} \hat d_{m,j_m}^{(k)}(p_{m,j_m}^{(k)}-z_{j^{\ast}}^{(k)})-g(z_{j^{\ast}}^{(k)})\right)\\
                                        &-\left(cz_{\bar j}^{(k)} + T\sum_{m=1}^M\max_{j_m = 0,\ldots, N^{(k)}} \hat d_{m,j_m}^{(k)}(p_{m,j_m}^{(k)}-z_{\bar j}^{(k)})-g(z_{\bar j}^{(k)})\right)\\
                                        &\ge g(z_{\bar j}^{(k)})-g(z_{j^{\ast}}^{(k)})= (g(z_{\bar j}^{(k)})-g(z^{\ast}))-(g(z_{j^{\ast}}^{(k)})-g(z^{\ast}))\\
                                        &\ge M_3 (z_{\bar j}^{(k)}-z^{\ast})^2- M_4(z_{j^{\ast}}^{(k)}-z^{\ast})^2\\
                                        &\ge \frac{M_3 }{4}(\bar\Delta_z^{(k+1)})^2-  \frac{M_4}{4}(\delta_z^{(k)})^2
    \end{align*}
    The last but one inequality is due to Remark~\ref{rmk:dual_convexity}; the last inequality is due to \eqref{eq:zstar-Delta} and \eqref{eq:zstar-delta}.
    If the difference of two terms is bounded below by $\frac{M_3 }{4}(\bar\Delta_z^{(k+1)})^2-  \frac{M_4}{4}(\delta_z^{(k)})^2$, then it implies that either the first term is greater than or equal to $\frac{M_3 }{8}(\bar\Delta_z^{(k+1)})^2-  \frac{M_4}{8}(\delta_z^{(k)})^2$, or the second term is less than or equal to $-\frac{M_3 }{8}(\bar\Delta_z^{(k+1)})^2+  \frac{M_4}{8}(\delta_z^{(k)})^2$.
    Therefore, noting that $B_k\cap C_k\in \mathcal F_{t_k}$ and taking the union on $\bar j$, we have
    \begin{align}\label{eq:union-zj}
        &\PR(B_{k+1}^c|B_k\cap C_k)= \E\left[\PR(B_{k+1}^c|\mathcal F_{t_k})| B_k\cap C_k\right]\notag\\
        &\le \E\bigg[\PR\bigg(\cup_{\bar j=0}^{N_z^{(k)}} \bigg\{cz_{j^{\ast}}^{(k)} + T\sum_{m=1}^M\max_{j_m = 0,\ldots, N^{(k)}} \hat d_{m,j_m}^{(k)}(p_{m,j_m}^{(k)}-z_{j^{\ast}}^{(k)})-g(z_{j^{\ast}}^{(k)})\notag\\
        &\quad\quad\ge \frac{M_3 }{8}(\bar\Delta_z^{(k+1)})^2-  \frac{M_4}{8}(\delta_z^{(k)})^2\bigg\}\bigg| \mathcal F_{t_k}\bigg)\bigg| B_k\cap C_k\bigg]\notag\\
        &\quad +\E\bigg[\PR\bigg(\cup_{\bar j=0}^{N_z^{(k)}} \bigg\{cz_{\bar j}^{(k)} + T\sum_{m=1}^M\max_{j_m = 0,\ldots, N^{(k)}} \hat d_{m,j_m}^{(k)}(p_{m,j_m}^{(k)}-z_{\bar j}^{(k)})-g(z_{\bar j}^{(k)})\notag\\
        &\quad\quad\le -\frac{M_3 }{8}(\bar\Delta_z^{(k+1)})^2+  \frac{M_4}{8}(\delta_z^{(k)})^2\bigg\}\bigg| \mathcal F_{t_k}\bigg)\bigg| B_k\cap C_k\bigg].
    \end{align}
    %because the events are independent of $\mathcal F_{t_{k}}$ and thus $B_k\cap C_k$.
    By the union bound, the term in the first expectation of \eqref{eq:union-zj} is bounded above by
    \begin{align}\label{eq:stoch-discrete}
        &\PR\bigg(\cup_{\bar j=0}^{N_z^{(k)}} \bigg\{cz_{j^{\ast}}^{(k)} + T\sum_{m=1}^M\max_{j_m = 0,\ldots, N^{(k)}} \hat d_{m,j_m}^{(k)}(p_{m,j_m}^{(k)}-z_{j^{\ast}}^{(k)})-g(z_{j^{\ast}}^{(k)})\notag\\
        &\quad\ge \frac{M_3 }{8}(\bar\Delta_z^{(k+1)})^2-  \frac{M_4}{8}(\delta_z^{(k)})^2\bigg\}\bigg| \mathcal F_{t_k}\bigg)\notag\\
        \le & (N^{(k)}_z+1) \PR\bigg( T\sum_{m=1}^M\max_{j_m = 0,\ldots, N^{(k)}} \hat d_{m,j_m}^{(k)}(p_{m,j_m}^{(k)}-z_{j^{\ast}}^{(k)})-T\sum_{m=1}^M\max_{p_m \in [\underline p,\overline p]} d_{m}(p_m)(p_{m}-z_{j^{\ast}}^{(k)})\notag\\
        &\quad  \ge \frac{M_3 }{8}(\bar\Delta_z^{(k+1)})^2-  \frac{M_4}{8}(\delta_z^{(k)})^2\bigg| \mathcal F_{t_k}\bigg)\notag\\
        \le & (N^{(k)}_z+1) \PR\bigg( T\sum_{m=1}^M\max_{j_m = 0,\ldots, N^{(k)}} \hat d_{m,j_m}^{(k)}(p_{m,j_m}^{(k)}-z_{j^{\ast}}^{(k)})-T\sum_{m=1}^M\max_{j_m = 0,\ldots, N^{(k)}} d_{m}(p_{m,j_m}^{(k)})(p_{m,j_m}^{(k)}-z_{j^{\ast}}^{(k)})\notag\\
        &\quad  \ge \frac{M_3 }{16}(\bar\Delta_z^{(k+1)})^2-  \frac{M_4}{16}(\delta_z^{(k)})^2\bigg| \mathcal F_{t_k}\bigg)\notag\\
        &+(N^{(k)}_z+1) \PR\bigg( T\sum_{m=1}^M\max_{j_m = 0,\ldots, N^{(k)}} d_{m}(p_{m,j_m}^{(k)})(p_{m,j_m}^{(k)}-z_{j^{\ast}}^{(k)})-T\sum_{m=1}^M\max_{p_m\in [\underline p,\overline p]} d_{m}(p_{m})(p_{m}-z_{j^{\ast}}^{(k)})\notag\\
        &\quad\ge \frac{M_3 }{16}(\bar\Delta_z^{(k+1)})^2-  \frac{M_4}{16}(\delta_z^{(k)})^2\bigg| \mathcal F_{t_k}\bigg),
    \end{align}
    where in the last inequality, we have used the fact that $\{a+b>c\}\subset \{a>c/2\}\cup \{b>c/2\}$ and the union bound again.
    For the two probabilities in \eqref{eq:stoch-discrete},
    the first one is attributed to the \emph{stochastic error}: conditional on $\mathcal F_{t_k}$, $\hat d_{m,j_m}^{(k)}$ is a Poisson random variable which may deviate from its mean $d_{m}(p^{(k)}_{m,j_m})$.
    It is bounded above by
    \begin{align}\label{eq:stoch-error}
        &(N^{(k)}_z+1)\PR\bigg(\cup_{\bm j\in \left\{0,\ldots,N^{(k)}\right\}^M} \bigg\{T\sum_{m=1}^M \hat d_{m,j_m}^{(k)}(p_{m,j_m}^{(k)}-z_{j^{\ast}}^{(k)})-T\sum_{m=1}^Md_{m}(p_{m,j_m}^{(k)})(p_{m,j_m}^{(k)}-z_{j^{\ast}}^{(k)})\notag\\
        &\quad \ge \frac{M_3 }{16}(\bar\Delta_z^{(k+1)})^2-  \frac{M_4}{16}(\delta_z^{(k)})^2\bigg\}\bigg| \mathcal F_{t_k}\bigg)\notag\\
    \le & (N^{(k)}_z+1)\sum_{\bm j\in \left\{0,\ldots,N^{(k)}\right\}^M} \PR\bigg( T\sum_{m=1}^M \hat d_{m,j_m}^{(k)}(p_{m,j_m}^{(k)}-z_{j^{\ast}}^{(k)})-T\sum_{m=1}^Md_{m}(p_{m,j_m}^{(k)})(p_{m,j_m}^{(k)}-z_{j^{\ast}}^{(k)})\notag\\
        &\quad \ge \frac{M_3 }{16}(\bar\Delta_z^{(k+1)})^2-  \frac{M_4}{16}(\delta_z^{(k)})^2\bigg| \mathcal F_{t_k}\bigg)\notag\\
    \le & (N^{(k)}_z+1)\sum_{\bm j\in \left\{0,\ldots,N^{(k)}\right\}^M}\sum_{m=1}^M \PR\bigg( T\hat d_{m,j_m}^{(k)}(p_{m,j_m}^{(k)}-z_{j^{\ast}}^{(k)})-Td_{m}(p_{m,j_m}^{(k)})(p_{m,j_m}^{(k)}-z_{j^{\ast}}^{(k)})\notag\\
        &\quad \ge \frac{M_3 }{16M}(\bar\Delta_z^{(k+1)})^2-  \frac{M_4}{16M}(\delta_z^{(k)})^2\bigg| \mathcal F_{t_k}\bigg)\notag\\
    \le & (N^{(k)}_z+1)\sum_{\bm j\in \left\{0,\ldots,N^{(k)}\right\}^M}\sum_{m=1}^M \PR\bigg( \bigg|\hat d_{m,j_m}^{(k)}-d_{m}(p_{m,j_m}^{(k)})\bigg|\ge \bigg|\frac{M_3 }{16MT(\overline p+\overline z)}(\bar\Delta_z^{(k+1)})^2\notag\\
        &\quad -  \frac{M_4}{16MT(\overline p+\overline z)}(\delta_z^{(k)})^2\bigg|\bigg| \mathcal F_{t_k}\bigg)
        \end{align}
        By the choice of parameters, for a sufficiently large $n$ (we require $(\log n)^\epsilon>\max\left\{M_4/M_3,2\right\}$), we have $M_4(\delta_z^{(k)})^2\le M_3 (\bar\Delta_z^{(k+1)})^2 (\log n)^{-\epsilon}\le M_3 (\bar\Delta_z^{(k+1)})^2/2$.
        Therefore, for a sufficiently large $n$ (below we require $(\log n)^{\epsilon/2}\ge \max\{32MT(\overline p+\overline z)/M_3, \sqrt{2}\}$ and $n\ge 3$  ) and the parameter values chosen in Section~\ref{sec:parameter}, we have
        \begin{align*}
            &\frac{M_3 }{16MT(\overline p+\overline z)}(\bar\Delta_z^{(k+1)})^2-  \frac{M_4}{16MT(\overline p+\overline z)}(\delta_z^{(k)})^2\ge \frac{M_3 }{32MT(\overline p+\overline z)}(\bar\Delta_z^{(k+1)})^2\\
            &\ge (\log n)^{-4.5\epsilon } n^{-(1/2)(1-(3/5)^{k})}\ge (\log n)^{1/2+\epsilon} \left(\frac{N^{(k)}+1}{n\tau^{(k)}} \right)^{1/2}.
        \end{align*}
        Also note that the Poisson random variable $D^{(k)}_{m,j} $ conditional on $\mathcal F_{t_k}$ satisfies
        \begin{equation*}
            \hat d^{(k)}_{m,j} \triangleq  \frac{N^{(k)}+1}{n\tau^{(k)}}D_{m,j}^{(k)}, \quad \E[D_{m,j}^{(k)}]=\frac{n\tau^{(k)}}{N^{(k)}+1} d_{m}(p_{m,j_m}^{(k)}).
        \end{equation*}
        Therefore, we apply Lemma~\ref{lem:poisson} by letting $r_n= \E[D^{(k)}_{m,j}]$ and $\epsilon_n = (\log n)^{1/2+\epsilon} r_n^{-1/2}(d_{m}(p_{m,j_m}^{(k)}))^{-1/2}$.
        Clearly, $r_n$ has polinomial growth and thus
        \begin{align}
            \PR\bigg( \bigg|\hat d_{m,j_m}^{(k)}-d_{m}(p_{m,j_m}^{(k)})\bigg|\ge \bigg|\frac{M_3 }{16MT(\overline p+\overline z)}(\bar\Delta_z^{(k+1)})^2-  \frac{M_4}{16MT(\overline p+\overline z)}(\delta_z^{(k)})^2\bigg|\bigg| \mathcal F_{t_k}\bigg)\notag\\
            \le \PR\bigg( D_{m,j}^{(k)}-\E[D_{m,j}^{(k)}|\mathcal F_{t_k}]\ge (\log n)^{1/2+\epsilon} \left(\frac{N^{(k)}+1}{n\tau^{(k)}} \right)^{-1/2}\bigg| \mathcal F_{t_k}\bigg)=O(n^{-(1+2M)}).
            \label{eq:apply-poisson-lemma}
        \end{align}
        (In fact, the constant in $O(\cdot)$ can be replaced by 1 if $(\log n)^{\epsilon}>4(1+2M)$ by the proof of Lemma~\ref{lem:poisson}.)
        Thus, \eqref{eq:stoch-error} and thus the first term of \eqref{eq:stoch-discrete} can be bounded
        \begin{equation*}
            \eqref{eq:stoch-error}\le M(N^{(k)}_z+1)(N^{(k)}+1)^M\times O\left( n^{-(1+ 2M)}\right) = O( n^{-1}).
        \end{equation*}
        The second equality is due to the fact that $N^{(k)}=O(n)$ and $N_z^{(k)}=O(n)$ for all $k$.

        The second probability in \eqref{eq:stoch-discrete} is attributed to the \emph{discretization error}, as we only sample discrete prices to find the optimal solution.
        The second probability is zero, because the optimal value is always greater than the discretized one given that $\frac{M_3 }{16}(\bar\Delta_z^{(k+1)})^2-  \frac{M_4}{16}(\delta_z^{(k)})^2>0$ for a sufficiently large $n$ (we require $(\log n)^{2\epsilon}>M_4/M_3$).

        For the second term in \eqref{eq:union-zj}, we can decompose it into the stochastic error and the discretization error similar to \eqref{eq:stoch-discrete}. The stochastic error can be bounded by $O(n^{-1})$ similarly.
        For the discretization error, we are going to bound
        \begin{align*}
            &\sum_{\bar j=0}^{N_z^{(k)}} \PR\bigg( T\sum_{m=1}^M\max_{j_m = 0,\ldots, N^{(k)}} d_{m}(p_{m,j_m}^{(k)})(p_{m,j_m}^{(k)}-z_{\bar j}^{(k)})-T\sum_{m=1}^M\max_{p_m\in [\underline p,\overline p]} d_{m}(p_{m})(p_{m}-z_{\bar j}^{(k)})\\
            &\quad \le -\frac{M_3 }{16}(\bar\Delta_z^{(k+1)})^2+  \frac{M_4}{16}(\delta_z^{(k)})^2\bigg| \mathcal F_{t_k}\bigg)
        \end{align*}
        conditional on $B_k\cap C_k$.
        Let $\bar j_m$ be the index for the grid point $p_{m,\bar j_m}^{(k)}$ that is closest to $\mathcal P_m(z_{\bar j}^{(k)})$.
        Conditional on $C_k$, the distance between $p_{m,\bar j_m}^{(k)}$ and $\mathcal P_m(z_{\bar j}^{(k)})$, which falls into $[\underline p_m^{(k)},\overline p_m^{(k)}]$, is at most $\delta_m^{(k)}/2$.
        By the strict concavity and Lipschitz continuity in Assumption~\ref{asp:convexity},
        \begin{align*}
            & \max_{j_m = 0,\ldots, N^{(k)}} d_{m}(p_{m,j_m}^{(k)})(p_{m,j_m}^{(k)}-z_{\bar j}^{(k)})-\max_{p_m\in [\underline p,\overline p]} d_{m}(p_{m})(p_{m}-z_{\bar j}^{(k)})\\
        \ge & - M_4 (d_m(p_{m,\bar j_m}^{(k)})- d_m(p^{\ast}_m(z_{\bar j}^{(k)})))^2\ge - M_4M_2^2 (p_{m,\bar j_m}^{(k)}- p^{\ast}_m(z_{\bar j}^{(k)}))^2\\
        \ge&-\frac{ M_4M_2^2}{4}(\delta_m^{(k)})^2
        \end{align*}
        By the choice of parameters, the grid size is sufficiently small, and thus
        \begin{align*}
            &T\sum_{m=1}^M\max_{j_m = 0,\ldots, N^{(k)}} d_{m}(p_{m,j_m}^{(k)})(p_{m,j_m}^{(k)}-z_{\bar j}^{(k)})-T\sum_{m=1}^M\max_{p_m\in [\underline p,\overline p]} d_{m}(p_{m})(p_{m}-z_{\bar j}^{(k)})\\
            \ge&-\frac{ TMM_4M_2^2}{4}(\delta_m^{(k)})^2\ge -n^{-(1/2)(1-(3/5)^{k})}(\log n)^{-5\epsilon}\ge-(\log n)^{-\epsilon/2} \frac{M_3 }{32}(\bar\Delta_z^{(k+1)})^2\\
             \ge &-(\log n)^{-\epsilon/2}\left(\frac{M_3 }{16}(\bar\Delta_z^{(k+1)})^2-  \frac{M_4}{16}(\delta_z^{(k)})^2\right)> -\frac{M_3 }{16}(\bar\Delta_z^{(k+1)})^2+  \frac{M_4}{16}(\delta_z^{(k)})^2.
        \end{align*}
        for a sufficiently large $n$ (we require $(\log n)^{\epsilon}> \max\{TMM_4M_2^2/4, (32/M_3)^2, (2M_4/M_3)^{1/2},1\}$ above).
        In other words, if $n$ is sufficiently large, the discretization error
        \begin{align*}
            &\sum_{\bar j=0}^{N_z^{(k)}} \PR\bigg( T\sum_{m=1}^M\max_{j_m = 0,\ldots, N^{(k)}} d_{m}(p_{m,j_m}^{(k)})(p_{m,j_m}^{(k)}-z_{\bar j}^{(k)})-T\sum_{m=1}^M\max_{p_m\in [\underline p,\overline p]} d_{m}(p_{m})(p_{m}-z_{\bar j}^{(k)})\\
            &\quad \le -\frac{M_3 }{16}(\bar\Delta_z^{(k+1)})^2+  \frac{M_4}{16}(\delta_z^{(k)})^2\bigg| \mathcal F_{t_k}\bigg)=0
        \end{align*}
        conditional on $B_k\cap C_k$.
        Therefore, we have proved the result.
\end{proof}

\begin{proof}[Proof of Lemma~\ref{lem:Ci}:]
    Denote the following event by $D_{k,m}$
    \begin{equation*}
        D_{k,m}\triangleq |p_m^{(k)\ast}- p_m^{\ast}|\le (\log n)^{-\epsilon}\bar\Delta^{(k+1)}
    \end{equation*}
    We first show that with probability $1-O(1/n)$, $\cap_{m=1}^M D_{k,m}$ occurs.
    Let $\bar j_m$ be the index of the grid point for $p_m^{(k)\ast}$.
    Let $j_m^{\ast}$ be the index of the grid point that is closest to $p_m^{\ast}$.
    When $D_{k,m}^c$ occurs, we have
    \begin{equation}\label{eq:p-Delta}
        |p^{\ast}_m-p_m^{(k)\ast}|=|p^{\ast}_m-p_{m,\bar j_m}^{(k)}|>(\log n)^{-\epsilon}\bar{\Delta}^{(k+1)}.
    \end{equation}
    Conditional on $A_k\subset B_k\cap C_k$, $p_m^{\ast}$ falls into $[\underline p_m^{(k)},\overline p_m^{(k)}]$, and thus it is at most $\delta_m^{(k)}/2$ away from the closest grid point:
    \begin{equation}\label{eq:p-delta}
        |p^{\ast}_m - p_{m,j^{\ast}_m}^{(k)}|\le \delta_m^{(k)}/2.
    \end{equation}
    Therefore, by the definition of $p^{(k)\ast}_m$, we have
    \begin{align}\label{eq:eventD}
        &\hat d_{m,\bar j_m}^{(k)}(p_{m,\bar j_m}^{(k)}-z^{(k)\ast})\ge  \hat d_{m, j^{\ast}_m}^{(k)}(p_{m,j^{\ast}_m}^{(k)}-z^{(k)\ast})\notag\\
        \implies & (\hat d_{m,\bar j_m}^{(k)}-d_{m}(p_{m,\bar j_m}^{(k)}))(p_{m,\bar j_m}^{(k)}-z^{(k)\ast})-(\hat d_{m, j^{\ast}_m}^{(k)}-d_m(p_{m,j^{\ast}_m}^{(k)}))(p_{m,j^{\ast}_m}^{(k)}-z^{(k)\ast})\notag\\
                 &\ge d_m(p_{m,j^{\ast}_m}^{(k)})(p_{m,j^{\ast}_m}^{(k)}-z^{(k)\ast})-d_{m}(p_{m,\bar j_m}^{(k)})(p_{m,\bar j_m}^{(k)}-z^{(k)\ast})\notag\\
                 & = (d_m(p_{m,j^{\ast}_m}^{(k)})-d_{m}(p_{m,\bar j_m}^{(k)}))(z^{\ast}-z^{(k)\ast})+d_m(p_{m,j^{\ast}_m}^{(k)})(p_{m,j^{\ast}_m}^{(k)}-z^{\ast})-d_{m}(p_{m,\bar j_m}^{(k)})(p_{m,\bar j_m}^{(k)}-z^{\ast})\notag\\
                 & = (d_m(p_{m,j^{\ast}_m}^{(k)})-d_{m}(p_{m,\bar j_m}^{(k)}))(z^{\ast}-z^{(k)\ast})+d_m(p_{m,j^{\ast}_m}^{(k)})(p_{m,j^{\ast}_m}^{(k)}-z^{\ast}) -d_m(p_{m}^{\ast})(p_{m}^{\ast}-z^{\ast})\notag\\
                 &\quad + d_m(p_{m}^{\ast})(p_{m}^{\ast}-z^{\ast})-d_{m}(p_{m,\bar j_m}^{(k)})(p_{m,\bar j_m}^{(k)}-z^{\ast}).
    \end{align}
    The left-hand side of \eqref{eq:eventD} is the stochastic error caused by the Poisson arrival.
    We are going to show that if $D_{k,m}^c$ occurs, then the right-hand side is large so \eqref{eq:eventD} occurs with small probability.

    Because of Assumption~\ref{asp:convexity} and \eqref{eq:p-delta}, the second and third terms of the right-hand side of \eqref{eq:eventD} can be bounded by
    \begin{align*}
        d_m(p_{m,j^{\ast}_m}^{(k)})(p_{m,j^{\ast}_m}^{(k)}-z^{\ast}) -d_m(p_{m}^{\ast})(p_{m}^{\ast}-z^{\ast})\ge -M_4(p_{m,j^{\ast}_m}^{(k)}-p_m^{\ast})^2\ge -\frac{M_4}{4} (\delta_m^{(k)})^2
    \end{align*}
    Moreover,
    \begin{align}\label{eq:p-discrete}
        &(d_m(p_{m,j^{\ast}_m}^{(k)})-d_{m}(p_{m,\bar j_m}^{(k)}))(z^{\ast}-z^{(k)\ast})+d_m(p_{m}^{\ast})(p_{m}^{\ast}-z^{\ast})-d_{m}(p_{m,\bar j_m}^{(k)})(p_{m,\bar j_m}^{(k)}-z^{\ast})\notag\\
        \ge & - M^{-1}_2 |p_{m,j^{\ast}_m}^{(k)}-p_{m,\bar j_m}^{(k)}|| z^{\ast}-z^{(k)\ast}|+M_3(d_m(p_{m,\bar j_m}^{(k)})-d_m(p_m^{\ast}))^2\notag\\
    \ge & - M^{-1}_2 |p_{m,j^{\ast}_m}^{(k)}-p_{m,\bar j_m}^{(k)}|| z^{\ast}-z^{(k)\ast}|+M_3M_2^{-2}(p_{m,\bar j_m}^{(k)}-p_m^{\ast})^2
    \end{align}
    In the first inequality, we use Lipschitz continuity (Assumption~\ref{asp:convexity}) for the first term; for the second term, note that $d_m(p^{\ast}_m)$ maximizes $\lambda(d^{-1}_m(\lambda)-z^{\ast})$ and its second-order derivative is bounded between $[-M_4,-M_3]$ by Assumption~\ref{asp:convexity}.
    The second inequality follows from the Lipschitz continuity (Assumption~\ref{asp:convexity}).
    Because $j_m^{\ast}$ is the closest grid point to $p_m^{\ast}$, we have
    \begin{align*}
        |p_{m,j^{\ast}_m}^{(k)}-p_{m,\bar j_m}^{(k)}|&\le |p_{m}^{\ast}-p_{m,\bar j_m}^{(k)}|+|p_{m,j^{\ast}_m}^{(k)}-p_{m}^{\ast}|\le 2|p_{m}^{\ast}-p_{m,\bar j_m}^{(k)}|.
    \end{align*}
%    because $|p_{m}^{\ast}-p_{m,\bar j_m}^{(k)}|\ge \log(n)^{-\epsilon}\bar\Delta^{(k+1)}>\delta_m^{(k)}/2$ for a sufficiently large $n$.
    Therefore, if $D_{k,m}^c$ occurs, then
    \begin{align*}
        \eqref{eq:p-discrete}\ge &|p_{m}^{\ast}-p_{m,\bar j_m}^{(k)}|(M_3M_2^{-2}|p_m^{\ast}-p_{m,\bar j_m}^{(k)}|- 2M_2^{-1}| z^{\ast}-z^{(k)\ast}|)\\
        \ge & \frac{\bar\Delta^{(k+1)}}{(\log n)^{2\epsilon}} \left(M_3M_2^{-2}\bar\Delta^{(k+1)}-2M_2^{-1} (\log n)^{\epsilon}\bar\Delta_z^{(k+1)}\right).
    \end{align*}
    In the inequality,
%    Moreover, for the first term of \eqref{eq:eventD}, by Lipschitz continuity in Assumption~\ref{asp:convexity} and \ref{asp:dual_convexity}, we have
    we have used the fact that $| z^{\ast}-z^{(k)\ast}|\le \bar\Delta_z^{(k+1)}$ conditional on $B_{k+1}$.
    Therefore, we have
    \begin{align}\label{eq:Dkm-two-prob}
        &\PR( D_{k,m}^c|B_k\cap C_k\cap B_{k+1})\notag\\
    \le & \PR\bigg(\cup_{\bar j_m=0}^{N^{(k)}}\bigg\{ (\hat d_{m,\bar j_m}^{(k)}-d_{m}(p_{m,\bar j_m}^{(k)}))(p_{m,\bar j_m}^{(k)}-z^{(k)\ast})-(\hat d_{m, j^{\ast}_m}^{(k)}-d_m(p_{m,j^{\ast}_m}^{(k)}))(p_{m,j^{\ast}_m}^{(k)}-z^{(k)\ast})\notag\\
        &\quad \ge \frac{\bar\Delta^{(k+1)}}{(\log n)^{2\epsilon}} \left(M_3M_2^{-2}\bar\Delta^{(k+1)}-2M_2^{-1} (\log n)^{\epsilon}\bar\Delta_z^{(k+1)}\right)-\frac{M_4}{4} (\delta_m^{(k)})^2\bigg\}\bigg| B_k\cap C_k\cap B_{k+1}\bigg)\notag\\
    \le & \sum_{j=0}^{N^{(k)}}\PR\bigg(\bigg|\hat d_{m,j}^{(k)}-d_{m}(p_{m,j}^{(k)})\bigg|\ge \bigg|\frac{\bar\Delta^{(k+1)}}{2(\overline p+\overline z)(\log n)^{2\epsilon}} \left(M_3M_2^{-2}\bar\Delta^{(k+1)}-2M_2^{-1} (\log n)^{\epsilon}\bar\Delta_z^{(k+1)}\right)\notag\\
        &\quad -\frac{M_4}{8(\overline p+\overline z)} (\delta_m^{(k)})^2\bigg|\bigg|B_k\cap C_k\cap B_{k+1}\bigg)+(N^{(k)}+1)\PR\bigg(\bigg|\hat d_{m, j^{\ast}_m}^{(k)}-d_m(p^{(k)}_{m,j^{\ast}_m})\bigg|\notag\\
        & \ge \bigg|\frac{\bar\Delta^{(k+1)}}{2(\log n)^{2\epsilon}(\overline p+\overline z)} \left(M_3M_2^{-2}\bar\Delta^{(k+1)}-2M_2^{-1} (\log n)^{\epsilon}\bar\Delta_z^{(k+1)}\right)\notag\\
        &\quad -\frac{M_4}{8(\overline p+\overline z)} (\delta_m^{(k)})^2\bigg|\bigg|B_k\cap C_k\cap B_{k+1}\bigg),
    \end{align}
    where the second inequality follows from $|p_{m,j}^{(k)}-z^{(k)\ast}|\le \overline p+\overline z$.
    By Lemma~\ref{lem:Bi}, $\PR(B_{k+1}|B_k\cap C_k)=1-O(n^{-1})$. Therefore, $\PR(\cdot | B_k\cap C_k\cap B_{k+1})-\PR(\cdot |B_k\cap C_k)=O(n^{-1})$.
    Therefore, by the fact that $B_{k}\cap C_k\in \mathcal F_{t_k}$, the first probability of \eqref{eq:Dkm-two-prob} is bounded above by
    \begin{align}\label{eq:Dkm-middle-step}
        &\sum_{j=0}^{N^{(k)}}\PR\bigg(\bigg|\hat d_{m,j}^{(k)}-d_{m}(p_{m,j}^{(k)})\bigg|\ge \bigg|\frac{\bar\Delta^{(k+1)}}{2(\overline p+\overline z)(\log n)^{2\epsilon}} \left(M_3M_2^{-2}\bar\Delta^{(k+1)}-2M_2^{-1} (\log n)^{\epsilon}\bar\Delta_z^{(k+1)}\right)\notag\\
        &\quad -\frac{M_4}{8(\overline p+\overline z)} (\delta_m^{(k)})^2\bigg|\bigg|B_k\cap C_k\cap B_{k+1}\bigg)\notag\\
        \le & \sum_{j=0}^{N^{(k)}}\E\bigg[\PR\bigg(\bigg|\hat d_{m,j}^{(k)}-d_{m}(p_{m,j}^{(k)})\bigg|\ge \bigg|\frac{\bar\Delta^{(k+1)}}{2(\overline p+\overline z)(\log n)^{2\epsilon}} \left(M_3M_2^{-2}\bar\Delta^{(k+1)}-2M_2^{-1} (\log n)^{\epsilon}\bar\Delta_z^{(k+1)}\right)\notag\\
            &\quad -\frac{M_4}{8(\overline p+\overline z)} (\delta_m^{(k)})^2\bigg|\bigg|\mathcal F_{t_k}\bigg)\bigg|B_k\cap C_k\bigg]+O(n^{-1}).
    \end{align}
    For $n\ge 3$, we have $\bar\Delta^{(k+1)}\ge (\log n)^{2\epsilon}\delta_m^{(k)}$ and thus
    \begin{align*}
        &\frac{\bar\Delta^{(k+1)}}{2(\overline p+\overline z)(\log n)^{2\epsilon}} \left(M_3M_2^{-2}\bar\Delta^{(k+1)}-2M_2^{-1} (\log n)^{\epsilon}\bar\Delta_z^{(k+1)}\right)-\frac{M_4}{8(\overline p+\overline z)} (\delta_m^{(k)})^2\\
        \ge& \frac{M_3M_2^{-2}(\bar\Delta^{(k+1)})^2}{4(\overline p+\overline z)(\log n)^{2\epsilon}}-\frac{M_4}{8(\overline p+\overline z)}(\delta_m^{(k)})^2\ge   \frac{M_3M_2^{-2}(\bar\Delta^{(k+1)})^2}{8(\overline p+\overline z)(\log n)^{2\epsilon}}\\
        \ge& (\log n)^{-3\epsilon} n^{-(1/2)(1-(3/5)^k)} \ge (\log n)^{1/2+\epsilon} \left(\frac{N^{(k)}+1}{n\tau^{(k)}} \right)^{1/2}.
    \end{align*}
    We require $(\log n)^{\epsilon}\ge 8(\bar p+\bar z)M_2^2/M_3 $ above.
    Note that conditional on $\mathcal F_{t_k}$, $D_{m,j}^{(k)}$ is a Poisson random variable and
    \begin{align*}
        \hat d^{(k)}_{m,j} \triangleq  \frac{N^{(k)}+1}{n\tau^{(k)}}D_{m,j}^{(k)}, \quad \E[D_{m,j}^{(k)}]=\frac{n\tau^{(k)}}{N^{(k)}+1} d_{m}(p_{m,j}^{(k)}).
    \end{align*}
    Therefore, by Lemma~\ref{lem:poisson}, similar to the setup in \eqref{eq:apply-poisson-lemma}, we have that \eqref{eq:Dkm-middle-step} is $O(n^{-1})$ and so is the second probability of \eqref{eq:Dkm-two-prob}. Hence $\PR( D_{k,m}^c|B_k\cap C_k\cap B_{k+1})=O(n^{-1})$ and
    $\PR(\cap_{m=1}^M D_{k,m}|B_k\cap C_k\cap B_{k+1})=1-O(n^{-1})$.

    Next we show that conditional on $\cap_{m=1}^M D_{k,m}$ and $ B_{k+1}$, $\PR(C_{k+1})=1-O(n^{-1})$.
    By Remark~\ref{rmk:dual_convexity}, for $z\in [\underline z^{(k+1)},\overline z^{(k+1)}]$,
    \begin{align*}
        |\mathcal P_m(z)-p_m^{\ast}| = |\mathcal P_m(z)-\mathcal P_m(z^{\ast})| \le M_2|z-z^{\ast}|\le M_2 \bar\Delta_z^{(k+1)},
    \end{align*}
    where we have used the fact that $z^{\ast}\in [\underline z^{(k+1)},\overline z^{(k+1)}]$ conditional on $B_{k+1}$.
    Conditional on $D_{k,m}$, we have
    \begin{align*}
        |\mathcal P_m(z)-p_m^{(k)\ast}|\le |\mathcal P_m(z)-p_m^{\ast}|+|p_m^{\ast}-p_m^{(k)\ast}|\le M_2 \bar\Delta_z^{(k+1)}+(\log n)^{-\epsilon}\bar\Delta^{(k+1)}\le \bar\Delta^{(k+1)}/2
    \end{align*}
    for a sufficiently large $n$ (we require $(\log n)^\epsilon >\max\{2\sqrt{M_4}, 4\}$).
    Therefore, we have shown that for all $z\in [\underline z^{(k+1)},\overline z^{(k+1)}]$, $\mathcal P_m(z)\in [\underline p_m^{(k+1)},\overline p_m^{(k+1)}]$ with probability $1-O(n^{-1})$ and this completes the proof.
\end{proof}

\begin{proof}[Proof of Lemma~\ref{lem:bi-ci}:]
    Note that because $A_k\supseteq B_k\cap C_k$, we have
    \begin{align*}
    \PR\left(\left\{\cap_{k=1}^K\left\{A_k\cap B_k\cap C_k\}\right\}\right\}^c\right)  = \PR\left(\cup_{k=1}^K\left\{ B^c_k\cup C^c_k\right\}\right)
                                                                                      \le \sum_{k=1}^K \PR\left(B^c_k\cup C_k^c\right)
    \end{align*}
    Note that by Assumption~\ref{asp:initial}, $\PR(B_1^c\cup C_1^c)=0$.
    For $k\ge 2$, we have
    \begin{align*}
        \PR\left(B^c_k\cup C_k^c\right)&\le \PR(B^c_k\cup C^c_k| B_{k-1}\cap C_{k-1})\PR(B_{k-1}\cap C_{k-1})+\PR(B^c_{k-1}\cup C^c_{k-1})\\
                                       &\le  \PR(B_k^c|B_{k-1}\cap C_{k-1})+\PR(B_k\cap C_{k-1}^c|B_{k-1}\cap C_{k-1})+\PR(B^c_{k-1}\cup C^c_{k-1})\\
                                       &\le \PR(B_k^c|B_{k-1}\cap C_{k-1})+\PR(C_{k-1}^c|B_k\cap B_{k-1}\cap C_{k-1})+\PR(B^c_{k-1}\cup C^c_{k-1})
%                                       &\le  \left(\PR(B^c_k| B_{k-1}\cap C_{k-1})+\PR(C^c_k| B_{k-1}\cap C_{k-1})\right)\left(1-\PR(B_{k-1}^c| B_{i-2}\cap C_{i-2})-\PR(C_{k-1}^c| B_{i-2}\cap C_{i-2})\right)
    \end{align*}
    which implies $\PR\left(B^c_k\cup C_k^c\right)-\PR(B^c_{k-1}\cup C^c_{k-1}) = O(1/n)$
    by Lemma~\ref{lem:Bi} and Lemma~\ref{lem:Ci}.
    Repeating the process until $k=2$, we have $\PR\left(B^c_k\cup C_k^c\right)= k\times O(1/n)$.
    Therefore, by the fact that $k\le K=O(\log n)$ from Lemma~\ref{lem:K}, we have
    \begin{align*}
        &\sum_{k=1}^K \frac{\PR\left(B^c_k\cup C_k^c\right)}{K^2}\le \frac{1}{K}\sum_{k=1}^K \frac{\PR\left(B^c_k\cup C_k^c\right)}{i}= O(1/n)\\
        \implies & \sum_{k=1}^K \PR\left(B^c_k\cup C_k^c\right)=O((\log n)^2/n).
    \end{align*}
    This completes the proof.
\end{proof}

\begin{proof}[Proof of Lemma~\ref{lem:inventory_tK1}:]
    If the event $\cap_{k=1}^{K}A_k$ occurs, then we have
    \begin{equation*}
        |P_m(t) -p_m^{\ast}| \le \bar\Delta^{(k)} = n^{-(1/4)(1-(3/5)^{k-1})},
    \end{equation*}
    for $t\in (t_k,t_{k+1}]$.
    Therefore, $\cap_{k=1}^{K}A_k$ implies that
    \begin{align*}
         \int_0^{t_K}\sum_{m=1}^Md_m(P_m(t))dt-\sum_{k=1}^{K-1}\tau^{(k)}\sum_{m=1}^Md_m(p_m^{\ast})\le &
         M_2\sum_{k=1}^{K-1}\tau^{(k)} \sum_{m=1}^M|P_m(t)-p_m^{\ast}|\\
         \le & M_2 M\sum_{k=1}^{K-1}\tau^{(k)}\bar\Delta^{(k)}.
    \end{align*}
    By the choice of parameters, $\bar\Delta^{(k)}\ge n^{-1/2}(\log n)^{2+16\epsilon}$ for $k\le K-1$, which implies that $n^{1/4\times (3/5)^{k-1}}\ge(\log n)^{1+8\epsilon}$ for $k\le K-1$.
    Moreover,
    $\tau^{(k)}\bar\Delta^{(k)}\le n^{-1/4(1+(3/5)^{k-1})}(\log n)^{1+15\epsilon}\le n^{-1/4}(\log n)^{7\epsilon}$.
    Therefore, by Lemma~\ref{lem:K},
    \begin{align*}
        M_2M\sum_{k=1}^{K-1}\tau^{(k)}\bar\Delta^{(k)}\le M_2MK n^{-1/4} (\log n)^{7\epsilon}< n^{-1/4}(\log n)^{1+8\epsilon}
    \end{align*}
    for a sufficiently large $n$ (we require $\log n>M_2M K$).
    Therefore,
    \begin{align*}
        &\PR\left(\left| \int_0^{t_K}\sum_{m=1}^Md_m(P_m(t))dt-\sum_{k=1}^{K-1}\tau^{(k)}\sum_{m=1}^Md_m(p_m^{\ast}) \right| >n^{-1/4}(\log n)^{1+8\epsilon} \right)\\
        \le& 1-\PR(\cap_{k=1}^{K}A_k)=O((\log n)^2/n)
    \end{align*}
    by Lemma~\ref{lem:bi-ci}.
    This completes the proof.
\end{proof}

\begin{proof}[Proof of Lemma~\ref{lem:inventory_tK2}:]
    Note that $S(t)$ can be expressed as
    \begin{align*}
        S(t) = \int_0^{t_K}\sum_{m=1}^MdN_{m,t}(nd_m(P_m(t))).
    \end{align*}
    Substituting it into Lemma~\ref{lem:inventory_tK2}, it suffices to show
    \begin{align}\label{eq:prob-inventory}
        \PR\left(\left| \int_0^{t_K}\sum_{m=1}^MdN_{m,t}(nd_m(P_m(t)))-n\sum_{k=1}^{K-1}\tau^{(k)}\sum_{m=1}^Md_m(p_m^{\ast}) \right| > 2n^{3/4}(\log n)^{1+8\epsilon} \right)
    \end{align}
    is $O((\log n)^2n^{-1/2})$.
    By the triangle inequality and the union bound, we have
    \begin{align}\label{eq:prob-inventory-two-terms}
        \eqref{eq:prob-inventory}\le &\PR\left(\left|\int_0^{t_K}\sum_{m=1}^MdN_{m,t}(nd_m(P_m(t)))-n \int_0^{t_K}\sum_{m=1}^Md_m(P_m(t))dt \right| > n^{3/4}(\log n)^{1+8\epsilon} \right)\notag\\
                                     &+\PR\left(\left| \int_0^{t_K}\sum_{m=1}^Md_m(P_m(t))dt-\sum_{k=1}^{K-1}\tau^{(k)}\sum_{m=1}^Md_m(p_m^{\ast}) \right| >n^{-1/4}(\log n)^{1+8\epsilon}  \right)
    \end{align}
    The second term is $O((\log n)^2/n)$ shown by Lemma~\ref{lem:inventory_tK1}.
    The first term can be bounded by
    \begin{align*}
        &\PR\left(\left|\int_0^{t_K}\sum_{m=1}^MdN_{m,t}(nd_m(P_m(t)))- n\int_0^{t_K}\sum_{m=1}^Md_m(P_m(t))dt \right| > n^{3/4}(\log n)^{1+8\epsilon}\right)\\
        \le & \sum_{k=1}^{K-1} \PR\left(\left|\int_{t_k}^{t^{k+1}}\sum_{m=1}^MdN_{m,t}(nd_m(P_m(t)))-n \int_{t_k}^{t_{k+1}}\sum_{m=1}^Md_m(P_m(t))dt \right| > \frac{n^{3/4}(\log n)^{1+8\epsilon}}{K}\right)\\
        \le &\sum_{k=1}^{K-1} \frac{1}{n^{3/2}(\log n)^{2+16\epsilon}} \E\left[\left(\int_{t_k}^{t_{k+1}}\sum_{m=1}^MdN_{m,t}(nd_m(P_m(t)))-n \int_{t_k}^{t_{k+1}}\sum_{m=1}^Md_m(P_m(t))dt \right)^2\right],
    \end{align*}
    where the last inequality follows from Chebyshev's inequality.
    Now note that conditional on $\mathcal F_{t_k}$, $P_m(t)$ for $t\in (t_k,t_{k+1}]$ is measurable and thus $\int_{t_k}^{t_{k+1}}dN_{m,t}(nd_m(P_m(t)))$ is a Poisson random variable with mean $n \int_{t_k}^{t_{k+1}}d_m(P_m(t))dt$.
    Therefore,
    \begin{align*}
        &\E\left[\left(\int_{t_k}^{t^{k+1}}\sum_{m=1}^MdN_{m,t}(nd_m(P_m(t)))-n \int_{t_k}^{t_{k+1}}\sum_{m=1}^Md_m(P_m(t))dt \right)^2\bigg| \mathcal F_{t_k}\right]\\
         =& n \int_{t_k}^{t_{k+1}}\sum_{m=1}^Md_m(P_m(t))dt\le nTM M_1 ,
    \end{align*}
    by Assumption~\ref{asp:convexity}.
    Thus, the first term of \eqref{eq:prob-inventory-two-terms} can be bounded by
    \begin{align*}
        \sum_{k=1}^{K-1} \frac{nTMM_1}{n^{3/2}(\log n)^{2+16\epsilon}}\le KTMM_1 n^{-1/2}(\log n)^{-2-16\epsilon}= O(n^{-1/2}(\log n)^{-2})
    \end{align*}
    when $(\log n)^{16\epsilon}>KTMM_1$.
    This completes the proof.
\end{proof}

\begin{proof}[Proof of Proposition~\ref{prop:regret-z<=0}:]
    The total revenue $\tilde{J}_n^{\pi}$ can be expressed as the sales revenue minus the outsourcing cost:
    \begin{align}\label{eq:z<0_all_regret}
        \E\left[\tilde{J}_{n}^{\pi}\right]&\ge  \E\left[ \int_0^T P_m(t)d N_{m,t}(nd_m(P_m(t)))\right] - \overline p\E\left[ \left(\sum_{m=1}^M\int_0^T d N_{m,t}(nd_m(P_m(t)))-nc\right)^+\right]\notag\\
                                          &=n \E\left[ \int_0^T P_m(t)d_m(P_m(t))dt\right] - \overline p\E\left[ \left(\sum_{m=1}^M\int_0^T d N_{m,t}(nd_m(P_m(t)))-nc\right)^+\right]\notag\\
                                          &= \sum_{k=1}^Kn\E\left[ \int_{t_k}^{t_{k+1}} P_m(t)d_m(P_m(t))dt\right] - \overline p\E\left[ \left(\sum_{m=1}^M\int_0^T d N_{m,t}(nd_m(P_m(t)))-nc\right)^+\right]\notag\\
                                          & \eqqcolon \sum_{k=1}^K\E[\tilde{J}_{n,k}^{\pi}] - \overline p\E\left[ \left(\sum_{m=1}^M\int_0^Td N_{m,t}(nd_m(P_m(t))) -nc\right)^+\right],
    \end{align}
    where we use $\tilde{J}_{n,k}^{\pi}$ to denote the revenue earned in phase $k$ (ignoring the inventory constraint).
    First note that
    \begin{align}\label{eq:cond_exp_similar}
        \E\left[\tilde{J}_{n,k}^{\pi}\big|A_k\right]-\E\left[\tilde{J}_{n,k}^{\pi}\right]\le& \PR(A_k^c)\E\left[\tilde{J}_{n,k}^{\pi}\big| A_k^c\right]\notag\\
        \le &O((\log n)^2 n^{-1}) \times \E\left[\E\left[\tilde{J}_{n,k}^{\pi}\big|\mathcal F_{t_k}\right]\big|A_k^c\right] = O((\log n)^2)
    \end{align}
    The second inequality follows from $\PR(A_k^c) = O((\log n)^2 n^{-1})$ by Lemma~\ref{lem:bi-ci},
    the tower property and the fact that $A_k^{c}\in \mathcal F_{t_k}$.
    The last bound is due to the fact that $\E\left[\tilde{J}_{n,k}^{\pi}\big|\mathcal F_{t_k}\right]$ is the mean of a Poisson random variable which is bounded above by $MM_1\overline p\tau^{(k)}n$.

%    and $\tilde{J}^{\pi}_{n,k}\le n\overline pc$ for all sample paths.\nc{not true, can use Poisson process to prove.}
    Since the difference is negligible compared to our target $n^{1/2}$, it suffices to study $\E\left[\tilde{J}_{n,k}^{\pi}\big|A_k\right]$.
    Conditional on $A_k$, we have $|P_m(t)-p_m^{\ast}|\le \bar\Delta^{(k)}=n^{-(1/4)(1-(3/5)^{k-1})}$ for $t\in(t_k,t_{k+1}]$ and $k\le K-1$.
    Note that the revenue generated in the fluid system using $p_m^{\ast}$ in phase $k$ is $n\tau^{(k)} \sum_{m=1}^Mp_m^{\ast}d_m(p_m^{\ast})$.
    Therefore, for $k\le K-1$,
    \begin{align}\label{eq:z<0_all_phases}
        n\tau^{(k)} \sum_{m=1}^Mp_m^{\ast}d_m(p_m^{\ast})-\E\left[\tilde{J}_{n,k}^{\pi}\big|A_k\right]&\le  n \E\left[ \int_{t_k}^{t_{k+1}} (p_m^{\ast}d_m(p_m^{\ast})-P_m(t)d_m(P_m(t)))dt|A_k\right]\notag\\
                                                                                                  &\le nM_4 \tau^{(k)} (\bar\Delta^{(k)})^2 =O(n^{1/2} (\log n)^{1+15\epsilon}).
    \end{align}
    The last inequality is because of the fact that $z^{\ast}\le 0$ and thus $p_m^{\ast}$ maximizes $p_m^{\ast}d_m(p_m^{\ast})$.
    The quadratic bound then follows from Assumption~\ref{asp:convexity}.
%    \nc{don't forget that case 1 is triggered with high probability, and $p_m^{\ast}$ can be more than $\bar\Delta^{(K)}$ away in the last phase.}

    For $k=K$, note that conditional on $B_K$, Step~\ref{step:sufficient_capacity} is triggered.
    Therefore, conditional on $A_K\cap B_K$, the charged price $ p_m^{(K)}$ in phase $K$ for type-$m$ consumers satisfies $|p_m^{(K)}-p_m^{\ast}|\le \bar\Delta^{(K)}+\alpha\le 2(\log n)^{1+9\epsilon}n^{-1/4}$.
    Therefore,
    \begin{align}\label{eq:z<0_last_phase}
        &n\tau^{(K)} \sum_{m=1}^Mp_m^{\ast}d_m(p_m^{\ast})-\E\left[\tilde{J}_{n,K}^{\pi}\right]\notag\\
        = &n\tau^{(K)} \sum_{m=1}^Mp_m^{\ast}d_m(p_m^{\ast})-\E\left[\tilde{J}_{n,K}^{\pi}\big| A_K\cap B_K\right]+(\E\left[\tilde{J}_{n,K}^{\pi}\big| A_K\cap B_K\right]-\E\left[\tilde{J}_{n,K}^{\pi}\right])\notag\\
    \le & n M_4\tau^{(K)}(\bar\Delta^{(K)}+\alpha)^2+O((\log n)^2)=O((\log n)^{2+18\epsilon}n^{1/2}).
    \end{align}
    The second inequality follows from a similar argument below \eqref{eq:cond_exp_similar}.
    Therefore, adding up \eqref{eq:z<0_all_phases} for all $K-1$ phases and \eqref{eq:z<0_last_phase}, we have obtained that
    \begin{align}\label{eq:JD_and_tildeJpi}
        J_n^D-\sum_{k=1}^{K}\E\left[\tilde{J}_{n,k}^{\pi}\right]= O(n^{1/2}(\log n)^{2+18\epsilon}).
    \end{align}

    From \eqref{eq:z<0_all_regret}, it remains to bound $\E\left[ \left(\sum_{m=1}^M\int_0^T d N_{m,t}(nd_m(P_m(t))) -nc\right)^+\right]$ by the same order.
    Consider the event
    \begin{equation*}
      E_K \triangleq A_K\cap \left\{\left| S(t_K)-n\sum_{k=1}^{K-1}\tau^{(k)}\sum_{m=1}^Md_m(p_m^{\ast}) \right| <2n^{3/4}(\log n)^{1+8\epsilon} \right\}.
    \end{equation*}
    By Lemma~\ref{lem:bi-ci} and Lemma~\ref{lem:inventory_tK2}, the probability of $E^c_K$ is $O(n^{-1/2})$.
    Therefore,
    \begin{align*}
        &\E\left[ \left(\sum_{m=1}^M\int_0^T d N_{m,t}(nd_m(P_m(t))) -nc\right)^+\right]\\
        \le& \E\left[ \left(\sum_{m=1}^M\int_0^T d N_{m,t}(nd_m(P_m(t))) -nc\right)^+\Inb{E_K}\right]+\E\left[ \left(\sum_{m=1}^M\int_0^T d N_{m,t}(nd_m(P_m(t))) +nc\right) \Inb{E_K^c}\right]\\
        \le& \E\left[ \left(\sum_{m=1}^M\int_0^T d N_{m,t}(nd_m(P_m(t))) -nc\right)^+\Inb{E_K}\right]\\
           &\quad+\left(\E\left[ \sum_{m=1}^M\int_0^T d N_{m,t}(nd_m(P_m(t)))\bigg|\PR(E_k^c) \right]+nc\right)\PR(E_k^c)\\
           \le& \E\left[ \left(\sum_{m=1}^M\int_0^T d N_{m,t}(nd_m(P_m(t))) -nc\right)^+\Inb{E_K}\right]+\E\left[X \Inb{E_k^c}\right]+nc\PR(E_k^c)
    \end{align*}
    Here $X$ is a Poisson random variable with mean $MM_1Tn$, which stochastically dominates $\sum_{m=1}^M\int_0^T d N_{m,t}(nd_m(P_m(t)))$.
    Centralizing $X$ and applying the Cauchy-Schwartz inequality, we have
    \begin{align*}
        \E\left[X \Inb{E_k^c}\right]\le \left(\Var(X)\PR(E_k^c)\right)^{1/2} + \E[X]\PR(E_k^c)= O(n^{1/2})
    \end{align*}
    Therefore, the terms above can be bounded by
    \begin{equation*}
        \E\left[ \left(\sum_{m=1}^M\int_0^T d N_{m,t}(nd_m(P_m(t))) -nc\right)^+\Inb{E_K}\right]+ C_1 n^{1/2}
    \end{equation*}
%           &\quad+\left(\E\left[ \sum_{m=1}^M\int_0^T d N_{m,t}(nd_m(P_m(t)))\bigg|\PR(E_k^c) \right]+nc\right)\PR(E_k^c)\\
%        \le& \E\left[ \left(\sum_{m=1}^M\int_0^T d N_{m,t}(nd_m(P_m(t))) -nc\right)^+\Inb{E_K}\right]+\left(\sqrt{(MM_1Tn)^2+MM_1Tn}+nc\right)\PR(E_K^c)\\
%        \le& ,
    for some constant $C_1$ that is independent of $n$ and $\epsilon$.
    When $E_K\subset A_K$ occurs, the prices set in phase $K$ satisfy (recall that $\alpha=(\log n)^{1+9\epsilon}n^{-1/4}$ by Section~\ref{sec:parameter})
    \begin{align*}
        \alpha\le  p_m^{(K)}-p_m^{\ast}\le \bar\Delta^{(K)}+\alpha\le 2(\log n)^{1+9\epsilon}n^{-1/4},
    \end{align*}
    according to the definition of $p_m^{(K)}$ and Assumption~\ref{asp:initial}.
    Moreover, by Assumption~\ref{asp:convexity}, $d_m(p_m^{\ast})-d_m(p_m^{(K)})\ge M_2^{-1}(p_m^{(K)}-p_m^{\ast})\ge M_2^{-1}\alpha=M_2^{-1}(\log n)^{1+9\epsilon}n^{-1/4}$.
    Therefore, combining with the fact that $c\ge T \sum_{m=1}^Md_m(p_m^{\ast})$, we have
    \begin{align}
        &\E\left[ \left(\sum_{m=1}^M\int_0^T d N_{m,t}(nd_m(P_m(t))) -nc\right)^+\Inb{E_K}\right]\notag\\
        \le &\E\left[ \left(\sum_{m=1}^M\int_0^T d N_{m,t}(nd_m(P_m(t))) -nT \sum_{m=1}^M d_m(p_m^{\ast})\right)^+\Inb{E_K}\right]\notag\\
        = &\E\left[ \left(S(t_K)-n\sum_{k=1}^{K-1}\tau^{(k)}\sum_{m=1}^M d_m(p_m^{\ast})+ \sum_{m=1}^M\int_{t_K}^T d N_{m,t}(nd_m(p_m^{(K)})) - n\tau^{(K)} \sum_{m=1}^M d_m(p_m^{\ast})\right)^+\Inb{E_K}\right]\notag\\
        \le& \E\left[ \left(2n^{3/4}(\log n)^{1+8\epsilon}+\sum_{m=1}^M\int_{t_K}^T d N_{m,t}(nd_m(p^{(K)}_m)) -n\tau^{(K)} \sum_{m=1}^Md_m(p_m^{\ast})\right)^+\Inb{E_K}\right]\notag\\
        \le& \E\bigg[ \bigg(2n^{3/4}(\log n)^{1+8\epsilon}- M_2^{-1}M(\log n)^{1+9\epsilon}n^{3/4}\tau^{(K)} +\sum_{m=1}^M\int_{t_K}^T d N_{m,t}(nd_m(p^{(K)}_m)) \notag\\
           &\quad -n\tau^{(K)} \sum_{m=1}^Md_m(p_m^{(K)})\bigg)^+\Inb{E_K}\bigg].
        \label{eq:bound-extra-inventory}
    \end{align}
    The first inequality is due to $c\ge T \sum_{m=1}^Md_m(p_m^{\ast})$.
    The second inequality is due to $E_K$.
    In the third inequality we replace $d_m(p_m^{\ast})$ by $d_m(p_m^{(K)})$ and use the bound derived before.
    Note that conditional on $\mathcal F_{t_K}$, $N_{m,t}(nd_m(p^{(K)}_m))$ is a Poisson process with rate $nd_m(p^{(K)}_m)$.
    Therefore, for a sufficiently large $n$ (we require $(\log n)^{\epsilon}>4M_2/TM$ below)
    \begin{align*}
        \eqref{eq:bound-extra-inventory}&\le  \E\bigg[ \bigg(\sum_{m=1}^M\int_{t_K}^T d N_{m,t}(nd_m(p^{(K)}_m))-n\tau^{(K)} \sum_{m=1}^Md_m(p_m^{(K)})\bigg)^+\bigg]\\
                                        &= \E\bigg[\E\bigg[ \bigg(\sum_{m=1}^M\int_{t_K}^T d N_{m,t}(nd_m(p^{(K)}_m))-n\tau^{(K)} \sum_{m=1}^Md_m(p_m^{(K)})\bigg)^+\bigg|\mathcal F_{t_K}\bigg]\bigg]\\
                                        &\le \E\bigg[ \left(n\tau^{(K)} \sum_{m=1}^Md_m(p_m^{(K)})\right)^{1/2}\bigg]=O(n^{1/2}).
    \end{align*}
    The first inequality is due to the fact that $2n^{3/4}(\log n)^{1+8\epsilon}- M_2^{-1}M(\log n)^{1+9\epsilon}n^{3/4}\tau^{(K)}\le 0$ for $(\log n)^{\epsilon}>4M_2/TM$ and by $\tau^{(K)}>T/2$ from Lemma~\ref{lem:total-length-exploration}.
    The second line is due to the tower property.
    The third line follows from Lemma~\ref{lem:poisson-expect}.
    Combining the above result with \eqref{eq:JD_and_tildeJpi} and \eqref{eq:z<0_all_regret}, we have proved Proposition~\ref{prop:regret-z<=0}.
\end{proof}

\begin{proof}[Proof of Lemma~\ref{lem:inventory-XT}:]
    Note that conditional on $B_K$, Step~\ref{step:insufficient_capacity} of the algorithm is triggered for a sufficiently large $n$ (we require $ z^{\ast}> n^{-1/4}(\log n)^{2+16\epsilon}> \bar\Delta^{(K)}_z$, which implies $0\notin [\underline z^{(K)},\overline z^{(k)}] $).
    Because $\PR(B_K)$ occurs with high probability (relative to the target regret), in the following we simply assume the algorithm enters Step~\ref{step:insufficient_capacity} in order to simplify the notation.
    We first bound the estimation error of $D^{(K)}_l$ and $D^{(K)}_u$.
    Conditional on $\mathcal F_{t_K}$,
    $p_m^l$ and $p_m^u$ are deterministic
    and $(\log n)^{-\epsilon}D^{(K)}_l$ is a Poisson random variable with mean $n(\log n)^{-\epsilon}\sum_{m=1}^M d_m(p_m^l)$.
    To apply Lemma~\ref{lem:poisson}, we set $r_n = n(\log n)^{-\epsilon}\sum_{m=1}^M d_m(p_m^l)$ and $\epsilon_n = (\log n)^{1+\epsilon/2} r_n^{-1/2} (\sum_{m=1}^M d_m(p_m^l))^{-1/2}$.
    Clearly, $r_n$ has polynomial growth in $n$. (In fact, the constant in $O(\cdot)$ below can be replaced by 1 if $\log n\ge 4$ and $n^{1/2}> (\log n)^{1+\epsilon}(\sum_{m=1}^M d_m(p_m^l))$.)
    As a result,
    \begin{equation}\label{eq:dD1}
        \PR\left(\bigg|(\log n)^{-\epsilon}D^{(K)}_l-n(\log n)^{-\epsilon}\sum_{m=1}^M d_m(p_m^l)\bigg|> \log(n)n^{1/2}\bigg| \mathcal F_{t_K}\right) = O(1/n).
    \end{equation}
    Similarly,
    \begin{equation} \label{eq:dD2}
        \PR\left(\bigg|(\log n)^{-\epsilon}D^{(K)}_u-n(\log n)^{-\epsilon}\sum_{m=1}^M d_m(p_m^u)\bigg|> \log(n)n^{1/2}\bigg| \mathcal F_{t_K}\right) = O(1/n).
    \end{equation}
    Introduce the event
    \begin{equation*}
        E_K = A_K\cap B_K\cap \left\{\left| S(t_K)-n\sum_{k=1}^{K-1}\tau^{(k)}\sum_{m=1}^Md_m(p_m^{\ast}) \right| <2n^{3/4}(\log n)^{1+8\epsilon} \right\}\in \mathcal F_{t_K}.
    \end{equation*}
    By Lemma~\ref{lem:bi-ci} and Lemma~\ref{lem:inventory_tK2}, $\PR(E_K^c)=O(n^{-1/2})$.
    Next we will show that conditional on $E_K$, $\theta\in (0,1)$ with high probability for a sufficiently large $n$.
    Introduce $\theta_0$, which solves
    \begin{equation*}
        n(T-t_K)\sum_{m=1}^M (\theta_0 d_m(p_m^l)+(1-\theta_0) d_m(p_m^u)) = nc - S(t_K).
    \end{equation*}
    $\theta_0$ is regarded as the ``expected'' version of $\theta$ compared to~\eqref{eq:theta}.
    Observe that on $E_k$ we have $\big|nc-S(t_K)-n(T-t_K)\sum_{m=1}^Md_m(p_m^{\ast})\big|<2n^{3/4}(\log n)^{1+8\epsilon}$.
    Moreover, on $A_K$, $p_m^{\ast}\in [\underline p^{(K)}_m,\overline p^{(K)}_m]$, and thus
    $\alpha\le p_m^{\ast}-p_m^{l}$ and $\alpha\le p_m^{u}-p_m^{\ast}$ where recall that $\alpha=(\log n)^{1+9\epsilon}n^{-1/4}$.
    Therefore, $\sum_{m=1}^M(d_m(p_m^{l})-d_m(p_m^{\ast}))\ge M_2^{-1}\sum_{m=1}^M(p_m^{\ast}-p_m^l)\ge M_2^{-1}M(\log n)^{1+9\epsilon}n^{-1/4}$ and $\sum_{m=1}^M(d_m(p_m^{\ast})-d_m(p_m^{u}))\ge M_2^{-1}M(\log n)^{1+9\epsilon}n^{-1/4}$.
    We have
    \begin{align*}
        n(T-t_K)\sum_{m=1}^M d_m(p_m^l) &> n(T-t_K)\sum_{m=1}^M d_m(p_m^{\ast})+3n^{3/4}(\log n)^{1+8\epsilon}\\
                                        &>  nc - S(t_K)+n^{3/4}\\
        n(T-t_K)\sum_{m=1}^M d_m(p_m^u) &< n(T-t_K)\sum_{m=1}^M d_m(p_m^{\ast})-3n^{3/4}(\log n)^{1+8\epsilon}\\
                                        &< nc - S(t_K)-n^{3/4}
    \end{align*}
    Therefore,
    \begin{align}\label{eq:theta0}
        \theta_0 &= \frac{nc - S(t_K)-n(T-t_K)\sum_{m=1}^M d_m(p_m^u)}{n(T-t_K)\sum_{m=1}^M (d_m(p_m^l)-d_m(p_m^u))}> \frac{n^{3/4}}{ n(T-t_k)MM_1} \ge  c_1 n^{-1/4}\\
        \theta_0 &= 1- \frac{n(T-t_K)\sum_{m=1}^M d_m(p_m^l)-(nc - S(t_K))}{n(T-t_K)\sum_{m=1}^M (d_m(p_m^l)-d_m(p_m^u))}<1- \frac{n^{3/4}}{ n(T-t_k)MM_1}\le 1-c_1n^{-1/4}\notag
    \end{align}
%    \nc{can show bounded away from 0}
    for some positive constant $c_1$ that is independent of $n$.
    Now by \eqref{eq:dD1} and \eqref{eq:dD2}, with probability $1-O(1/n)$ we have
    \begin{align*}
         \left\{\bigg|D^{(K)}_l-n\sum_{m=1}^M d_m(p_m^l)\bigg|< (\log n)^{1+\epsilon}n^{1/2}\right\}\cap \left\{\bigg|D^{(K)}_u-n\sum_{m=1}^M d_m(p_m^u)\bigg|< (\log n)^{1+\epsilon}n^{1/2}\right\}
    \end{align*}
    Since the order of the numerator and denominator of $\theta_0$ in \eqref{eq:theta0} is $n^{3/4}$,
    replacing $n\sum_{m=1}^Md_m(p_m^u)$ and $n\sum_{m=1}^Md_m(p_m^l)$ by $D^{(K)}_u$ and $D^{(K)}_l$, which are within an error bound of $(\log n)^{1+\epsilon}n^{1/2}$, does not significantly change the value of $\theta_0$.
    Thus, for a sufficiently large $n$ (we require $ n^{1/4}> (T-t_K)(\log n)^{1+\epsilon}$), we have
    \begin{align*}
        \theta = \frac{c - S(t_K)/n-(T-t_K)D^{(K)}_u}{(T-t_K)(D^{(K)}_l-D^{(k)}_u)}\in (0,1).
    \end{align*}
    with probability $\PR(E_K)=1-O(n^{-1/2})$.

    To prove~\eqref{eq:inventory-deviation}, note that conditional on $\mathcal F_{t_K}$, by Step~\ref{step:p_ml}, Step~\ref{step:p_mu}, Step~\ref{step:p_ml_theta} and Step~\ref{step:p_mu_theta} in the algorithm,  $S(T)-S({t_K})$ can be decomposed as
    \begin{align*}
        S(T)-S(t_K) = (\log n)^{-\epsilon} (D^{(K)}_l+D^{(K)}_u)+ N^{(K)}_l+ N^{(K)}_u,
    \end{align*}
    where $N^{(K)}_l$ and $N^{(K)}_u$ are Poisson random variables given $\theta$ with means
    \begin{equation*}
        n(\theta(T-t_K)-(\log n)^{-\epsilon})\sum_{m=1}^M d_m(p_m^l)\text{ and }n((1-\theta)(T-t_K)-(\log n)^{-\epsilon})\sum_{m=1}^M d_m(p_m^u)
    \end{equation*}
    respectively.\footnote{The variable $\theta$ is determined by $D^{(K)}_l$ and $D^{(K)}_u$, and thus measurable w.r.t. $\mathcal F_{t_K+2(\log n)^{-\epsilon}}$.}
    On the other hand, on the event $\theta\in (0,1)$, $nc-S(t_K)$ is equal to $(T-t_K)(\theta D^{(K)}_l+(1-\theta)D^{(K)}_u)$ by \eqref{eq:theta}.
    Therefore,
    \begin{align}
        \E[|S(T)-nc|] &\le \E\left[\left|(nc-S(t_K))- \left((\log n)^{-\epsilon} (D^{(K)}_l+D^{(K)}_u)+ N^{(K)}_l+ N^{(K)}_u\right)\right|\I{\theta\in(0,1)}\right]\notag\\
                      &\quad + \E[S(T)+nc]\PR(\theta\notin(0,1))\notag\\
                      &= \E\left[\left|(nc-S(t_K))- \left((\log n)^{-\epsilon} (D^{(K)}_l+D^{(K)}_u)+ N^{(K)}_l+ N^{(K)}_u\right)\right|\I{\theta\in(0,1)}\right]\notag\\
                      &\quad + O(\log(n) n\PR(\theta\notin(0,1)))\notag\\
                  & \le \E\left[\left| (\theta(T-t_K)-(\log n)^{-\epsilon})D^{(K)}_l- N^{(K)}_l\right|\right]\notag\\
                  &\quad +\E\left[\left|((1-\theta)(T-t_K)-(\log n)^{-\epsilon})D^{(K)}_u-N^{(K)}_u  \right|\right]+O(\log(n)n^{1/2})\label{eq:|ST|}
    \end{align}
    The equality follows from the argument below \eqref{eq:cond_exp_similar}: $S(T)$ can be expressed as the sum of $K+1$ Poisson random variables, conditional on $\mathcal F_{t_0}$, \dots, $\mathcal F_{t_K}$, $\mathcal F_{t_K+2(\log n)^{-\epsilon}}$ respectively.
    Therefore, applying the tower property we have $\E[S(T)]=O(\log(n)n)$ by the fact that $K+1=O(\log n)$.
    To bound the first term of \eqref{eq:|ST|}, we have
    \begin{align*}
        &\E\left[\left| (\theta(T-t_K)-(\log n)^{-\epsilon})D^{(K)}_l- N^{(K)}_l\right|\right]\\
        \le & \E\left[(\theta(T-t_K)-(\log n)^{-\epsilon})\left| D^{(K)}_l- n\sum_{m=1}^M d_m(p_m^l)\right|\right]\\
            &\quad+ \E\left[\left| N^{(K)}_l-n(\theta(T-t_K)-(\log n)^{-\epsilon})\sum_{m=1}^M d_m(p_m^l)\right| \right]\\
    \le & (T-t_K)\E\left[\E\left[\left| D^{(K)}_l- n\sum_{m=1}^M d_m(p_m^l)\right|\bigg| F_{t_K}\right]\right]\\
            &\quad+ \E\left[\E\left[\left| N^{(K)}_l-n(\theta(T-t_K)-(\log n)^{-\epsilon})\sum_{m=1}^M d_m(p_m^l)\right| \bigg| \theta\right]\right]\\
%            &\le (T-t_K) \E\left[\E\left[\left( D^{(K)}_l- n\sum_{m=1}^M d_m(p_m^l)\right)^2\bigg| F_{t_K}\right]^{1/2}\right]\\
%            &\quad+ \E\left[\E\left[\left( N^{(K)}_l-n(\theta(T-t_K)-(\log n)^{-\epsilon})\sum_{m=1}^M d_m(p_m^l)\right)^2 \bigg| \theta\right]^{1/2}\right]\\
    \le& (T-t_K) (\log n)^{\epsilon/2}\E\left[\left(  n\sum_{m=1}^M d_m(p_m^l)\right)^{1/2}\right]+\E\left[\left( n(\theta(T-t_K)-(\log n)^{-\epsilon})\sum_{m=1}^M d_m(p_m^l)\right)^{1/2}\right]\\
            = &O((\log n)^{\epsilon}n^{1/2}),
    \end{align*}
    In the last inequality, we have applied Lemma~\ref{lem:poisson-expect} to the centralized version of the Poisson random variables of $(\log n)^{-\epsilon} D_l^{(K)}$ and $N_l^{(K)}$.
    Similarly, we can bound the second term of \eqref{eq:|ST|}.
    This completes the proof of \eqref{eq:inventory-deviation}.

    To bound \eqref{eq:rate-deviation}, note that
    \begin{align}\label{eq:rate-two-terms}
        \E\left[\left|\int_{0}^T\sum_{m=1}^Md_m(P_m(t))dt-c\right|\right]&\le \E\left[\left|S(t_K)/n-\int_{0}^{t_K}\sum_{m=1}^Md_m(P_m(t))dt\right|\right]\notag\\
                                                                         &\quad + \E\left[\left|c-S(t_K)/n-\int_{t_K}^{T}\sum_{m=1}^Md_m(P_m(t))dt\right|\right]
    \end{align}
    The first term satisfies
    \begin{align*}
        &\E\left[\left|S(t_K)/n-\int_{0}^{t_K}\sum_{m=1}^Md_m(P_m(t))dt\right|\right] \\
        \le & \sum_{k=1}^{K-1} n^{-1}\E\left[\E\left[\left|(S(t_{k+1})-S(t_k))-n\int_{t_k}^{t_{k+1}}\sum_{m=1}^Md_m(P_m(t))dt\right|\bigg| F_{t_k}\right]\right]\\
        \le &\sum_{k=1}^{K-1}n^{-1/2}\E\left[ \int_{t_k}^{t_{k+1}}\sum_{m=1}^Md_m(P_m(t))dt\right]\\
        \le &\sum_{k=1}^{K-1} n^{-1/2} \sqrt{M_1\tau^{(k)}}.
    \end{align*}
    The second inequality applies Lemma~\ref{lem:poisson-expect} to the centered Poisson random variable $S(t_{k+1})-S(t_k)-n\int_{t_k}^{t_{k+1}}\sum_{m=1}^Md_m(P_m(t))dt$.
    The last inequality is due to the fact that $d_m(P_m(t))\le M_1$.
    Now, note that by the Cauchy-Schwartz inequality and Lemma~\ref{lem:K}
    \begin{align*}
        \sum_{k=1}^{K-1}\sqrt{\tau^{(k)}}\le (K\sum_{k=1}^{K-1}\tau^{(k)})^{1/2}=O((\log n)^{1/2}).
    \end{align*}
    Therefore, the first term of \eqref{eq:rate-two-terms} is $O((\log n)^{1/2}n^{-1/2})$.
    Next we bound the second term of \eqref{eq:rate-two-terms},
%    \begin{equation}\label{eq:prob-bound1}
%        \PR\left( \left|\int_{t_{K}}^T\sum_{m=1}^Md_m(P_m(t))dt-(c-S(t_K)/n)\right|\ge (\log n)^{2+\epsilon}n^{-1/2}\right)
%    \end{equation}
    Note that by the design of the algorithm, only $p_m^l$ and $p_m^u$ are used in phase $K$ for type-$m$ consumers and moreover
    \begin{align*}
        \int_{t_{K}}^T\sum_{m=1}^Md_m(P_m(t))dt =(T-t_K)\sum_{m=1}^M(\theta d_m(p_m^l) +(1-\theta)d_m(p^{u}_m)).
    \end{align*}
    In the proof of \eqref{eq:inventory-deviation}, we have shown that on the event $\theta\in(0,1)$, $nc-S(t_K)=(T-t_K)(\theta D^{(K)}_l+(1-\theta)D^{(K)}_u)$.
    Therefore, plugging in those quantities into the second term of \eqref{eq:rate-two-terms}, we have
    \begin{align*}
        &\E\left[\left|c-S(t_K)/n-\int_{t_K}^{T}\sum_{m=1}^Md_m(P_m(t))dt\right|\right]\\
        \le & \E\left[\left|c-S(t_K)/n-\int_{t_K}^{T}\sum_{m=1}^Md_m(P_m(t))dt\right|\I{\theta\in (0,1)}\right]+ (\E[S(t_K)/n]+O(1))\PR(\theta\notin(0,1))\\
        \le & \frac{1}{n} \E\left[\theta(T-t_K)\left|D_l^{(K)}-n\sum_{m=1}^Md_m(p_m^l)\right|\right]\\
            &\quad +\frac{1}{n} \E\left[(1-\theta)(T-t_K)\left|D_u^{(K)}-n\sum_{m=1}^Md_m(p_m^u)\right|\right]+O( n^{-1/2})\\
        \le & \frac{T}{n} \E\left[\left|D_l^{(K)}-n\sum_{m=1}^Md_m(p_m^l)\right|\right] +\frac{T}{n} \E\left[\left|D_u^{(K)}-n\sum_{m=1}^Md_m(p_m^u)\right|\right]+O( n^{-1/2})\\
        \le & O((\log n)^{\epsilon} n^{-1/2}).
    \end{align*}
    The last inequality is obtained by applying Lemma~\ref{lem:poisson-expect} to the centered version of the Poisson random variables $(\log n)^{-\epsilon}D_l^{(K)}$ and $(\log n)^{-\epsilon}D_u^{(K)}$.
%    \begin{align*}
%        \eqref{eq:prob-bound1}&\le \PR\left(\theta(T-t_K)\left|D_l^{(k)}-n\sum_{m=1}^Md_m(p_m^l)\right|> \frac{(\log n)^{\epsilon}n^{1/2}}{2}\right)\\
%                              &\quad +\PR\left((1-\theta)(T-t_K)\left|D_u^{(k)}-n\sum_{m=1}^Md_m(p_m^u)\right|> \frac{(\log n)^{2+\epsilon}n^{1/2}}{2}\right)
%        +\PR(\theta\notin (0,1))\\
%        &\le \PR\left((T-t_K)\left|D_l^{(k)}-n\sum_{m=1}^Md_m(p_m^l)\right|> \frac{(\log n)^{2+\epsilon}n^{1/2}}{2}\right)\\
%        &\quad +\PR\left((T-t_K)\left|D_u^{(k)}-n\sum_{m=1}^Md_m(p_m^u)\right|> \frac{(\log n)^{\epsilon}n^{1/2}}{2}\right)+\PR(\theta\notin (0,1))\\
%        &=O(n^{-1/2}).
%    \end{align*}
%    The last equality is because of \eqref{eq:dD1} and \eqref{eq:dD2}, as well as $\PR(\theta\notin (0,1))=O(n^{-1/2})$ proved before.
    This completes the proof.
\end{proof}

\begin{proof}[Proof of Proposition~\ref{prop:regret-z>0}:]
    Similar to \eqref{eq:z<0_all_regret} in the proof of Proposition~\ref{prop:regret-z<=0},
    the total revenue of $\tilde{J}_n^{\pi}$ can be expressed as
    \begin{align*}
        \E\left[\tilde{J}_{n}^{\pi}\right]= \sum_{k=1}^K\E[\tilde{J}_{n,k}^{\pi}] - \overline p\E\left[ \left(S(T)-nc\right)^+\right].
    \end{align*}
    By Lemma~\ref{lem:inventory-XT}, $\overline p\E[(S(T)-nc)^+]\le \overline p\E[|S(T)-nc|]=O((\log n)^{\epsilon}n^{1/2})$.
    The first term can be expressed as
    \begin{align*}
        \sum_{k=1}^K\E[\tilde{J}_{n,k}^{\pi}] &=\E\left[n\int_{0}^{T}\sum_{m=1}^MP_m(t)d_m(P_m(t))dt\right]\\
                                              &\ge \E\left[n\int_{0}^{T}\sum_{m=1}^MP_m(t)d_m(P_m(t))dt\I{\cap_{k=1}^KA_k\cap B_K}\right]\\
         &=  nT\sum_{m=1}^Mp_m^{\ast}d_m(p_m^{\ast})\PR(\cap_{k=1}^KA_k\cap B_K)\\
          &\quad -\E\left[n\int_{0}^{T}\sum_{m=1}^M(p_m^{\ast}d_m(p_m^{\ast})-P_m(t)d_m(P_m(t)))dt\I{\cap_{k=1}^KA_k\cap B_K}\right]\\
          & \ge   J_n^D(T,c) - O(n^{1/2})\\
          &\quad -\E\left[n\int_{0}^{T}\sum_{m=1}^M(p_m^{\ast}d_m(p_m^{\ast})-P_m(t)d_m(P_m(t)))dt\I{\cap_{k=1}^KA_k\cap B_K}\right]
    \end{align*}
    because $\PR(\cap_{k=1}^KA_k\cap B_K)=O(n^{-1/2})$ by Lemma~\ref{lem:bi-ci}.
    Define a stochastic process $\xi_m(t)\triangleq d_m(p_m^{\ast})-d_m(P_m(t))$, and define $r^{\ast\prime}_m$ and $r^{\ast\prime\prime}_m$ to be the first- and second-order derivative of $r_m(\lambda)=\lambda d^{-1}_m(\lambda)$ at $\lambda=d_m(p_m^{\ast})$.
    Note that in the case of $z^{\ast}>0$, $d_m(p_m^{\ast})$ is not the unconstrained maximizer of $r_m(\cdot)$.
    By Taylor's expansion, we have
    \begin{align}\label{eq:taylor}
        &\E\left[n\int_{0}^{T}\sum_{m=1}^M(p_m^{\ast}d_m(p_m^{\ast})-P_m(t)d_m(P_m(t)))dt\I{\cap_{k=1}^KA_k\cap B_K}\right]  \notag \\
        \le & \E\left[n\int_{0}^{T}\sum_{m=1}^M(r^{\ast\prime}_m\xi_m(t)+|r^{\ast\prime\prime}_m|\xi_m(t)^2)dt\I{\cap_{k=1}^KA_k\cap B_K}\right]\notag\\
        \le & \E\left[n\int_{0}^{T}\sum_{m=1}^Mr^{\ast\prime}_m\xi_m(t)dt\I{\cap_{k=1}^KA_k\cap B_K}\right]+M_4 \E\left[n\int_{0}^{T}\sum_{m=1}^M(\xi_m(t))^2dt\I{\cap_{k=1}^KA_k\cap B_K}\right].
    \end{align}
    Next we show that $r^{\ast\prime}_1=r^{\ast\prime}_2=\ldots=r^{\ast\prime}_M=z^{\ast}$. Indeed, $p^{\ast}_m$ maximizes $\mathcal R_m(z^{\ast})$ by Proposition~\ref{prop:dual}. Equivalently, $\lambda = d_m(p_m^{\ast})$ maximizes $r_m(\lambda)-\lambda z^{\ast}$. The first-order condition implies
    $r^{\ast\prime}_m = z^{\ast}$.
    Therefore, the first term in \eqref{eq:taylor} is bounded above by
    \begin{align*}
        &\E\left[n\int_{0}^{T}\sum_{m=1}^Mr^{\ast\prime}_m\xi_m(t)dt\I{\cap_{k=1}^KA_k\cap B_K}\right]\\
        \le & nz^{\ast} \E\left[ \int_{0}^{T}\sum_{m=1}^M \left(d_m(p_m^\ast)- d_m(P_m(t))\right)dt\I{\cap_{k=1}^KA_k\cap B_K}\right]\\
        \le & nz^{\ast} \E\left[\left|c- \int_{0}^{T}\sum_{m=1}^M d_m(P_m(t))dt\right|\right].
    \end{align*}
    By Lemma~\ref{lem:inventory-XT}, the above term is $O((\log n)^{\epsilon}n^{1/2})$.

    For the second term in \eqref{eq:taylor}, note that
    on the event $\cap_{k=1}^KA_K\cap B_K$, we have that for all $m$
    \begin{align*}
        &\left|P_m(t)-p_m^{\ast}\right|\le \bar\Delta^{(k)}\quad \forall t\in(t_k,t_{k+1}],\; k=1,\ldots,K-1\\
        \implies &|\xi_m(t)|\le M_2\bar\Delta^{(k)}\quad \forall t\in(t_k,t_{k+1}],\; k=1,\ldots,K-1
    \end{align*}
    and
    \begin{align*}
        &\left|P_m(t)-p_m^{\ast}\right|\le \bar\Delta^{(K)}+(\log n)^{1+9\epsilon} n^{-1/4}\le 2(\log n)^{1+9\epsilon} n^{-1/4}\quad \forall t\in(t_K,T]\\
        \implies &|\xi_m(t)|\le 2M_2(\log n)^{1+9\epsilon} n^{-1/4}\quad \forall t\in(t_K,T]
    \end{align*}
    Therefore, by the choice of $\tau^{(k)}$ and $\bar\Delta^{(k)}$,
    \begin{align*}
        &\E\left[n\int_{0}^{T}\sum_{m=1}^M(\xi_m(t))^2dt\I{\cap_{k=1}^KA_k\cap B_K}\right]\le n\sum_{k=1}^{K-1}\tau^{(k)}(\bar\Delta^{(k)})^2+ 4M_2^2\tau^{(K)}(\log n)^{2+18\epsilon}n^{1/2}\\
        \le &(K-1)(\log n)^{1+15\epsilon}n^{1/2}+4M_2^2\tau^{(K)}(\log n)^{2+18\epsilon}n^{1/2}\\
        =& O((\log n)^{2+18\epsilon}n^{1/2}).
    \end{align*}
    This completes the proof.
\end{proof}
\end{appendices}

\end{document}